\documentclass[11pt]{article}

\usepackage{microtype}
\usepackage{graphicx}
\usepackage{subfigure}
\usepackage{booktabs} 
\usepackage[shortlabels]{enumitem}
\usepackage{fullpage} 
\usepackage{hyperref}

\usepackage{amsmath}
\usepackage{amssymb}
\usepackage{mathtools}
\usepackage{amsthm}
\usepackage{xspace}
\usepackage[capitalize,noabbrev]{cleveref}

\usepackage[utf8]{inputenc}
\usepackage{afterpage}

\usepackage{epstopdf}
\usepackage{underlin}
\usepackage{fancyhdr}

\usepackage{pdfpages}
\usepackage{pifont}

\usepackage{color}
\usepackage{pgfplots}
\usepackage{cite} 
\usepackage{fullpage}

\theoremstyle{plain}
\newtheorem{theorem}{Theorem}[section]
\newtheorem{example}[theorem]{Example}
\newtheorem{proposition}[theorem]{Proposition}
\newtheorem{lemma}[theorem]{Lemma}
\newtheorem{corollary}[theorem]{Corollary}
\theoremstyle{definition}
\newtheorem{definition}[theorem]{Definition}

\theoremstyle{remark}
\newtheorem{remark}[theorem]{Remark}

\title{What Functions Can Graph Neural Networks Generate?}

\newcommand{\R}{\mathbb{R}}
\newcommand{\C}{\mathbb{C}}
\newcommand{\F}{\mathcal{F}}
\newcommand{\G}{\mathcal{G}}
\newcommand{\GPC}{\text{permutation-compatible\xspace}}

\pagestyle{plain} 

\textheight  9in
\textwidth   6.66in
\topskip      0.00in
\footskip    0.50in
\parskip      2.25pt
\parindent  0.15in
\itemsep     0.00in

\author{
{ \normalfont Mohammad Fereydounian}\thanks{Department of Electrical and Systems Engineering, University of Pennsylvania, Philadelphia, PA, USA. Emails: \ttfamily\bfseries\{mferey, hassani\}@seas.upenn.edu.} 
\and {Hamed Hassani}\footnotemark[1] 
\and {Amin Karbasi}\thanks{Department of Electrical Engineering, Computer Science, Statsitics \& Data Science, Yale University, New Haven, CT, USA. Email: \ttfamily\bfseries amin.karbasi@yale.edu.} 
}

\date{}
\begin{document}

\maketitle

\begin{abstract}
In this paper, we fully answer the above question through a key algebraic condition on graph functions, called \textit{permutation compatibility}, that relates permutations of weights and features of the graph to functional constraints. We prove that: (i) a GNN, as  a graph function, is necessarily permutation compatible; (ii) conversely, any permutation compatible function, when restricted on input graphs with distinct node features, can be generated by a GNN; (iii) for arbitrary node features (not necessarily distinct), a simple feature augmentation scheme suffices to generate a permutation compatible function by a GNN; (iv) permutation compatibility can be verified by checking only quadratically many functional constraints, rather than an exhaustive search over all the permutations; (v) GNNs can generate \textit{any} graph function once we augment the node features with node identities, thus going beyond graph isomorphism and permutation compatibility. The above characterizations pave the path to formally study the intricate connection between GNNs and other algorithmic procedures on graphs. For instance, our characterization implies that many natural graph problems, such as min-cut value,  max-flow value, max-clique size, and shortest path can be generated by a GNN using a simple feature augmentation. In contrast, the celebrated Weisfeiler-Lehman graph-isomorphism test fails whenever a permutation compatible function with identical features cannot be generated by a GNN.  At the heart of our analysis lies a novel representation theorem that identifies  basis functions for GNNs. This enables us to translate the properties of the target graph function into properties of the GNN's aggregation function. 
\end{abstract}

\section{Introduction}\label{sec: intro}
Processing data with graph structures has become an essential tool in application domains such as as computer vision \cite{yang2019auto}, natural language processing \cite{yu2020reinforce}, recommendation systems \cite{wang2019kgat}, and drug discovery \cite{ma2018constrained}, to name a few. Graph Neural Networks (GNN) are a class of iterative-based models that can process information represented in the form of graphs. Through a message passing mechanism, GNNs aggregate information from neighboring nodes in the graph in order to update node features \cite{gilmer2017neural}. Such node features can be ultimately used for down-stream tasks such as classification, link prediction, clustering,~etc.  

Even though many variations and architectures of GNNs have been proposed in recent years to increase the representation capacity of GNNs \cite{kipf2017semi,hamilton2017inductive,velinckovic2018graph,xu2018representation,santoro2018measuring,gama2018convolutional, verma2018graph,ying2018hierarchical,zhang2018end,ruiz2020graphon,ruiz2019invariance,scarselli2008graph,battaglia2016interaction,velickovic2020Neural,duvenaud2015convolutional,kearnes2016molecular,li2015gated,defferrard2016convolutional,xu2018how}, 
it is still not clear what class of functions GNNs can generate exactly.    There has been a large body of work that aims to understand the expressive power of GNNs through their ability to distinguish non-isomorphic graphs and the Weisfeiler–Lehman  graph isomorphism test \cite{weisfeiler1968reduction, scarselli2009comp, xu2018how, morris2019weisfeiler}. However, the aforementioned results do not provide much indication to practitioners  whether a specific graph function (e.g., shortest paths, min-cut, etc) can be computed by a GNN.  In this paper, we aim to provide an exact  characterization of how a given graph problem can be solved by GNNs. Our results are analogous to those of approximation capabilities of the feedforward neural networks on the space of continuous functions \cite{bartlett2009NN}. 

More specifically, we consider graphs that consist of nodes equipped with feature vectors, along with weights  assigned to \textit{all} pairs of nodes (i.e., edges). We should note  that almost all graph problems can be stated over fully  connected but weighted graphs. For example, for computing the shortest path on a given graph (which may not be fully connected), we can assign a very large value to non-existing edges.  A graph function takes as input a graph in the form of weight and feature matrices and assigns a vector to each node. Similarly, a GNN is an evolving graph function that updates node features iteratively through an aggregation operation. Naturally, for GNNs to be able to solve graph problems defined over weighted graphs, their message-passing iterates need to incorporate edge weights.  Finally, in our setting, we do not generally consider  pooling/readout operations, since such operations can considerably reduce the class of functions generated by a GNN. However, as we  will discuss shortly in related work, our results have important implications on GNNs with readouts.

\textbf{Our Contributions} are summarized as follows: 
\begin{enumerate}
	
	\item We provide an algebraic condition, so called \textit{permutation-compatibility} that
	relates permutations of weights and features of the graph to functional constraints. This condition will be used as a key notion in  characterizing the representation power of GNNs. Indeed, we show that a GNN, as  a graph function, is necessarily permutation compatible. 
	\item Conversely, any permutation-compatible function, when restricted on input graphs with distinct node features, can be generated by a GNN. Further,   for arbitrary node features (not necessarily distinct), a simple feature augmentation scheme suffices to generate a permutation-compatible function by a GNN. 
	\item We show that for any graph problem, permutation compatibility can be verified  over  quadratically many constraints rather than an exhaustive search over exponentially many permutations. 
	\item We characterize the basis functions  for \GPC~graph functions. These basis functions effectively relate the properties of aggregation operators to the expressive power of the resulting GNNs. For instance, it follows that with continuous aggregation operators, all continuous \GPC~functions lie within the reach of GNNs. 
	\item Going beyond permutation compatibility and graph isomorphism, we show that  GNNs can generate \textit{any} graph function once we augment the node features with node identities. Such feature augmentations then allow us to study the connection between GNNs and other iterative graph procedures such as dynamic programs. 
\end{enumerate}

\subsection{Related Work}
It is well-established that GNNs  cannot assign different values to isomorphic graphs \cite{scarselli2009comp}. Moreover, from \cite{xu2018how} and \cite{morris2019weisfeiler} we know that GNNs with appropriate \emph{aggregation} and \emph{pooling} operators, over  \textit{unweighted} graphs,  are only as powerful as the color refinement of the Weisfeiler–Lehman  graph isomorphism test, denoted by 1-WL \cite{weisfeiler1968reduction}. Due to this negative result, many follow-up works proposed more involved variants  such as  as adding stochastic features \cite{murphy2018janossy,sato2021random,you2019position,Srinivasan2020On,dwivedi2020benchmarking}, adding deterministic distance features \cite{li2020distance,you2021identity}, or building higher order GNNs \cite{morris2019weisfeiler,maron2018invariant,maron2019universality,maron2019provably,chen2019equivalence}, so that the expressive power of the resulting GNNs go beyond the 1-WL test \cite{geerts2022expressiveness}. In this light, we establish in Section~\ref{sec: wl-connection} a precise connection between permutation compatibility and 1-WL test on unweighted graphs. Note that in our GNN setting, we consider fully connected weighted graphs without the pooling/readout operation. As a result, the equivalence between GNNs (on unweighted graphs with readout mechanisms) and 1-WL test do not directly apply to our setting. Indeed, our precise characterization of graph functions generated by GNNs, namely permutation-compatibility, also allows us to shed light on some of the elusive features of GNNs.

\textbf{Implications of our results.} One of the main theoretical directions with regard to the expressive power of GNNs has been through establishing an alignment between the iterative updates of a GNN and the 1-WL test \cite{xu2018how,morris2019weisfeiler}. However, our results are of a different nature. Given any graph function, our representation theorem provides explicit choices for a GNN that generates the function (possibly with appropriate feature augmentation). In this sense, our results are in nature similar to the ones showing that neural networks are universal function approximators \cite{bartlett2009NN}, or the ones showing that deep sets  can approximate any permutation-invariant function \cite{zaheer2017deepset}. A similar comparison can be made between our results and the recent works on the alignment of GNNs with the dynamic programming approaches for specific graph problems such as the shortest path problem \cite{xu2020what,dudzik2022graph}. Indeed, our results prove (via construction) the existence of GNNs  that can solve a graph problem (such as shortest path, min-cut, max-flow, etc) once the features are properly augmented.

\section{Preliminaries}\label{sec: pre}

Throughout the paper, we consider multi-dimensional arrays (sequences) of objects. By $(a_1,\ldots,a_n)$, we denote a one-dimensional array of objects $a_1,\ldots,a_n$. If $A=(a_1,\ldots,a_n)$, we refer to the $i$-th element of $A$ by $[A]_i$, i.e., $[A]_i=a_i$. Similarly, if $A$ is a two-dimensional array (e.g., a matrix), $[A]_{i,j}$ refers to its $(i,j)$-th element.
Similarly, $[A]_{i,j,k}$ refers to the $(i,j,k)$-th element in a three-dimensional array $A$. Sets are denoted by $\{\cdot\}$.
We also let $[n]=\{1,\ldots,n\}$ and $[n]_{-i}= \{1,\ldots,n\}\setminus \{i\}$, where $A\setminus B$ denotes the set difference. The set of complex numbers is denoted by $\mathbb{C}$. If $z\in \mathbb{C}$, we use the standard notation  $z=\operatorname{Re}(z)+\operatorname{Im}(z)\sqrt{-1}$.

In the following, we formally define {\it graphs}, {\it graph functions}, and  GNNs.

\begin{definition}[Class $\G_{n,d}$ graphs]\label{def: graph}
	An undirected graph $G$ is a tuple $G=([n],W,X)$, where $[n]=\{1,\ldots,n\}$ is the set of nodes, and every  pair of nodes $\{i,j\}$ with $i\neq j$ forms an edge to which a weight $w_{i,j}=w_{j,i}$ is assigned.  The symmetric matrix $W\in \R^{n\times n}$ is called the weight matrix with zeros on its diagonal. Further, each node $i$ is associated with a row feature vector $x_i\in \R^d$. We call  $X=(x_1^{\top},\ldots,x_n^{\top})$  the feature matrix. Finally, we  denote by  $\mathcal{G}_{n,d}$ the set of  graphs of size  $n$ with feature vectors of dimension $d$.
\end{definition}
\begin{remark}
	For the ease of presentation, we mainly consider scalar-valued weights. However,   all of results can be extended to vector-valued weights.
\end{remark}

\begin{definition}[Graph function]\label{def: func-graph}
	A graph function over $\G_{n,d}$  is a function $F$ that takes as input any graph $G=([n],W,X)\in\G_{n,d}$ and is identified  by its action  on $(W,X)$ via the following form: $F(W,X) = (f_1(W,X),\ldots,f_n(W,X))$, where $f_i(W,X)$, so called the node-functions, are vector-valued functions in some common Euclidean vector space.
\end{definition}
\begin{example}\label{ex: funcs}
	To better understand the notion of graph functions, let us consider a few examples.  
	\begin{enumerate}
		
		\item  \label{3-node}{\bf Feature-Oblivious.} Let $n=3$, and consider a function $F$ with $f_1(W,X) = 0$, $f_2(W,X) = w_{1,2}+w_{2,3}$, and $f_3(W,X) = \sin\left(w_{1,3}+w_{2,3}\right)$.
		
		
		\item  \label{feature-sum}{\bf Feature-Sum.}  Let $f_i(W,X)=\sum_{j \in [n]} x_{j}$.
		
		\item  \label{minx_i} {\bf Min-Sum.} Let $f_i(W,X)=\min(x_i,\, \sum_{j\in [n]_{-i}} x_j)$ for scalar-valued features, i.e., $d=1$.
		
		\item \label{deg}{\bf Degree.} Let $f_i(W,X) = \sum_{j\in[n]}w_{i,j}$. 
		
		\item \label{max-degree}{\bf Max-Neighbor-Degree.} Let $F$ be a function that assigns to each node $i$ the maximum degree of its neighbors, i.e., $f_i(W,X) = \max_{j\in [n]_{-i}}(\sum_{r\in[n]}w_{r,j})$. 
		
		\item \label{node1}{\bf Distance-to-Node-$1$.} Let $F$ be a function that assigns to each node $i$  the length of its shortest path to node $1$. More formally, $f_1(W,X) = 0$, and for $i\in [n]_{-1}$
		\begin{align}\label{Dist}
			f_i(W,X) = \min_{(j_0,\ldots,j_{\ell})\in P(i,1)}\left(\sum_{r=0}^{\ell-1}w_{j_r,j_{r+1}}\right),
		\end{align}
		where $P(i,1)$ denotes the set of all paths starting from node $i$ and ending in node $1$.
		
		\item \label{cut}{\bf Min-Cut Value.} Let $F$ be a function that assigns to every node the minimum-cut of the whole graph; i.e., for all $i\in [n]$
		\begin{align}\label{min cut}
			f_i(W,X) = \min_{\emptyset\subsetneq A\subsetneq [n]}\left(\sum_{r\in A}\sum_{s\in [n]\setminus A}w_{r,s}\right).
		\end{align}
	\end{enumerate}
\end{example}
\begin{definition}[GNN]\label{def: gnn}
	A Graph Neural Network (GNN) is an iterative mechanism that generates a sequence of functions $H^{(k)}$, for $k\geq 0$, over $\G_{n,d}$ in the following manner.  For $G=([n],W,X)\in\G_{n,d}$, the function $H^{(k)}(W,X)=(h_1^{(k)}(W,X),\ldots,h_n^{(k)}(W,X))$ is given as 
	\begin{align}
		&\text{if }k=0:\, h_i^{(0)} = x_i,\\
		\label{gnn-formula}&\text{if }k\geq 1:\, h_i^{(k)} = \sum_{j\in [n]_{-i}} \phi_{k}\left(h_i^{(k-1)},h_j^{(k-1)},w_{i,j}\right).
	\end{align}
	We assume that  the outputs of functions $\phi_k$, for $k\geq 1$, lie in some Euclidean vector space. 
\end{definition}


Definition~\ref{def: gnn} puts no restriction on the function-class of $\phi_k$. However,   $\phi_k$ is often  chosen from the class of multi-layer perceptrons (MLPs). The update \eqref{gnn-formula} is called the aggregation operator. It is common  in the literature to consider more general aggregation operators. Nevertheless, the following proposition states that these general aggregators
do not enlarge the function-class of GNNs.  For a more formal statement and proof, we refer to \cref{sec: replacing}.
\begin{proposition}[Informal]\label{prop: informal}
	Suppose we replace \eqref{gnn-formula} with an aggregation operation of the form 
	\begin{align}\label{AGG}
		h_i^{(k)}=\operatorname{AGG}\left(h_i^{(k-1)},\left\{(h_j^{(k-1)},w_{i,j})\mid j\in[n]_{-i}\right\}\right).
	\end{align}
	Then the class of functions generated by GNNs under such an aggregation is not larger than the class of functions generated by a GNN with the aggregation defined in \eqref{gnn-formula}.
\end{proposition}

The proof of \cref{prop: informal} shows that in fact every iteration of the form \eqref{AGG} can be represented by two consecutive iterations of the form \eqref{gnn-formula}. Finally, our characterization of the class of functions generated by GNNs requires formalizing the concepts and  notation related to permutations.

\begin{definition}[Permutations]\label{def: perm} A permutation $\pi$ over $[n]$ is a bijective mapping $\pi: [n] \to [n]$. The set of all permutations over $[n]$ is denoted by $S_n$. We also need the following restricted permutations.
	\begin{enumerate}[(i)]
		\item For $i\in[n]$, we use $\nabla_i$ to denote the set of all permutations $\pi$ over $[n]$ such that $\pi(i)=i$. More formally, $\nabla_i =\{\pi\in S_n \mid \pi(i)=i\}$. Here, for simplicity, we have dropped the dependency of $\nabla_i$ on $n$. 
		\item For $i,j\in [n]$, we use $\pi_{i,j}$ to denote the specific permutation over $[n]$ that swaps $i$ and $j$ but fixes all the other elements. More formally, $\pi_{i,j}\in S_n$ with $\pi_{i,j}(i)=j$, $\pi_{i,j}(j)=i$, and $\pi_{i,j}(\ell)=\ell$ for $\ell \in [n]\setminus \{i,j\}$.
	\end{enumerate}
\end{definition}
We next define  permutations on weights and features that are induced by a permutation on the nodes. 
\begin{definition}[Induced Weight-Feature Permutation (IWFP)]\label{def: auto}
	Consider a graph $G=([n],W,X)$ and a permutation $\pi\in S_n$ over its nodes. Then $\pi$ induces a permutation $\sigma_{\pi}$ over the elements of $W$, and a permutation $\lambda_{\pi}$ over the elements of $X$ as follows:  $\sigma_{\pi}(W)$ is an $n\times n$ matrix whose $(i,j)$-th element is $w_{\pi(i),\pi(j)}$. More formally, $[\sigma_{\pi}(W)]_{i,j}=w_{\pi(i),\pi(j)}$. Also, given the feature matrix $X=(x_1^{\top},\ldots,x_n^{\top})$, we have $\lambda_{\pi}(X) = (x_{\pi(1)}^{\top},\ldots,x_{\pi(n)}^{\top})$, or equivalently $[\lambda_{\pi}(X)]_{i} = x_{\pi(i)}^{\top}$. For every $\pi\in S_n$, we call $(\sigma_{\pi},\lambda_{\pi})$, a weight-feature permutation induced by $\pi$.
\end{definition}

\section{Main Results}\label{sec: mr}

In this section, we aim to understand how a graph function can be generated by GNNs. We proceed by introducing the main algebraic structure which our results are based on. 

\begin{definition}[Class of {\GPC} functions $\mathcal{F}_{n,d}$]\label{def: auto-comp}
	Consider a function $F= (f_1,\ldots,f_n)$ over $\G_{n,d}$. We say that  $F$ belongs to the class of {\GPC} functions $\mathcal{F}_{n,d}$ if and only if for every $\pi\in S_n$ and every $G=([n],W,X)\in \G_{n,d}$, we have
	\begin{align}\label{a23}
		f_{\pi{(i)}} \left(W,X \right) = f_i\left(\sigma_{\pi}(W),\lambda_{\pi}(X) \right)\quad \forall i \in [n].
	\end{align}
\end{definition}
Permutation compatibility can be seen as a natural generalization of the permutation-invariance condition (see e.g.  \cite{xu2018how,morris2019weisfeiler}) for graph functions that assigns a node function $f_i$ to each node $i$. We refer to \cref{fig:examples} - (a) for an illustration and \cref{sec: id} for the corresponding examples. 
\begin{figure*}[t]
	\centering
	\subfigure[Illustration of Condition \eqref{a23}.]{\includegraphics[width=0.6\linewidth]{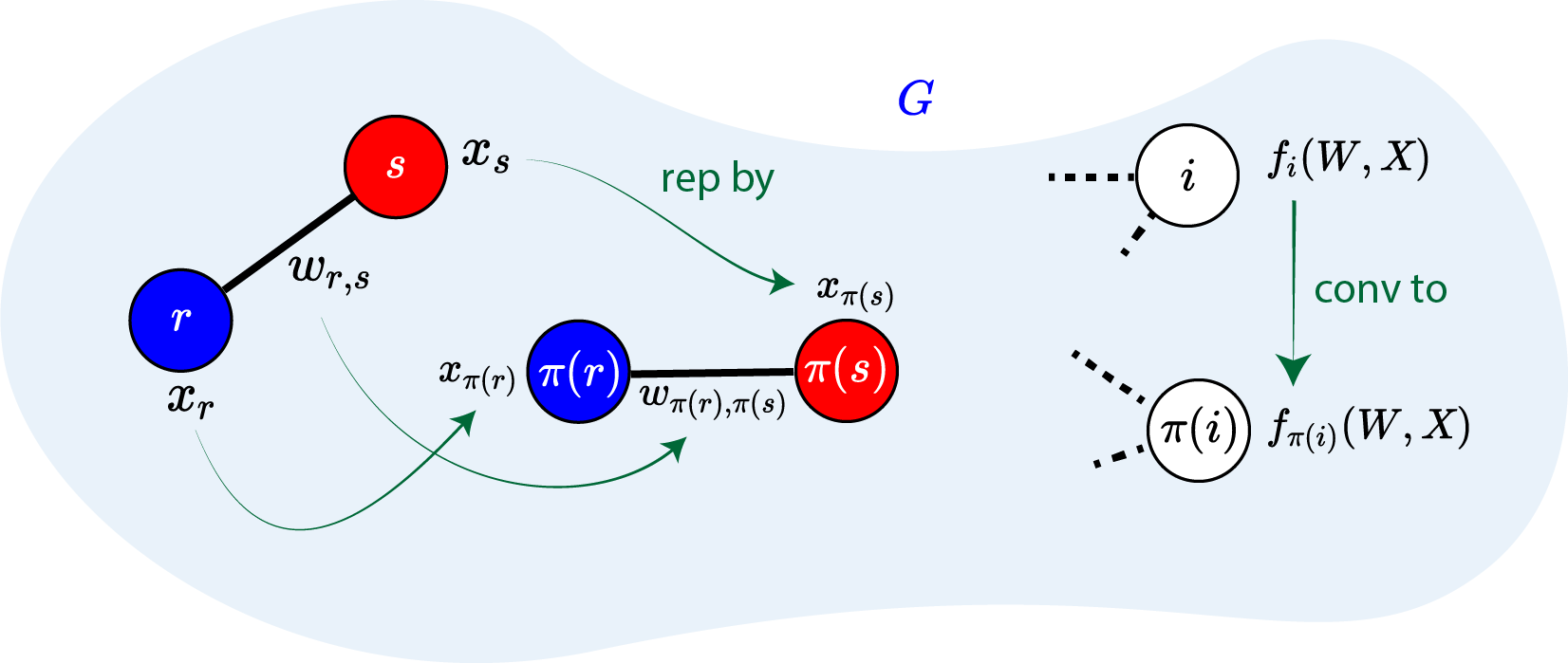}} \hspace{0.5cm}
	\subfigure[\cref{3-node} of \cref{ex: funcs}]{\includegraphics[width=0.35\linewidth]{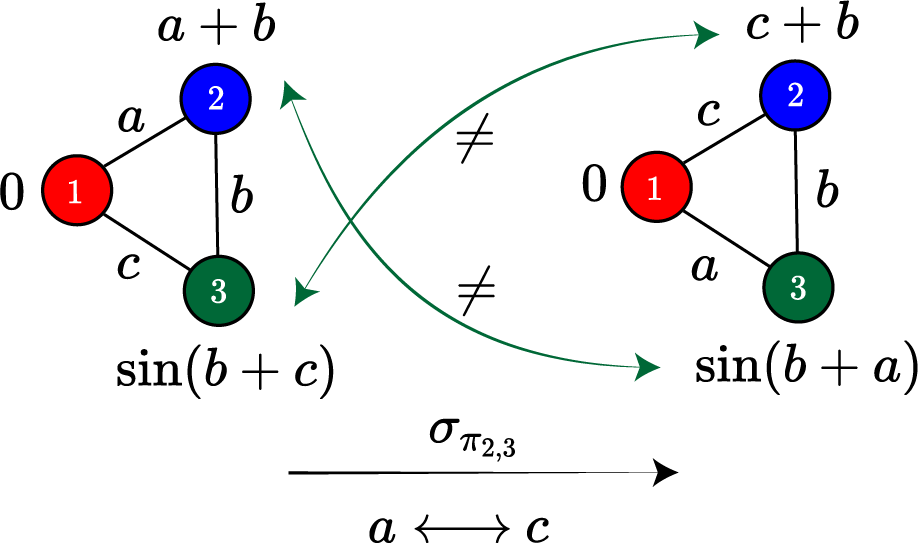}}
	\caption{(a) Illustration of \eqref{a23} which states that replacing all weights $w_{r,s}$ by $w_{\pi(r),\pi(s)}$ and all features $x_r$ by $x_{\pi(r)}$, converts $f_i(W,X)$ into $f_{\pi(i)}(W,X)\,$;  (b) Showing that \cref{3-node} of \cref{ex: funcs} fails to satisfy \eqref{a23} for $\pi=\pi_{2,3}$. For simplicity we used  $w_{1,2}=a$, $w_{2,3}=b$, and $w_{1,3}=c$.}
	\label{fig:examples}
\end{figure*}

\looseness=-1 To demonstrate the connection between permutation compatibility and GNNs, we start by the necessity result, which generalizes the previously-known results on the permutation invariancy of GNNs.
\begin{theorem}[Necessity]\label{theo: necessity}
	Suppose $H^{(k)}(W,X)$ is a GNN over $\mathcal{G}_{n,d}$. For any finite $k\geq 0$, the resulting $H^{(k)}$ is permutation compatible.
\end{theorem}
The above theorem can be formally proven by induction on $k$.  We next proceed with sufficiency results which are far more challenging and, in some cases, require feature augmentation.  Our first sufficiency result  is restricted to graphs with distinct features. 
\begin{definition}\label{def: G-tilde}
	Define $\tilde{\mathcal{G}}_{n,d}$ to be the set of all graphs with distinct node features: 
	\begin{align}
		\tilde{\mathcal{G}}_{n,d} = \left\{G=([n],W,X)\in \mathcal{G}_{n,d} \mid X=(x_1^{\top},\ldots,x_n^{\top}) \text{, where    } x_1,\ldots,x_n \text{  are distinct}\right\}.
	\end{align}
\end{definition}

\begin{theorem}[Sufficiency for distinct features]\label{theo: distinct}
	Suppose $F\in\mathcal{F}_{n,d}$. Then there exists a GNN $H^{(k)}$ with finite $k\geq 0$ such that $H^{(k)}(W,X)=F(W,X)$ for all $G=([n],W,X) \in \tilde{\mathcal{G}}_{n,d}$.
\end{theorem}
\looseness=-1 In the case where the features are identical, we show later in Section~\ref{sec: wl-connection} that  permutation compatibility of a graph function may no longer be sufficient to guarantee that it is generated by a GNN. We establish this result by making a connection with the 1-WL test.  However, as we will show below, a simple augmentation scheme makes it possible to extend \cref{theo: distinct} to any permutation-compatible function. 

\begin{theorem}[Extending the sufficiency to arbitrary features]\label{theo: arbitrary p.c}
	Suppose $F\in\mathcal{F}_{n,d}$. For any graph $G=([n],W,X) \in \mathcal{G}_{n,d}$, let us augment the (row vector) features $x_1,\ldots,x_n$ with arbitrary but distinct vectors, i.e., $x^\ast_i=(x_i,y_i)$, where $y_1,\ldots,y_n\in\R^{d_0}$ are distinct. Let us denote the new feature matrix by $X^\ast=(x^{\ast \top}_1,\ldots,x^{\ast \top}_n)$. Then, there exists a GNN, i.e., $H^{(k)}$ with some finite $k\geq 0$ over $\G_{n,d+d_0}$, such that $H^{(k)}(W,X^{\ast})=F(W,X)$ for all $G=([n],W,X) \in \mathcal{G}_{n,d}$.
\end{theorem}
\cref{theo: distinct} and \cref{theo: arbitrary p.c} are proven in \cref{app: theo-distinct} and \cref{app: theo-arbitrary p.c}, respectively. However, we provide the sketch of the proof in \cref{sec: char}.
\subsection{Verification  of Permutation-Compatible Functions}\label{sec: id}
\looseness=-1 It is easy to see that naively verifying Condition~\eqref{a23} over all permutations leads to $n\cdot n !$ functional constraints. However, it turns out that these constraints can be equivalently represented by a subset of $n(n-1)/2$ constraints.  This is because, 
at a high level, any permutation can be decomposed into a sequence of swaps of a pair of elements (a.k.a transpositions), and thus, invariancy on arbitrary permutations can be verified via the invariancy on transpositions. This  results in the following theorem. 
\begin{proposition}\label{prop: acf-equivalent}
	Consider a function $F = (f_1,\ldots,f_n)$ over $\G_{n,d}$. Then $F\in\mathcal{F}_{n,d}$ if and only if there exists $i_0 \in [n]$ such that both of the following conditions hold for all $G=([n],W,X)\in \G_{n,d}$:
	\begin{align}
		\label{a19}&\text{For all } r,s\in [n]_{-i_0}: \quad f_{i_0}(\sigma_{\pi_{r,s}}(W),\lambda_{\pi_{r,s}}(X))=f_{i_0}(W,X).\\
		\label{a29}&\text{For all } j\in [n]_{-i_0}:\quad f_{j}(W,X) = f_{i_0}(\sigma_{\pi_{i_0,j}}(W),\lambda_{\pi_{i_0,j}}(X)).
	\end{align}
\end{proposition}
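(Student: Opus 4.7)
The plan is to argue both directions. The forward implication ($\Rightarrow$) is immediate: fix any $i_0\in[n]$; taking $\pi=\pi_{r,s}$ with $r,s\in[n]_{-i_0}$ and $i=i_0$ in \eqref{a23}, and noting that $\pi_{r,s}(i_0)=i_0$, yields (i); taking $\pi=\pi_{i_0,j}$ and $i=i_0$ in \eqref{a23}, so that $\pi(i_0)=j$, yields (ii).

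For the reverse direction ($\Leftarrow$), I would first record the composition identity $\sigma_\tau\circ\sigma_\pi=\sigma_{\pi\circ\tau}$ (and analogously for $\lambda$), which falls out of $[\sigma_\pi(W)]_{r,s}=w_{\pi(r),\pi(s)}$ by direct calculation. Next, since the stabilizer $\nabla_{i_0}$ is isomorphic to $S_{n-1}$ and is generated by the transpositions $\{\pi_{r,s}:r,s\in[n]_{-i_0}\}$, iterating Condition (i) along a factorization of an arbitrary $\tau\in\nabla_{i_0}$ into such transpositions yields the strengthened invariance
\begin{align}
f_{i_0}\bigl(\sigma_\tau(W),\lambda_\tau(X)\bigr) = f_{i_0}(W,X) \qquad \forall\,\tau\in\nabla_{i_0},\ \forall\,(W,X).
\end{align}

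The heart of the argument is then a short decomposition. Fix $\pi\in S_n$ and $i\in[n]$, set $j=\pi(i)$, and define $\tau := \pi_{i_0,j}\circ\pi\circ\pi_{i_0,i}$; the chain $\tau(i_0)=\pi_{i_0,j}(\pi(i))=\pi_{i_0,j}(j)=i_0$ places $\tau$ in $\nabla_{i_0}$, and rearranges as $\pi\circ\pi_{i_0,i}=\pi_{i_0,j}\circ\tau$. Assuming first $i,j\neq i_0$, I would apply (ii) once to $f_j(W,X)$ and once to $f_i(\sigma_\pi(W),\lambda_\pi(X))$, so that the target identity \eqref{a23} reduces to
\begin{align}
f_{i_0}\bigl(\sigma_{\pi_{i_0,j}}(W),\lambda_{\pi_{i_0,j}}(X)\bigr) = f_{i_0}\bigl(\sigma_{\pi\circ\pi_{i_0,i}}(W),\lambda_{\pi\circ\pi_{i_0,i}}(X)\bigr),
\end{align}
and the composition identity recasts the right-hand side as $f_{i_0}\bigl(\sigma_\tau(\sigma_{\pi_{i_0,j}}(W)),\lambda_\tau(\lambda_{\pi_{i_0,j}}(X))\bigr)$, which equals the left by the strengthened invariance. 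The boundary cases $i=i_0$ or $j=i_0$ are dispatched analogously by treating the corresponding $\pi_{i_0,\cdot}$ as the identity and invoking only (ii) together with $\nabla_{i_0}$-invariance.

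The main obstacle I foresee is purely bookkeeping: keeping the composition order $\sigma_\tau\circ\sigma_\pi=\sigma_{\pi\circ\tau}$ straight and cleanly dispatching the degenerate cases where $i$ or $j$ coincides with $i_0$. Once the observation that $\nabla_{i_0}$ is generated by the transpositions appearing in (i) is in hand, the rest is routine algebraic manipulation.
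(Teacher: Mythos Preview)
Your proposal is correct and follows essentially the same route as the paper: both establish the composition identity $\sigma_{\pi_2\circ\pi_1}=\sigma_{\pi_1}\circ\sigma_{\pi_2}$, extend condition~(i) from transpositions to all of $\nabla_{i_0}$ via the fact that transpositions generate the stabilizer, and then verify \eqref{a23} by noting that $\pi_{i_0,j}\circ\pi\circ\pi_{i_0,i}\in\nabla_{i_0}$ (the paper writes this as $\pi_{s,i_0}\circ\pi\circ\pi_{r,i_0}$ in its main case). The only cosmetic difference is that the paper spells out the four cases $r=i_0$/$s=i_0$ explicitly while you handle them as degenerate instances of the same decomposition.
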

In the next two subsections, we consider specific examples of graph functions and determine whether or not they satisfy the conditions stated in \cref{prop: acf-equivalent}.
\subsubsection{Permutation-Compatible Examples} \label{subsec: pos}
Using \cref{prop: acf-equivalent} it is easy to verify that the functions given in \cref{deg} and \cref{max-degree}  of \cref{ex: funcs} are permutation compatible. The following  corollary provides cases for which verifying permutation-compatibility is even simpler than \cref{prop: acf-equivalent}. We refer to \cref{app: pc-ex} for more details. 
\begin{corollary}[Informal]\label{corr-newp} 
	$F = (f_1,\ldots,f_n)$ is permutation compatible if any of the following holds:
	
	(i)  $F$ ignores $W$ and $f_i$ is invariant under any permutation of features $x_j\,$ for  $j\neq i$,
	
	(ii) $F$ assigns the same value to all $f_i$ and this value is invariant under any graph isomorphism.
	
\end{corollary} 
Using \cref{corr-newp} - part (i), we can immediately conclude that Items~ \ref{feature-sum} and \ref{minx_i}  of \cref{ex: funcs} are {\GPC} functions. The implication of part (ii) is expressed in the following remark. 
\begin{remark}\label{rem: min-cut}
	\cref{corr-newp}- part (ii) implies that the min-cut value function given in \cref{cut} of \cref{ex: funcs} is permutation compatible and hence can be generated by a GNN using a distinct feature augmentation due to \cref{theo: arbitrary p.c}. 
	Indeed, many classical graph problems such as the clique number and the max-flow value can be shown to be permutation compatible due to  \cref{corr-newp} - part (ii).
\end{remark}

\subsubsection{Permutation-Incompatible Examples}\label{subsec: neg}
Consider the function $F$ defined in \cref{3-node} of \cref{ex: funcs}. We claim that $F\notin \mathcal{F}_{3,d}$, i.e., $F$ fails to satisfy \eqref{a23}. Let $\pi = \pi_{2,3}$, i.e., we have $\pi(1)=1,\pi(2)=3,\pi(3)=2$. Let $W'=\sigma_{\pi}(W)$ and $X'=\lambda_{\pi}(X)$. Note that $W'$ consists of three elements $w'_{1,2}$, $w'_{1,3}$, and $w'_{2,3}$ and $X'=(x_1',x_2',x_3')$. As shown in \cref{fig:examples}-(b), under the permutation $\sigma_{\pi}$ on the weights, we get $w'_{1,2} = w_{1,3}$, $w'_{1,3} = w_{1,2}$, and  $w'_{2,3} = w_{2,3}$, and under $\lambda_{\pi}$ on the features, we get $x'_1=x_1$, $x'_2=x_3$, and $x'_3=x_2$. We now apply $(\sigma_{\pi},\lambda_{\pi})$ on each node function as follows (note that the specific choice of $F$ considered here totally ignores $X$ and only depends on $W$):
\begin{align}
	f_1(\sigma_{\pi}(W),\lambda_{\pi}(X)) &= f_1(W',X') = 0,\\
	f_2(\sigma_{\pi}(W),\lambda_{\pi}(X)) &= f_2(W',X')=w'_{1,2}+w'_{2,3} =w_{1,3}+w_{2,3},\\
	f_3(\sigma_{\pi}(W),\lambda_{\pi}(X)) &= f_3(W',X') =\sin\left(w'_{1,3}+w'_{2,3}\right)=\sin\left(w_{1,2}+w_{2,3}\right).
\end{align}
\looseness=-1 Since $\pi(2)=3$, guaranteeing \eqref{a23} requires $f_3(W,X) = f_2(\sigma_{\pi}(W),\lambda_{\pi}(X))$, which does not hold as $f_3(W,X)=\sin(w_{1,3}+w_{2,3})$ while $f_2(\sigma_{\pi}(W),\lambda_{\pi}(X))=w_{1,3}+w_{2,3}$. Hence, $F\notin \mathcal{F}_{3,d}$. 
 
\subsection{Permutation Compatibility and Node Labeling}\label{sec: intuition}
\looseness=-1 In this section, we explain that permutation compatibility is a formal way of saying that a function is blind to node identities, i.e., fixing a node, the value that the function assigns to that node remains the same under re-labeling. To see this, pick a permutation $\pi\in S_n$ and re-label the nodes by writing $\pi^{-1}(i)$ instead of $i$. Therefore, the new label for the edge $\{i,j\}$ is now $\{\pi^{-1}(i),\pi^{-1}(j)\}$. Letting $W'=\sigma_{\pi}(W)$ and $X'=\lambda_{\pi}(X)$, note that $w_{\pi^{-1}(i),\,\pi^{-1}(j)}'$ refers the weight of the edge whose new name is $\{\pi^{-1}(i),\pi^{-1}(j)\}$ and $x'_{\pi^{-1}(i)}$ refers to the feature of the node whose name~is $\pi^{-1}(i)$. Indeed,   $w_{\pi^{-1}(i),\,\pi^{-1}(j)}'=w_{\pi(\pi^{-1}(i)),\,\pi(\pi^{-1}(j))}=w_{i,j}$ and $x'_{\pi^{-1}(i)}=x_{\pi(\pi^{-1}(i))}=x_i$. The original function value assigned to node $i$ was $f_i(W,
X)$, and now under the re-labeling the value assigned to the same node (which is now named 
$\pi^{-1}(i)$) is $f_{\pi^{-1}(i)}(W',X')$. If $F$ does not depend on node labelings, these two values should be equal, i.e.,  $f_{\pi^{-1}(i)}(W',X')=f_i(W,X)$ or $f_{\pi^{-1}(i)}(\sigma_{\pi}(W),\lambda_{\pi}(X))=f_i(W,X)$ for all $i\in [n]$. Replacing $i =\pi(k)$ in this equation implies that $f_{k}(\sigma_{\pi}(W),\lambda_{\pi}(X))=f_{\pi(k)}(W,X)$ must hold for all $k$, which is the permutation-compatibility condition in \eqref{a23}.

\subsection{Characterization of $\mathcal{F}_{n,d}$ and Proof Sketch}\label{sec: char}
In this section, we study a characterization of  $\mathcal{F}_{n,d}$ that paves the path for proving \cref{theo: distinct}. Based on the definitions and results of this section, \cref{theo: distinct} is proven in \cref{app: theo-distinct}. To reach the result of \cref{theo: distinct}, we take the following steps:

\noindent{\bf (i) Building MEF functions.} We start by introducing the notion of {\it multiset-equivalent functions} (MEF) and provide useful candidates for such functions. MEFs are building blocks for defining the basis functions in step (ii).
\begin{definition}[Multiset-Equivalent Function (MEF)]\label{def: cef}
	For positive integers $n$ and $m$, we call the function $\psi:\R^m \to \R^{p}$  a multiset-equivalent function (MEF), if for all $v_1,\ldots,v_n, v_1',\ldots,v_n'\in \R^m$, the equation
	\begin{align}\label{a4}
		\sum_{i=1}^n \psi\left(v_i\right) = \sum_{i=1}^n \psi\left(v_i'\right),
	\end{align} 
	holds if and only if there exists a permutation $\pi \in S_n$ such that $(v_1',\ldots,v_n') = (v_{\pi(1)},\ldots,v_{\pi(n)})$. For fixed $m$ and $n$, the class of all such functions is denoted by $\Psi_{m,n}$. Moreover, we refer to $p$, the dimension of the co-domain of the function $\psi$ as $p({\psi})$.
\end{definition}

The summation of the function $\psi$ aims to generate an algebraic form for a multiset of vectors. Previous works \cite{zaheer2017deepset,xu2018how} developed ideas to translate multisets to functional forms. However, the approaches in these works cannot translate a multiset of ``vectors'' of arbitrary dimension to an algebraic summation which is required by \cref{def: cef}. In \cref{prop: cef}, we introduce candidate multiset-equivalent functions for every $m$ and $n$ to ensure that the existence of such functions and provide a constructive framework for the proofs in this paper. The term ``multise'' in MEF refers to a generalisation of a set in which repetition of elements is permitted (see \cref{app: notation} for a formal definition of a multiset). The name multiset-equivalent function for $\psi$ in \cref{def: cef}, relates to the fact that the summation of $\psi$ over a sequence of vectors preserves all the data up to a permutation and thus this sum is equivalent to the ``multiset'' of data. It is not trivial to find an MEF. For instance, note that the identity function (which leads to $\sum_{i=1}^n v_i = \sum_{i=1}^n v_i'$) is not an MEF for $n>1$. This is because when $m=1$ and $n=2$, we have $2+5=3+4$ but $(2,5)$ is not a permutation of $(3,4)$. A natural question here is whether such function exists at all. The following proposition introduces candidate elements of $\Psi_{m,n}$ for all positive integers $n$ and $m$.

\begin{proposition}\label{prop: cef}
	The followings are specific constructions of MEFs for (i) $m=1$, and (ii) $m > 1$:
	\begin{enumerate}[(i)] 
		\item \label{psi1}Consider the function $\psi:\R \to \R^{n}$ such that for $v\in \R$, $\psi(v) = (v, v^2, \ldots, v^n)$. Then $\psi\in \Psi_{1,n}$. Moreover, note that $p(\psi)=n$.
		\item \label{psi2}Let $m>1$ and consider $v = (v_1,\ldots,v_m)\in \R^m$. Define $\psi(v)=\psi(v_1, \cdots, v_m)$ to be an $n\times m \times m$ array (tensor) with real elements such that for every $\ell\in [n]$ and $r,s\in[m]$ with $r<s$: 
		\begin{align}\label{a14}
			[\psi(v_1, \cdots, v_m)]_{\ell,r,s} &= \operatorname{Re}\left(\left(v_r+v_s\sqrt{-1}\right)^{\ell}\right),\\
			\label{a15}	[\psi(v_1, \cdots, v_m)]_{\ell,s,r} &= \operatorname{Im}\left(\left(v_r+v_s\sqrt{-1}\right)^{\ell}\right).
		\end{align}
		Note that $\psi(v) \in \R^{n\times m\times m}$. Let us re-shape $\psi(v)$ into a long vector in $\R^{m^2n}$, and consider $\psi: \R^m \to \R^{m^2n}$. Then $\psi\in \Psi_{m,n}$. Moreover, note that $p(\psi)=m^2n$. 
	\end{enumerate}
\end{proposition}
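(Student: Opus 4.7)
The plan is to verify that each of the two constructions satisfies \cref{def: cef}. In both parts, the implication ``equal multisets imply equal sums'' is immediate from commutativity of addition, so the substance lies entirely in the converse implication.

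For part (i), I would invoke Newton's identities connecting power sums to elementary symmetric polynomials. Writing $p_\ell = \sum_{i=1}^n v_i^\ell$ and $e_\ell$ for the $\ell$-th elementary symmetric polynomial of $v_1, \ldots, v_n$, the Newton recursion
\[
\ell\, e_\ell \;=\; \sum_{k=1}^\ell (-1)^{k-1}\, p_k\, e_{\ell-k}, \qquad \ell = 1, \ldots, n,
\]
lets me solve iteratively for $e_1, \ldots, e_n$ as polynomial functions of $p_1, \ldots, p_n$. Hence if $p_\ell$ agrees between the primed and unprimed sequences for every $\ell \in [n]$, then every $e_\ell$ agrees as well, so the monic polynomials $\prod_{i=1}^n (x - v_i)$ and $\prod_{i=1}^n (x - v_i')$ coincide; their root multisets must therefore agree, which is what is to be shown.

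For part (ii), my plan is to reduce to part (i) pair-by-pair, now working over $\mathbb{C}$. For each $r < s$, extracting the entries of $\sum_i \psi(v^{(i)}) = \sum_i \psi(v'^{(i)})$ indexed by $(\ell, r, s)$ and $(\ell, s, r)$, and assembling the real and imaginary parts, produces the complex identity
\[
\sum_{i=1}^n \bigl(v_r^{(i)} + v_s^{(i)} \sqrt{-1}\bigr)^\ell \;=\; \sum_{i=1}^n \bigl({v_r'}^{(i)} + {v_s'}^{(i)} \sqrt{-1}\bigr)^\ell, \qquad \ell = 1, \ldots, n.
\]
Newton's identities remain valid over any field of characteristic zero, so the argument of part (i) applies verbatim to complex inputs and yields equality of the multisets $\{v_r^{(i)} + v_s^{(i)}\sqrt{-1} : i \in [n]\}$ for each such pair. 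Equivalently, for every unordered pair $\{r, s\}$, the pairwise coordinate projection $\{(v_r^{(i)}, v_s^{(i)}) : i \in [n]\}$ is the same multiset in $\mathbb{R}^2$ for the primed and unprimed sequences.

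The main obstacle I expect is the final step: promoting the pairwise coordinate equalities into a single global permutation $\pi \in S_n$ realising $(v_1', \ldots, v_n') = (v_{\pi(1)}, \ldots, v_{\pi(n)})$ in $\mathbb{R}^m$. The natural tactic is to use the $(1,k)$ pair projection, for each $k = 2, \ldots, m$, to bind the $k$-th coordinate of a vector to its first coordinate, and then stitch the matchings across $k$ into a coherent permutation. The delicate situation arises when several vectors share the same first coordinate, since the pairwise data then admits multiple candidate matchings along each axis $k$; reconciling them into one consistent permutation requires a careful accounting of multiplicities over the level sets of the first coordinate, plausibly via induction on $m$, and this is where I expect the technical heart of the proof to lie.
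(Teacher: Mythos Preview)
Your approach is essentially identical to the paper's: part (i) via Newton's identities and the factorisation $\prod_i(x-v_i)$, and part (ii) by reducing, one pair of coordinates at a time, to the complex version of (i). The paper reaches exactly the conclusion you describe---that the two-dimensional projections $\#\{(v_r^{(i)},v_s^{(i)}):i\in[n]\}$ coincide for every pair $r<s$---and then simply asserts the global conclusion in a single sentence, with no further argument.

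You are right to flag this last step as the obstacle, but the situation is worse than you anticipate: no ``careful accounting of multiplicities'' or induction on $m$ can close the gap, because the implication is false. Take $m=3$, $n=4$, let the unprimed vectors be the even-weight elements of $\{0,1\}^3$,
\[
(0,0,0),\quad(0,1,1),\quad(1,0,1),\quad(1,1,0),
\]
and let the primed vectors be the odd-weight elements,
\[
(1,1,1),\quad(1,0,0),\quad(0,1,0),\quad(0,0,1).
\]
For each of the three coordinate pairs the projected multiset is $\{(0,0),(0,1),(1,0),(1,1)\}$ on both sides, so all pairwise projections agree---and hence $\sum_i\psi(v_i)=\sum_i\psi(v_i')$ for the $\psi$ of part (ii)---yet the two full multisets in $\R^3$ are disjoint. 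Thus the construction in (ii), as stated, is not an MEF, and the step you were worried about, in both your plan and the paper's proof, cannot be completed.
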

To construct valid MEFs, the idea behind this specific choice of $\psi$ in \cref{prop: cef} for $m=1$ is that when we take the sum $\psi$ over $n$ scalars, it encodes them into the roots of a unique polynomial and thus it preserves the data up to a permutation. For $m>1$, this idea is extended by encoding vectors of arbitrary size into the roots of a system of complex polynomials. See the proof in \cref{app: prop-cef}.\\

\noindent{\bf(ii) Constructing a basis function based on MEFs.} In the following definition, we construct a graph function over $\G_{n,d}$ through MEFs which we call a {\it basis} function.
\begin{definition}[Basis Function]\label{def: beta}
	Define the graph function $\mathcal{B} = (\beta_1,\ldots,\beta_n)$ over $\G_{n,d}$ such that for $i\in[n]$: 
	\begin{align}\label{ab2-app}
		\beta_i\left(W,X\right) = \left(x_i,\sum_{j\in [n]_{-i}}\psi_2\left(x_j,w_{i,j},\sum_{\ell\in [n]_{-j}}\psi_1\left(x_{\ell},w_{j,\ell}\right)\right)\right),
	\end{align}
	where $\psi_1\in \Psi_{d+1,n-1}$ and $\psi_2\in \Psi_{p(\psi_1)+d+1,n-1}$ is the same for all $i\in[n]$. We call $\mathcal{B}$  a basis function over $\G_{n,d}$. 
\end{definition}
This specific structure of a basis function leads to an important property which is stated and formally proven in the following proposition.  
\begin{proposition}\label{prop: beta}
	Let $\mathcal{B} = (\beta_1,\ldots,\beta_n)$ be a basis function over $\G_{n,d}$. Then $\mathcal{B}\in \F_{n,d}$ with the following additional property:
	Given two graphs $G=([n],W,X)$ and $G'=([n],W',X')$ in $\tilde{\G}_{n,d}$, for every $i\in [n]$, if $\beta_i(W,X)=\beta_i(W',X')$, then there exists $\pi\in \nabla_i$ such that $W' = \sigma_{\pi}(W)$ and $X' = \lambda_{\pi}(X)$.
\end{proposition}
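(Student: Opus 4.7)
The proof splits into verifying (a) $\mathcal{B} \in \F_{n,d}$ and (b) the assertion that $\beta_i$ determines $(W,X)$ up to a permutation in $\nabla_i$. Part (a) I plan to dispatch by direct substitution: in $\beta_i(\sigma_\pi(W), \lambda_\pi(X))$, I replace $[\sigma_\pi(W)]_{a,b}$ by $w_{\pi(a),\pi(b)}$ and $[\lambda_\pi(X)]_a$ by $x_{\pi(a)}$, then reindex the outer sum by $j' = \pi(j)$ (which bijects $[n]_{-i}$ onto $[n]_{-\pi(i)}$) and the inner sum by $\ell' = \pi(\ell)$ (bijecting $[n]_{-j}$ onto $[n]_{-j'}$); the result matches $\beta_{\pi(i)}(W,X)$ term by term, verifying \eqref{a23}.

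For part (b), assume $\beta_i(W,X) = \beta_i(W',X')$. Equating the first coordinates gives $x_i = x_i'$, and applying the MEF property of $\psi_2$ to the second coordinates (valid since each sum runs over $n-1$ summands) produces a bijection $\rho: [n]_{-i} \to [n]_{-i}$ with $x_j' = x_{\rho(j)}$, $w_{i,j}' = w_{i, \rho(j)}$, and $s_j' = s_{\rho(j)}$ for all $j \in [n]_{-i}$, where $s_j := \sum_{\ell \in [n]_{-j}} \psi_1(w_{j\ell})$. I extend $\rho$ to a permutation $\pi \in \nabla_i$ by setting $\pi(i) = i$; then $X' = \lambda_\pi(X)$ is immediate, and by the symmetry of $W$ the $i$-th row and column of $W'$ agree with those of $\sigma_\pi(W)$. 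For the interior entries, I rewrite $s_j' = s_{\rho(j)}$ by substituting $m = \pi(\ell)$ on its right-hand side, giving $\sum_{\ell \in [n]_{-j}} \psi_1(w_{j\ell}') = \sum_{\ell \in [n]_{-j}} \psi_1(w_{\pi(j), \pi(\ell)})$; the MEF property of $\psi_1$ then supplies, for each $j$, a bijection $\tau_j$ of $[n]_{-j}$ with $w_{j\ell}' = w_{\pi(j), \pi(\tau_j(\ell))}$. Since $w_{j,i}'$ already equals $w_{\pi(j), i}$ from the outer step, I can use the flexibility in $\tau_j$ to enforce $\tau_j(i) = i$, reducing $\tau_j$ to a permutation of $[n]_{-j,-i}$.

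The main obstacle will be forcing the $\tau_j$'s to be the identity, so as to conclude $w_{j\ell}' = w_{\pi(j),\pi(\ell)}$ pointwise. Because the inner MEF yields only multiset equality, each $\tau_j$ is underdetermined exactly to the extent that $\pi(j)$'s incident weights repeat, and $\rho$ itself is non-unique when the outer triples repeat. My plan is to use the symmetry $w_{j,\ell}' = w_{\ell, j}'$ to impose the cross-consistency identity $w_{\pi(j),\pi(\tau_j(\ell))} = w_{\pi(\ell), \pi(\tau_\ell(j))}$ and, when the outer triples are degenerate, to re-choose $\rho$ from the outer MEF so that the residual mismatch between the $\tau_j$'s and the identity corresponds to a single $\pi_0 \in \nabla_i$ fixing $(W,X)$ (i.e., satisfying $\sigma_{\pi_0}(W)=W$ and $\lambda_{\pi_0}(X) = X$). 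Pre-composing $\pi$ with $\pi_0^{-1}$ then yields the desired permutation in $\nabla_i$ mapping $(W,X)$ to $(W',X')$ entrywise.
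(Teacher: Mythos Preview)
Your part (a) is fine and matches the paper in content; the paper routes the verification through \cref{prop: acf-equivalent} rather than checking \eqref{a23} for every $\pi$ directly, but the underlying reindexing is the same. For part (b), your argument is correct up to the point where you extract $\pi\in\nabla_i$ from the outer MEF and obtain, for each $j\in[n]_{-i}$, a bijection $\tau_j$ with $w'_{j,\ell}=w_{\pi(j),\pi(\tau_j(\ell))}$. What comes after is not a proof: the symmetry constraint $w_{\pi(j),\pi(\tau_j(\ell))}=w_{\pi(\ell),\pi(\tau_\ell(j))}$ does not pin down the $\tau_j$'s, and you give no mechanism that turns the family $\{\tau_j\}$ into a single $\pi_0\in\nabla_i$ with $\sigma_{\pi_0}(W)=W$, $\lambda_{\pi_0}(X)=X$. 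Different $\tau_j$'s can be entangled in ways that do not factor through any global automorphism of $(W,X)$, so ``re-choose $\rho$ and pre-compose'' is a hope, not an argument.

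The paper sidesteps your $\tau_j$-juggling via \cref{lem: invariant-object}: it declares the $w_{r,s}$ and $w'_{r,s}$ to be ``algebraic variables'', so that each $W_r$ is a genuine set and $W_r\cap W_s=\{w_{r,s}\}$; intersecting $W'_r=W_{\pi(r)}$ with $W'_s=W_{\pi(s)}$ then reads off $w'_{r,s}=w_{\pi(r),\pi(s)}$ directly. That intersection identity is exactly what fails for arbitrary real weights, and in fact the obstacle you flagged is insurmountable at that generality. Take $n=7$, $i=1$, all features zero, $w_{1,j}=0$ for every $j$, and on $\{2,\dots,7\}$ let $W$ carry two disjoint weight-$1$ triangles $\{2,3,4\}$, $\{5,6,7\}$ while $W'$ carries a weight-$1$ hexagon $2\text{--}3\text{--}4\text{--}5\text{--}6\text{--}7\text{--}2$ (all remaining internal edges weight $0$). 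Every node $j\neq 1$ then has incident multiset $\#\{0,0,0,0,1,1\}$ in both graphs, so $\Delta_1(W,X)=\Delta_1(W',X')$ and hence $\beta_1(W,X)=\beta_1(W',X')$; yet no $\pi\in\nabla_1$ sends $W$ to $W'$, since two triangles and a $6$-cycle are non-isomorphic. Thus the ``additional property'' as stated does not hold for all $(W,X)$; what the paper's argument actually delivers is the generic case where the edge weights are pairwise distinct, and your plan cannot be completed beyond that case either.
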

The fact that $\psi_1$ and $\psi_2$ are MEFs is crucial in showing that the specific structure of the basis function in \eqref{ab2-app} leads to \cref{prop: beta}. We omit the details here and refer to the proof of \cref{prop: beta} in \cref{app: prop-beta}.\\

\noindent{\bf(iii) Representing any permutation-compatible function in terms of the basis function.} The key property of $\beta_i$ mentioned in part (ii) enables us to represent any permutation-compatible function $F$ in terms of the basis function. This is formalized in the following theorem.
\begin{theorem}[Main Representation Theorem] \label{theo: rho} 
	Suppose $F\in \mathcal{F}_{n,d}$ with $F = (f_1,\ldots,f_n)$ and let $\mathcal{B} = (\beta_1,\ldots,\beta_n)$ be a basis function over $\G_{n,d}$ and recall $\tilde{\G}_{n,d}$ from \cref{def: G-tilde}. Then, there exists a function $\rho$ s.t. for every $i\in [n]$ and $G=([n],W,X)\in \tilde{\G}_{n,d}$, we have $f_i(W,X) = \rho(\beta_i(W,X))$. 
\end{theorem}
\cref{theo: rho} states that any node function $f_i$ of a {\GPC} function $F$ can be written in terms of $\beta_i$ over the set of graphs with distinct features $\tilde{\G}_{n,d}$ defined in \cref{def: G-tilde}. \cref{theo: rho} is an equivalent way of saying that subject to having distinct node features in a graph, $\beta_i(W,X)=\beta_i(W',X')$ leads to $f_i(W,X)=f_i(W',X')$. \\

\noindent{\bf(iv) Constructing the GNN.} Using the Representation \cref{theo: rho}, to generate $F$, it suffices to construct a GNN such that $h_i^{(k)}(W,X) = \rho(\beta_i(W,X))$. Due to the construction of $\beta_i$ in \eqref{ab2-app}, a 
GNN can generate it in two iterations. Using $\rho$ in the third iteration then completes the construction. More formally, we set candidates for $\phi_1$, $\phi_2$, and $\phi_3$ (defined in \eqref{gnn-formula}) as follows:
\begin{align}
	\label{aj0}&\phi_1\left(h_j^{(0)},h_{\ell}^{(0)},w_{j,\ell}\right)=\left(\frac{1}{n-1}h_j^{(0)},\,\,\psi_1\left(h_{\ell}^{(0)},w_{j,\ell}\right)\right),\\
	\label{aj1-2}&\phi_2\left(h_i^{(1)},h_j^{(1)},w_{i,j}\right)=\left(\frac{1}{n-1}\left[h_i^{(1)}\right]_{1:d},\psi_2\left(\left[h_j^{(1)}\right]_{1:d},w_{i,j},\left[h_j^{(1)}\right]_{d+1:d+p(\psi_1)}\right)\right),\\
	\label{aj2}&\phi_3\left(h_i^{(2)},h_j^{(2)},w_{i,j}\right)=\frac{1}{n-1}{\rho\left(h_i^{(2)}\right)}.
\end{align}
In \cref{aj1-2}, the notation $[v]_{a:b}$ for  $v=(v_1,\ldots,v_m)$ means $[v]_{a:b}=(v_a,v_{a+1},\ldots,v_b)$. It is straightforward to see that $h_i^{(2)}(W,X)=\beta_i(W,X)$ and $h_i^{(3)}(W,X)=\rho(\beta_i(W,X))$. This results in $h_i^{(3)}(W,X)=f_i(W,X)$ for all $G=([n],W,X)\in \tilde{\G}_{n,d}$, due to \cref{theo: rho}.

The steps (i) to (iv) provide a proof sketch for \cref{theo: distinct} which is the main stand to reach the other results in this paper. At the heart of this analysis lies \cref{theo: rho} which has other theoretical benefits. For example, one might ask if we can generate a continuous permutation-compatible  graph function by using continuous $\phi_k$'s in the GNN? In particular, answering this question is useful when one chooses $\phi_k$ from the class of multi-layer perceptrons (MLPs) as good approximates for continuous functions. The following result provides  an answer. 
\begin{corollary}\label{corr: cont}
	Considering \cref{theo: distinct}, if $F(W,X)$ is continuous with respect to $(W,X)$, then a GNN $H^{(k)}$ with continuous inner functions $\phi_k$ exists that works for the theorem.
\end{corollary}

\section{Feature Crafting to Generate All Graph Functions}\label{sec: beyond}

In this section, we discuss how GNNs can go beyond permutation compatibility and generate any graph function. In brief, we show that  if we augment the identity of each node $i$ to its associated feature $x_i$, i.e. set $\tilde{x}_i=(x_i,i)$, and let $\tilde{X}$ be the concatenation of $\tilde{x}_i\,$s, then for any graph function $F(W,X)$, a GNN exists that receives $W$ and $\tilde{X}$ and outputs $F(W,X)$.
\begin{theorem}\label{theo: arbitrary}
	Suppose $n$ fixed distinct vectors $y_1,\ldots,y_n\in \R^{d_0}$ are given and we augment them to features of all graphs. More formally, for every $G=([n],W,X)\in\G_{n,d}$ with feature matrix $X=(x_1^{\top},\ldots,x_n^{\top})$, we augment $y_i$ to the feature $x_i$ to construct $\tilde{x}_i=(x_i,y_i)\in \R^{d+d_0}$ for all $i\in[n]$. One simple option is $d_0=1$ and $(y_1,\ldots,y_n)=(1,\ldots,n)$. Let $\tilde{X}=(\tilde{x}_1^{\top},\ldots,\tilde{x}_n^{\top})$. Then for every graph function $F(W,X)$, there exists a GNN $H^{(k)}$ with a finite $k\geq 0$ over $\G_{n,d+d_0}$ such that $H^{(k)}(W,\tilde{X}) = F(W,X)$ for all $G=([n],W,X)\in\G_{n,d}$.
\end{theorem}
The proof of \cref{theo: arbitrary} is built on the framework of basis functions described in Section~\ref{sec: char}. In brief,  the basis function output  $\beta_i(W,\tilde{X})$  uniquely determines the triple $(W,X,i)$ and thus a GNN can achieve any graph function as the next step after generating $\beta_i(W,\tilde{X})$.\\

\noindent{\bf How do the augmentations in \cref{theo: arbitrary p.c} and \cref{theo: arbitrary} differ?} We described two types of augmentation in \cref{theo: arbitrary p.c} and \cref{theo: arbitrary} which we call {\it soft-coded} and {\it hard-coded} augmentation, respectively. In both cases, distinct nodes in a graph receive distinctly augmented values. However, in the hard-coded case, a fixed and unique value is augmented to the feature of node $i$ for \emph{all} the graphs in $\G_{n,d}$, while this is not necessarily the case for a soft-coded augmentation. This difference  can be stated in  logical terms as follows:
\begin{align}
	\label{hard}\text{hard-coded}:&\quad \exists y_1,\ldots,y_n \quad \forall G\in \G_{n,d}\quad \text{$G$ is augmented by $y_i\,$s,}\\
	\label{soft}\text{soft-coded}:&\quad \forall G\in \G_{n,d} \quad \exists y_1,\ldots,y_n\quad \text{$G$ is augmented by $y_i\,$s}.
\end{align}
To see the computational difference between these two augmentations, consider the example illustrated in \cref{fig: hard-soft} -(a) and (b). In each of the \cref{fig: hard-soft} -(a) and (b), the right graph is obtained by swapping the labels $1$ and $2$ in the left graph. A hard-coded augmentation means $(y_1,y_2,y_3,y_4)=(1,2,3,4)$ for both labelings of the graph. This is sensitive to node labeling. In other words, omitting the node labels, one sees two different sets of node features for the same graph in \cref{fig: hard-soft} -(b). 
In contrast, under a soft-coded augmentation, we can have $(y_1,y_2,y_3,y_4)=(1,2,3,4)$ for the left graph in \cref{fig: hard-soft} -(a) and $(y_1,y_2,y_3,y_4)=(2,1,3,4)$ for the right graph. Unlike the hard-coded augmentation, by ignoring the labels, we see the same set of node features for the same graph. Hence, the soft-coded augmentation can be set independently of node labeling. This elaboration reveals that, under a hard-coded augmentation, building a full dataset for training the GNN needs potentially $n!$ samples corresponding to all the $n!$ possible labelings of the same graph. This is the same cost when one treats a graph data as a matrix pair $(W,X)$ and gives it to an ordinary feed-forward neural net, ignoring the graph-based structure of the GNN.
\begin{figure*}[t]
	\centering
	\subfigure[Soft-coded\label{fig: soft}  ]{\includegraphics[width=0.23\linewidth]{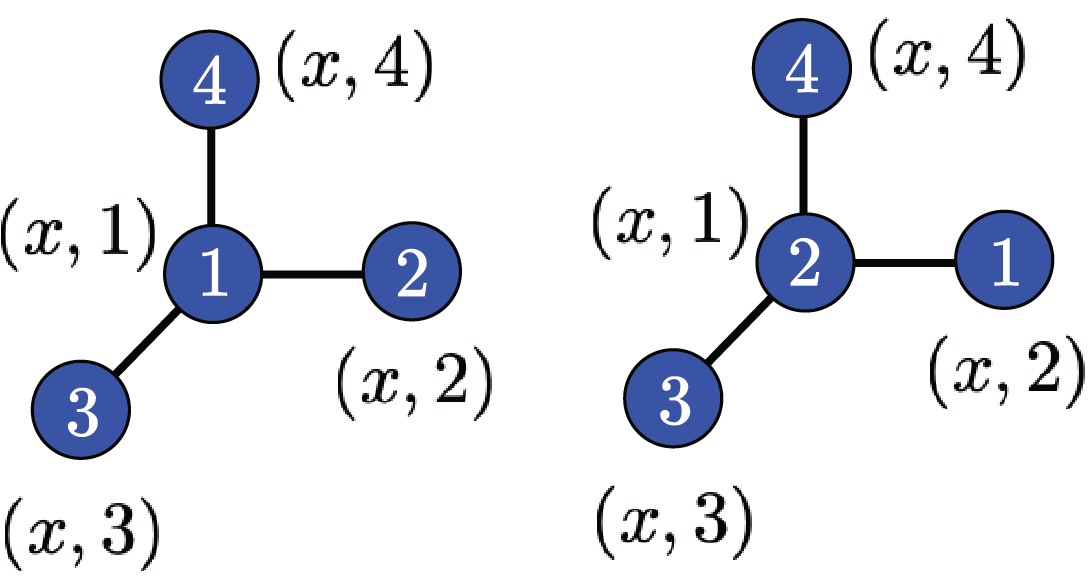}} \hspace{0.07cm}
	\subfigure[Hard-coded\label{fig: hard} ]{\includegraphics[width=0.23\linewidth]{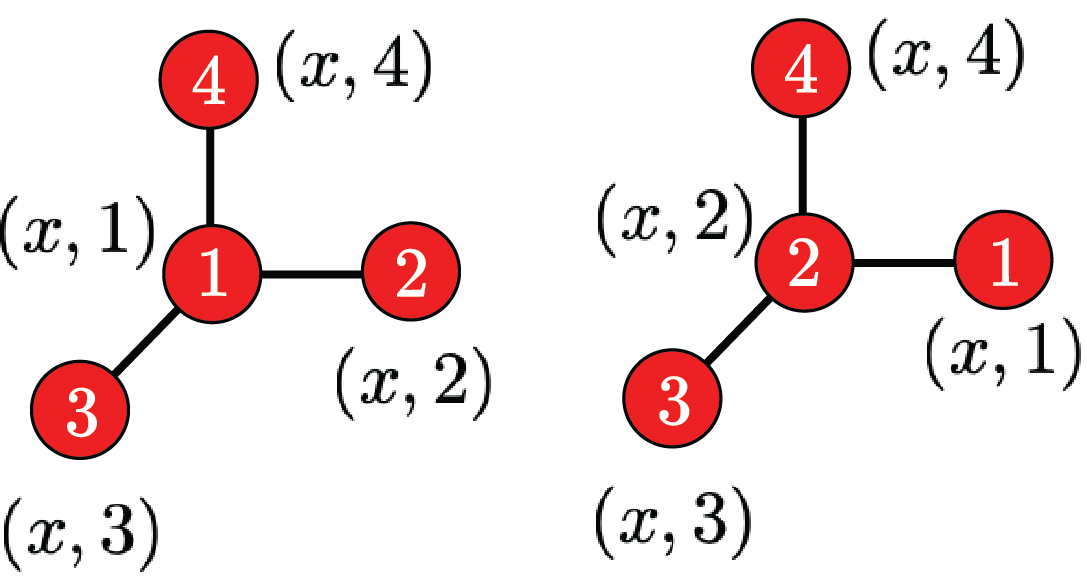}} \hspace{0.07cm}
	\subfigure[Identical labels ]{\includegraphics[width=0.23\linewidth]{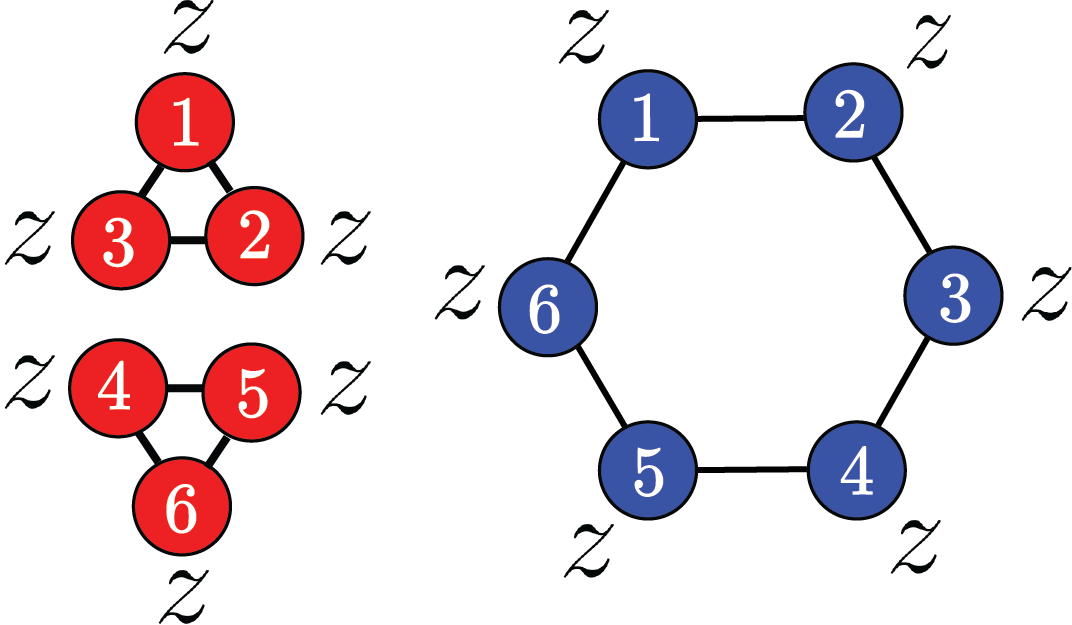}} \hspace{0.07cm}
	\subfigure[Non-identical labels.]{\includegraphics[width=0.23\linewidth]{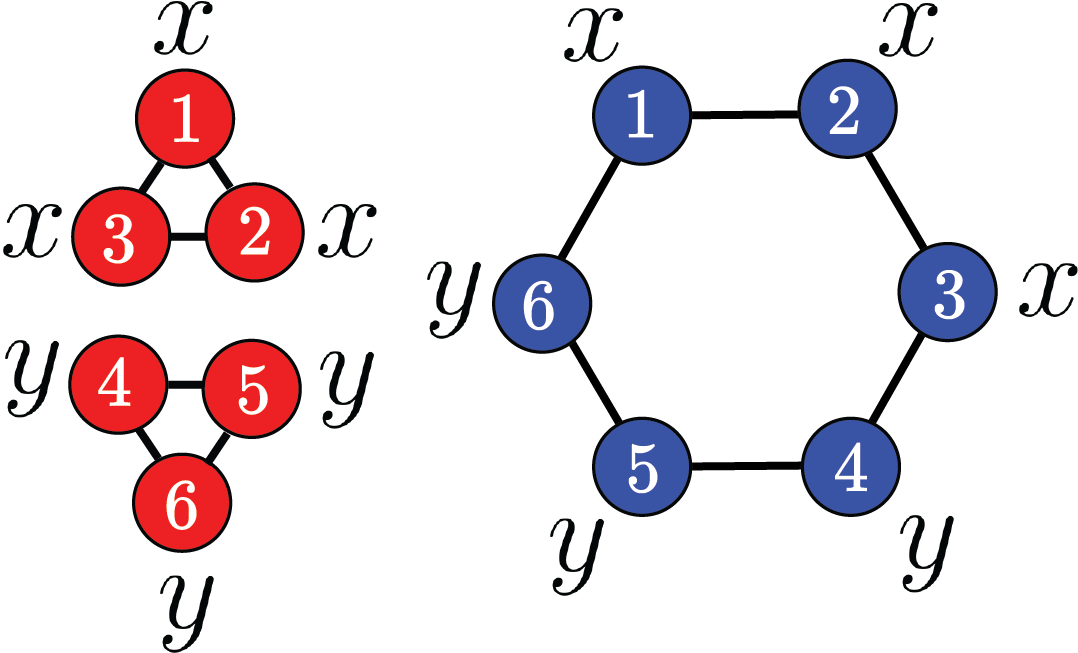}} 
	\caption{(a) and (b): The difference between the soft-coded and hard-coded augmentations. (c) and (d): Illustration of the hexagon versus the two-triangle graph, where 1-WL cannot separate between the two under identical labels (c) and can separate them under non-identical labels (d).}
	\label{fig: hard-soft}
\end{figure*}

From \cref{sec: intuition}, permutation compatibility translates into independency  from node identities. Moreover, we know that hard-coded augmentation essentially means revealing node identities to the GNN. In this way, \cref{theo: arbitrary} requires  revealing all the node identities to the GNN via a hard-coded augmentation. From the discussion above, we know that this is costly. Now the question is: In the case that $F$ depends only on a subset of node identities, can we use a hard-coded augmentation only on that subset of nodes instead of all the nodes? The following informally-stated corollary gives a positive answer. We will see the implications of this result on the shortest-path problem later in Section~\ref{sec: SP}. For a formal statement and proof, we refer to \cref{app: corr-arbitrary}.
\begin{corollary}[Informal]\label{corr: hybrid}
	Suppose a graph function $F$ does not depend on the node labels except for $i_1,\ldots,i_k\in [n]$, i.e., \eqref{a23} holds for every $\pi\in S_n$ that satisfies $\pi(j)=j$ for $j\in \{i_1,\ldots,i_k\}$. Then $F$ can be generated by a GNN under a hard-coded augmentation for $i_1,\ldots,i_k$ and a soft-coded augmentation for other nodes.
\end{corollary}
\section{Weisfeiler-Lehman Test Versus GNN}\label{sec: wl-connection}
\looseness=-1 Recent works  \cite{xu2018how,morris2019weisfeiler} have explained the expressiveness of GNNs via the Weisfeiler-Lehman (WL)  isomorphism test. We now discuss the connection between permutation compatibility and WL test. 

Starting with all nodes of identical labels/colors, the 1-WL test iteratively and through message passing assigns new labels to nodes in the form of multi-sets. If at any iteration of the procedure, the labeling of two graphs differ, they are certainly not isomorphic. However, it can very well happen that the 1-WL test produces the same labeling at every single iteration while the two graphs are not isomorphic, e.g., the hexagon and the two-triangle graph shown in \cref{fig: hard-soft}-(c). Similarly, a GNN that aims to compute the min-cut function (an instance of a permutation-compatible function) produces the same value for both graphs if it starts with identical node features. In fact, this is a general phenomenon: if the 1-WL fails  then a permutation-compatible function with identical features cannot be generated by a GNN. To formalize this equivalency, which is essentially the same result as in \cite{xu2018how,morris2019weisfeiler}, we need to set a notation for graphs under identical node features and also a formal proof in our general setting. To this end, consider the following definition.
\begin{definition}
	Fix a constant vector $c\in \R^d$. Let us define
	\begin{align*}
		\mathcal{G}^{c}_{n,d} = \left\{G=([n],W,C)\in \mathcal{G}_{n,d} \mid W\in \{0,1\}^{n \times n}, C=(c^{\top},\ldots,c^{\top})\right\}.
	\end{align*}
\end{definition}
Moreover, we need to precisely define what it means that a GNN can not separate between two graphs. The following definition specifies this notion using the notation $\#\{\cdot\}$. This notation refers to a multiset of elements. A multiset generalizes the concept of a set by allowing the repetition of elements. For a formal definition, we refer to \cref{app: notation}. 
\begin{definition}
	Suppose $G_1,G_2 \in \mathcal{G}_{n,d}$, with $G_1=([n],W_1,X_1)$ and $G_2=([n],W_2,X_2)$. We say that GNNs over $\mathcal{G}_{n,d}$ cannot separate $G_1$ and $G_2$ if for every $k\geq0$ and every GNN $H^{(k)}=(h_1^{(k)},\ldots,h_n^{(k)})$, we have $\#\{h_i^{(k)}(W_1,X_1) \mid i\in [n]\} = \#\{h_i^{(k)}(W_2,X_2) \mid i\in [n]\}$.
\end{definition}
 The following result formalises the earlier statement. 
\begin{proposition}
	\label{theo: wl}
	Suppose $G_1,G_2 \in \mathcal{G}^{c}_{n,d}$ are graphs without isolated nodes. Then GNNs over $\mathcal{G}^{c}_{n,d}$ cannot separate $G_1$ and $G_2$ if and only if 1-WL cannot distinguish between $G_1$ and $G_2$.  
\end{proposition}
In light of the above theorem, feature augmentation for GNNs in general is unavoidable as there are permutation compatible functions (such as min-cut) that cannot be generated by any GNN under identical node features.  
\section{Dynamic Programming Versus GNN}\label{sec: dp}
Dynamic Programming (DP) is an iterative mechanism that evolves the state of some entities by starting at initial states and updating the current state of each entity as a function of the current state of others. Treating $h_i^{(k)}$ as the state of node $i$ in the $k$-the iteration, GNNs also lie in this category. Therefore, one would expect a close connection between GNN and DP. One possible approach to explain the connection between GNN and DP is to quantify the connection between their iterative structure \cite{xu2020what,dudzik2022graph,velickovic2020Neural}. However, our results are of different nature. For any algorithmic procedure on graph, DP or otherwise, for which there is an output graph function $F$, we discuss how GNNs can  generate $F$. Due to \cref{theo: arbitrary}, this is possible for any graph function $F$. The only thing to consider further is that if $F$ is permutation compatible or if it depends on identity of only a subset of nodes (as formalised in \cref{corr: hybrid}), we can avoid the costly hard-coded augmentation of \cref{theo: arbitrary} and use \cref{theo: arbitrary p.c} or \cref{corr: hybrid} instead. Hence, given $F_{\operatorname{DP}}(W,X)$ as the output of a DP, based on whether $F_{\operatorname{DP}}$ is permutation compatible or otherwise, we can generate it under a proper augmentation using \cref{theo: arbitrary p.c}, \cref{theo: arbitrary}, and \cref{corr: hybrid}. As a particular example, let us explain the situation for the shortest path problem in \cref{sec: SP}.

\subsection{Shortest-Path-Length Problem}\label{sec: SP}
In connection with DP, in this section, we consider the shortest-length problem as the output of the Bellman-Ford dynamic program. We show that GNNs are able to generate the shortest-path-length function to a source node as long as the source node is identified through the node features. This identification of the source node is trivially required by any algorithm. For a formal argument, consider the distance-to-node-$1$ function $F$ in \cref{node1} of \cref{ex: funcs}. Note that $F$ is not permutation compatible since it does not necessarily satisfy \eqref{a23} for a $\pi$ with $\pi(1)\neq 1$. However, \eqref{a23} is satisfied over all permutations $\pi$ s.t.  $\pi(1)= 1$. This is formalised in the following lemma.
\begin{lemma}\label{lem: SP}
	Let $F(W,X)$ be the distance-to-node-1 function defined in \cref{node1} of \cref{ex: funcs}. Since $F(W,X)$ ignores $X$, we use the notation $F(W)=(f_1(W),\ldots,f_n(W))$. Then for every $\pi\in \nabla_1$ and $i \in [n]$, we have $f_{\pi{(i)}} (W) = f_i(\sigma_{\pi}(W))$ for every weight matrix $W$. 
\end{lemma}
Based on \cref{lem: SP}, \cref{corr: hybrid} implies that hard-coded augmentation is only needed on node $1$ to generate the $F$ which equivalently means revealing the identity of node $1$ to the GNN. \cref{corr: hybrid} also requires a soft-coded augmentation on other nodes. The latter, however, turns out to be unnecessary due to following proposition which summarises our results on the shortest path problem. 
\begin{proposition}[Informal]\label{prop: SP}
	Letting $F(W)$ to be the distance-to-node-1 function, (i) $F$ is not permutation compatible. (ii) Using a fixed feature matrix $X_0=(y_1,\ldots,y_n)$ for all graphs, a GNN can generate $F$ if and only if $y_1$ is distinct from other $y_i\,$s.
\end{proposition}
See \cref{app: prop-SP} for a formal statement and proof. Note that $X_0=(1,0,\ldots,0)$ works well for \cref{prop: SP} while for example $X_0=(1,1,0,\ldots,0)$ fails. The latter is intuitively trivial since it gives no clue to the GNN in identifying the source node.

\section*{Conclusion} 
In this paper, we provided an analytic framework to study the representation power of GNNs. We introduced the fundamental notion of permutation compatibility that fully characterizes what graph functions may (or may not) be generated by a GNN.   

\section*{Acknowledgement} 
We would like to thank Javid Dadashkarimi,  Petar Veli\u{c}kovi\'{c}, and Amin Saberi for their comments and discussions that led to the current version. 

The work of M. Fereydounian and H. Hassani is funded by DCIST, NSF CPS-1837253, and NSF CIF-1943064 and  NSF CAREER
award CIF-1943064, and Air Force Office of Scientific Research Young Investigator Program
(AFOSR-YIP) under award FA9550-20-1-0111. 

The research of A. Karbasi is supported by  NSF (IIS-1845032), ONR (N00014-
19-1-2406), and the AI Institute for Learning-Enabled Optimization at Scale (TILOS).

\bibliography{ref}
\bibliographystyle{plain}

\newpage
\appendix
\onecolumn


\section{Preliminaries for appendices}\label{app: notation}
\noindent{\bf Notation.} We denote by $f\circ g$, the composition of functions $f$ and $g$, meaning that $f\circ g(x) = f(g(x))$. Moreover, for a vector $v=(v_1,\ldots,v_m)$, let $[v]_{a:b}$ for $a\leq b$, denote the sub-vector consisting of the elements with indices starting from $a$ to $b$, that is, $[v]_{a:b}=(v_a,v_{a+1},\ldots,v_b)$. In this notation, we also use ``end'' to refer to the last index, i.e., $[v]_{a:\text{end}}=[v]_{a:m}$.

In the following, we provide formal definitions that are used in the proofs. We start by the formal definition of a multiset. 
\begin{definition}[Multiset]
	Multisets generalize the concept of a set in which the repetition of elements is allowed. A multiset is a pair $Y=(S,m)$, where $S$ is the underlying set of the distinct elements of $Y$ and $m:S\to \mathbb{Z}_{\geq{1}}$ is the function that indicates the multiplicity of each element.
\end{definition}
For the ease of explanation, we set the following notation for a multiset.
\begin{definition}[Notation $\#\{\cdot\}$]\label{def: hash}
	Suppose $y_1,\ldots,y_n$ are some objects with possibly repeated elements, then the multiset containing $y_1,\ldots,y_n$ is denoted by $\#\{y_1,\ldots,y_n\}$. More formally, if the set of distinct elements among $y_1,\ldots,y_n$ is $\{y_{i_1},\ldots,y_{i_r}\}$ with $m(y_{i_1}),\ldots,m(y_{i_n})$ denoting the multiplicity of the elements, then 
	\begin{align}
		\#\{y_1,\ldots,y_n\} = (\{y_{i_1},\ldots,y_{i_r}\},m).
	\end{align}
	The notation $\#\{\cdot\}$ considers the repetition but ignores the order of the elements. Therefore, for the sequences $(y_1,\ldots,y_n)$ and $(z_1,\ldots,z_m)$ of possibly repeated elements, the equation
	\begin{align}
		\#\{y_1,\ldots,y_n\} = \#\{z_1,\ldots,z_m\}
	\end{align}
	holds if and only if $m=n$ and there exists a permutation $\pi\in S_n$ such that $\left(y_1,\ldots,y_n\right) = \left(z_{\pi(1)},\ldots,z_{\pi(n)}\right)$.
\end{definition}

\section{Formal statement and proof of \cref{prop: informal}}\label{sec: replacing}
To formalise a GNN with aggregator operator \eqref{AGG}, we define the Extended-GNN analogous to \cref{def: gnn} as follows.
\begin{definition}[Extended-GNN]\label{def: gnn2}
	An Extended Graph Neural Network (Extended-GNN) is an iterative mechanism that generates a sequence of functions $E^{(k)}$, $k\geq 0$, over $\G_{n,d}$ in the following manner.  For $G=([n],W,X)\in\G_{n,d}$, the function $E^{(k)}(W,X)=(e_1^{(k)}(W,X),\ldots,e_n^{(k)}(W,X))$ is given as 
	\begin{align}
	&\text{if }k=0:\, e_i^{(0)} = x_i,\\
	\label{gnn-formula2}&\text{if }k\geq 1:\,e_i^{(k)}=\Phi_k\left(e_i^{(k-1)},\#\left\{(e_j^{(k-1)},w_{i,j})\mid j\in[n]_{-i}\right\}\right),
	\end{align}
	for some functions $\Phi_k,\,k\geq 1$, where for each $k$, the outputs of $\Phi_k$ lie in some Euclidean vector space. 
\end{definition} 
\begin{proposition}[Formal]\label{prop: extended gnn}
	The function-class of GNNs is equivalent to the function-class of Extended-GNNs. More formally, suppose a graph function $F$ over $\G_{n,d}$ is given. Then, there exists a GNN, denoted by $H^{(k')}$, over $\G_{n,d}$ such that $H^{(k')}(W,X)=F(W,X)$ for all $G=([n],W,X)\in\G_{n,d}$ if and only if there exists an Extended-GNN $E^{(k)}$ over $\G_{n,d}$ such that $E^{(k)}(W,X)=F(W,X)$ for all $G=([n],W,X)\in\G_{n,d}$.
\end{proposition}
\begin{proof}[{\bf Proof of \cref{prop: extended gnn}}]
	First note that the aggregator \eqref{gnn-formula} is a special case of \eqref{gnn-formula2}. Therefore, the GNN defined in \cref{def: gnn} is a special case of the Extended-GNN defined in \cref{def: gnn2} and thus one side of the claim is immediate. To prove the other direction, suppose an Extended-GNN $E^{(k)}(W,X)$ over $G_{n,d}$ is given. We show that there exists a GNN $H^{(k')}(W,X)$ such that for every $r\geq0$: $E^{(r)}(W,X)=H^{(2r)}(W,X)$ for all $G=([n],W,X)\in\G_{n,d}$. To this end, let us set some notations. Consider $E^{(k)}=(e_1^{(k)},\ldots,e_n^{(k)})$ and suppose the outputs of $e_i^{(k)}$ lie in $\R^{\alpha_k}$ for some $\alpha_k$ and fix multiset-equivalent functions (MEFs) $\psi_k\in \Psi_{\alpha_{k-1}+1,n-1}$, defined in \cref{def: cef}. Also note that candidates for MEFs are provided in \cref{prop: cef}. 
	
	As the first step, we claim that for all $k\geq 1$, there exists a function $\Theta_k$ such that 
	\begin{align}\label{aat}
	\Phi_k\left(e_i^{(k-1)},\#\left\{(e_j^{(k-1)},w_{i,j})\mid j\in[n]_{-i}\right\}\right) = \Theta_k\left(e_i^{(k-1)},\sum_{j\in [n]_{-i}}\psi_k\left(e_j^{(k-1)},w_{i,j}\right)\right).
	\end{align}
	To prove \eqref{aat}, it suffices to show that having
	\begin{align}\label{aat2}
	\left(e_i^{(k-1)},\sum_{j\in [n]_{-i}}\psi_k\left(e_j^{(k-1)},w_{i,j}\right)\right)=\left(\hat{e}_i^{(k-1)},\sum_{j\in [n]_{-i}}\psi_k\left(\hat{e}_j^{(k-1)},\hat{w}_{i,j}\right)\right)
	\end{align}
	leads to
	\begin{align}\label{aat3}
	\Phi_k\left(e_i^{(k-1)},\#\left\{(e_j^{(k-1)},w_{i,j})\mid j\in[n]_{-i}\right\}\right) = \Phi_k\left(\hat{e}_i^{(k-1)},\#\left\{(\hat{e}_j^{(k-1)},\hat{w}_{i,j})\mid j\in[n]_{-i}\right\}\right).
	\end{align}
	To show that \eqref{aat2} leads to \eqref{aat3}, note that from \eqref{aat2}, we have $e_i^{(k-1)}=\hat{e}_i^{(k-1)}$, and 
	\begin{align}\label{aat5}
	\sum_{j\in [n]_{-i}}\psi_k\left(e_j^{(k-1)},w_{i,j}\right)=\sum_{j\in [n]_{-i}}\psi_k\left(\hat{e}_j^{(k-1)},\hat{w}_{i,j}\right). 
	\end{align} 
	Having \eqref{aat5}, the definition of an MEF (see \cref{def: cef}) implies that  
	\begin{align}\label{aat4}
	\#\left\{(e_j^{(k-1)},w_{i,j})\mid j\in[n]_{-i}\right\} = \#\left\{(\hat{e}_j^{(k-1)},\hat{w}_{i,j})\mid j\in[n]_{-i}\right\}.
	\end{align}
	\cref{aat4} together with $e_i^{(k-1)}=\hat{e}_i^{(k-1)}$ proves \eqref{aat3}. Hence, we showed the existence of the function $\Theta_k$ satisfying \eqref{aat}.

	Next, we construct a GNN to generate a given Extended-GNN $E^{(k)}$, using the function $\Theta_k$ introduced above. To this end, given an Extended-GNN $E^{(k)}=(e_1^{(k)},\ldots,e_n^{(k)})$, we construct a GNN $H^{(k')}=(h_1^{(k')},\ldots,h_n^{(k')})$ such that $h_i^{(2r)}=e_i^{(r)}$ for all $r\geq0$ and $i\in [n]$. This construction is as follows: For $r\geq 1$, set
	\begin{align}
	\label{aw1}\phi_{2r-1}\left(h_i^{(2r-2)},h_{j}^{(2r-2)},w_{i,j}\right) &= \left(\frac{1}{n-1}h_i^{(2r-2)},\psi_r\left(h_{j}^{(2r-2)},w_{i,j}\right)\right),\\
	\label{aw2}\phi_{2r}\left(h_i^{(2r-1)},h_{j}^{(2r-1)},w_{i,j}\right) &=\frac{1}{n-1}\Theta_r\left(h_i^{(2r-1)}\right).
	\end{align}
	Due to \eqref{aw1} and \eqref{aw2}, for all $i\in [n]$, we have
	\begin{align}
	\label{aq1}h_{i}^{(2r-1)} &= \sum_{j\in [n]_{-i}} \phi_{2r-1}\left(h_i^{(2r-2)},h_{j}^{(2r-2)},w_{i,j}\right) =\left(h_i^{(2r-2)},\sum_{j\in [n]_{-i}}\psi_r\left(h_{j}^{(2r-2)},w_{i,j}\right)\right),\\
	\label{aq2}h_{i}^{(2r)} &= \sum_{j\in [n]_{-i}} \frac{1}{n-1}\Theta_r\left(h_i^{(2r-1)}\right)=\Theta_r\left(h_i^{(2r-2)},\sum_{j\in [n]_{-i}}\psi_r\left(h_{j}^{(2r-2)},w_{i,j}\right)\right).
	\end{align}
	Now, using the relation between $\Theta_r$ and $\Psi_r$ in \eqref{aat}, we conclude from \eqref{aq2} that
	\begin{align}\label{389}
	h_{i}^{(2r)} = \Phi_r\left(h_i^{(2r-2)},\#\left\{(h_j^{(2r-2)},w_{i,j})\mid j\in[n]_{-i}\right\}\right).
	\end{align}
	Next, we claim that $h^{(2r)}_i=e_i^{(r)}$ for all $r\geq 0$ and $i\in [n]$. We prove this by induction on $r$. For $r=0$, we have $h^{(0)}_i=e_i^{(0)}=x_i$.  Given the induction hypothesis for $r-1$, we have $h^{(2r-2)}_i=e_i^{(r-1)}$ for all $i\in [n]$. Replacing this into \eqref{389} leads to
	\begin{align}
	h_{i}^{(2r)} = \Phi_r\left(e_i^{(r-1)},\#\left\{(e_j^{(r-1)},w_{i,j})\mid j\in[n]_{-i}\right\}\right)=e_i^{(r)}.
	\end{align}
	Hence, we have $E^{(r)}(W,X)=H^{(2r)}(W,X)$ for all $r\geq 0$ and all $G=([n],W,X)\in\G_{n,d}$.
\end{proof}

\section{Proof of Theorem~\ref{theo: necessity}}\label{app: theo-necessity}
\begin{proof}
	Consider a GNN  $H^{(k)}=(h_1^{(k)},\ldots,h_n^{(k)})$ for some $k\geq 0$ over $\G_{n,d}$. We want to show that $H^{(k)}\in \mathcal{F}_{n,d}$. To this end, we must show that for every given $\pi\in S_n$, the following holds for all $G=([n],W,X)\in \G_{n,d}$ and all $i \in [n]$:
	\begin{align}
	\label{ht}h_{\pi(i)}^{(k)}(W,X) = h_{i}^{(k)}\left(\sigma_{\pi}(W),\lambda_{\pi}(X)\right).
	\end{align}
	We prove \eqref{ht} by induction on $k$. For $k=0$, we have $h_i^{(0)}(W,X)=x_i$ for all $i\in [n]$. Hence,
	\begin{align}
	h_i^{(0)}\left(\sigma_{\pi}(W),\lambda_{\pi}(X)\right) = x_{\pi(i)} = h_{\pi(i)}^{(0)}\left(W,X\right).
	\end{align}
	Given the induction hypothesis for $k-1$, we must show that \eqref{ht} holds for $k$. From \eqref{gnn-formula}, we have
	\begin{align}
	h_{\pi(i)}^{(k)}(W,X) &= \sum_{\ell\in [n]_{-\pi(i)}} \phi_{k}\left(h_{\pi(i)}^{(k-1)}\left(W,X\right),h_{\ell}^{(k-1)}\left(W,X\right),w_{\pi(i),\ell}\right)\\
	\label{a1}    &= \sum_{j\in [n]_{-i}} \phi_{k}\left(h_{\pi(i)}^{(k-1)}\left(W,X\right),h_{\pi(j)}^{(k-1)}\left(W,X\right),w_{\pi(i),\pi(j)}\right)\\
	\label{a2}	  &= \sum_{j\in [n]_{-i}}\phi_{k}\left(h_{i}^{(k-1)}\left(\sigma_{\pi}(W),\lambda_{\pi}(X)\right),h_{j}^{(k-1)}\left(\sigma_{\pi}(W),\lambda_{\pi}(X)\right),w_{\pi(i),\pi(j)}\right),
	\end{align}
	where equation~\eqref{a1} is obtained by putting $j = \pi^{-1}(\ell)$ or equivalently $\ell=\pi(j)$ (note that any permutation is bijective by definition). Moreover, \cref{a2} holds due to the induction hypothesis for $k-1$. By letting $W'=\sigma_{\pi}(W)$ and $X'=\lambda_{\pi}(X)$, we note that $w_{i,j}'=w_{\pi(i),\pi(j)}$. Replacing these values in \eqref{a2} results in
	\begin{align}
	h_{\pi(i)}^{(k)}(W,X) &= \sum_{j\in [n]_{-i}}\phi_{k}\left(h_{i}^{(k-1)}\left(W',X'\right),h_{j}^{(k-1)}\left(W',X'\right),w_{i,j}'\right)= h_i^{(k)}\left(W',X'\right).
	\end{align}
	This concludes the induction and hence \eqref{ht} is proven. As a result, $H^{(k)}\in \mathcal{F}_{n,d}$.
\end{proof}

\section{Proof of \cref{theo: distinct}}\label{app: theo-distinct}
	\begin{proof}
	Suppose $F\in\mathcal{F}_{n,d}$ and let $F(W,X) = (f_1(W,X),\ldots,f_n(W,X))$. Consider a basis function $\mathcal{B}(W,X)=(\beta_1(W,X),\ldots,\beta_n(W,X))$ over $\G_{n,d}$ as defined in \cref{def: beta}.  
	Due to Theorem~\ref{theo: rho}, we can conclude that there exists a function $\rho$ such that for every $i\in [n]$, the following holds for all $G=([n],W,X)\in \tilde{\G}_{n,d}$: 
	\begin{equation}\label{aaa5}
	f_{i}(W,X) = \rho\left(\beta_{i}(W,X)\right).
	\end{equation}
	We introduce a GNN with three iterations, i.e., we introduce functions $\phi_1, \phi_2, \phi_3$ such that $h_{i}^{(3)} = \rho(\beta_{i}(W,X))$  for all $i\in [n]$ and as a result, $H^{(3)}(W,X) = F(W,X)$ for all $G=([n],W,X)\in \tilde{\G}_{n,d}$. First consider the definition of $\beta_{i}(W,X)$ and re-write \eqref{aaa5} as
	\begin{equation}
	f_{i}(W,X) = \rho\left(\beta_{i}(W,X)\right) = \rho\left(x_{i},\sum_{j\in [n]_{-i}}\psi_2\left(x_j,w_{i,j},\sum_{\ell\in [n]_{-j}}\psi_1\left(x_{\ell},w_{j,\ell}\right)\right)\right).
	\end{equation}
	
	Define the function $\phi_1$ of the GNN as
	\begin{align}\label{p1}
	\phi_1\left(h_j^{(0)},h_{\ell}^{(0)},w_{j,\ell}\right)=\left(\frac{1}{n-1}h_j^{(0)},\,\,\psi_1\left(h_{\ell}^{(0)},w_{j,\ell}\right)\right).
	\end{align}
	This leads to the following formula for all $j\in [n]$:
	\begin{align}
	h_j^{(1)} &= \sum_{\ell\in [n]_{-j}}\phi_1\left(h_j^{(0)},h_{\ell}^{(0)},w_{j,\ell}\right)\\
	&= \sum_{\ell\in [n]_{-j}}\left(\frac{1}{n-1}h_j^{(0)},\,\,\psi_1\left(h_{\ell}^{(0)},w_{j,\ell}\right)\right) = \left(x_j,\,\,\sum_{\ell\in [n]_{-j}}\psi_1\left(x_{\ell},w_{j,\ell}\right)\right).
	\end{align}
	Using the notation $[\cdot]_{a:b}$ introduced in \cref{app: notation}, define the function $\phi_2$ of the GNN as
	\begin{align}\label{p2}
	\phi_2\left(h_i^{(1)},h_j^{(1)},w_{i,j}\right)=\left(\frac{1}{n-1}\left[h_i^{(1)}\right]_{1:d}, \,\,\psi_2\left(\left[h_j^{(1)}\right]_{1:d},w_{i,j},\left[h_j^{(1)}\right]_{d+1:\text{end}}\right)\right).
	\end{align}
	Hence, 
	\begin{align}
	h_i^{(2)} &= \sum_{j\in [n]_{-i}}\phi_2\left(h_i^{(1)},h_j^{(1)},w_{i,j}\right) \\
	&= \sum_{j\in [n]_{-i}}\left(\frac{1}{n-1}\left[h_i^{(1)}\right]_{1:d}, \,\,\psi_2\left(\left[h_j^{(1)}\right]_{1:d},w_{i,j},\left[h_j^{(1)}\right]_{d+1:\text{end}}\right)\right)\\
	&= \left(x_i,\,\,\sum_{j\in [n]_{-i}}\psi_2\left(x_j,w_{i,j},\sum_{\ell\in [n]_{-j}}\psi_1\left(x_{\ell},w_{j,\ell}\right)\right)\right)=\beta_i(W,X).
	\end{align}
	Finally, define the function $\phi_3$ of the GNN as
	\begin{align}\label{p3}
	\phi_3\left(h_i^{(2)},h_j^{(2)},w_{i,j}\right)=\frac{1}{n-1}\rho\left(h_i^{(2)}\right),
	\end{align}
	which results in 
	\begin{align}
	\label{aaa4}h_i^{(3)}(W,X) &= \sum_{j\in [n]_{-i}}\phi_3\left(h_i^{(2)},h_j^{(2)},w_{i,j}\right)=  \sum_{j\in [n]_{-i}}\frac{1}{n-1}\rho\left(h_i^{(2)}\right) =\rho\left(h_i^{(2)}\right) =\rho\left(\beta_i(W,X)\right).
	\end{align}
	
	Putting \eqref{aaa5} and \eqref{aaa4} together, we conclude that $h_i^{(3)}(W,X)=f_i(W,X)$ for all $i\in [n]$ and all $G=([n],W,X)\in \tilde{\G}_{n,d}$ and consequently $H^{(3)}(W,X)=F(W,X)$.
\end{proof}

\section{Proof of Theorem~\ref{theo: arbitrary p.c}}\label{app: theo-arbitrary p.c}
\begin{proof}
	Define a graph function $F'=(f'_1,\ldots,f'_n)$ over $\G_{d+d_0}$ such that $F'(W,X^{\ast})=F(W,X)$, i.e., $f'_i(W,X^{\ast})=f_i(W,X)$ for all $i\in [n]$. Note that $F'$ ignores the last $d_0$ coordinates of node features and returns $F$. Hence, for all $i\in [n]$, we have 
	\begin{align}
		\label{z1} f'_i(\sigma_{\pi}(W),\lambda_{\pi}(X^{\ast}))=f_i(\sigma_{\pi}(W),\lambda_{\pi}(X)) = f_{\pi(i)}(W,X) = f'_{\pi(i)}(W,X^{\ast}),
	\end{align}  
	where the first and the third equality in \eqref{z1} hold due to the definition of $F'$ and the second equality holds due to the permutation compatibility of $F$. \cref{z1} then implies that $F'$ is permutation compatible. Therefore, due to \cref{theo: distinct}, there exists $H^{(k)}$ such that $H^{(k)}(W,Z)=F'(W,Z)$ for all $G=([n],W,Z)\in \tilde{\G}_{n,d+d_0}$. Note that for any $G=([n],W,X)\in \G_{n,d}$, we have $G=([n],W,X^{\ast})\in \tilde{\G}_{n,d+d_0}$. As a result, $H^{(k)}(W,X^{\ast})=F'(W,X^{\ast}) =F(W,X)$ for all $G=([n],W,X)\in \G_{n,d}$.
\end{proof}

\section{Proof of Proposition~\ref{prop: acf-equivalent}}\label{app: prop-acf-equivalent}

\begin{lemma}\label{lem: intermediate2}
	Consider a graph $G=([n],W,X)$ and suppose $\pi,\pi_1,\pi_2\in S_n$. Then $\lambda_{\pi_2 \circ \pi_1}(X) = \lambda_{\pi_1} \circ \lambda_{\pi_2} (X)$ and $\sigma_{\pi_2 \circ \pi_1}(W) = \sigma_{\pi_1} \circ \sigma_{\pi_2} (W)$ (note how the order of the composition changes). In particular, $\lambda_{\pi^{-1}}(X) = \lambda_{\pi}^{-1} (X)$ and $\sigma_{\pi^{-1}}(W) = \sigma_{\pi}^{-1} (W)$. 
\end{lemma}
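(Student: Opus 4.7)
The plan is to prove both identities by unfolding the definitions of $\lambda_\pi$ and $\sigma_\pi$ from Definition~\ref{def: auto} and comparing entries. Since the statement is purely combinatorial, the approach is direct elementwise verification, with the one point of care being the (anti-)ordering of composition: re-indexing the rows of $X$ by $\pi$ is contravariant in the sense that composing re-indexings reverses the order of the underlying permutations.

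First I would handle the $\lambda$ identity. By definition, $[\lambda_{\pi_2 \circ \pi_1}(X)]_i = x_{(\pi_2 \circ \pi_1)(i)}^\top = x_{\pi_2(\pi_1(i))}^\top$ for every $i \in [n]$. On the other hand, writing $Y = \lambda_{\pi_2}(X)$ so that $[Y]_j = x_{\pi_2(j)}^\top$, we get $[\lambda_{\pi_1}(Y)]_i = [Y]_{\pi_1(i)} = x_{\pi_2(\pi_1(i))}^\top$. The two matrices agree row by row, yielding $\lambda_{\pi_2 \circ \pi_1}(X) = \lambda_{\pi_1} \circ \lambda_{\pi_2}(X)$.

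Next I would repeat the same argument at the $(i,j)$-entry level for $\sigma$. On the one side, $[\sigma_{\pi_2 \circ \pi_1}(W)]_{i,j} = w_{(\pi_2 \circ \pi_1)(i),\,(\pi_2 \circ \pi_1)(j)}$, and on the other, setting $V = \sigma_{\pi_2}(W)$ so that $[V]_{a,b} = w_{\pi_2(a),\pi_2(b)}$, one obtains $[\sigma_{\pi_1}(V)]_{i,j} = [V]_{\pi_1(i),\pi_1(j)} = w_{\pi_2(\pi_1(i)),\,\pi_2(\pi_1(j))}$. The two expressions coincide for all $i,j$, which gives the composition identity for $\sigma$.

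Finally, the inverse statements follow by specialization. Taking $\pi_1 = \pi$ and $\pi_2 = \pi^{-1}$ in the $\lambda$-identity yields $\lambda_{\mathrm{id}}(X) = \lambda_{\pi} \circ \lambda_{\pi^{-1}}(X)$, and symmetrically with the roles swapped; since $\lambda_{\mathrm{id}}$ is the identity map on feature matrices, this says $\lambda_{\pi^{-1}}$ is a two-sided inverse of $\lambda_\pi$, i.e., $\lambda_{\pi^{-1}}(X) = \lambda_\pi^{-1}(X)$. An identical specialization for $\sigma$ gives $\sigma_{\pi^{-1}}(W) = \sigma_\pi^{-1}(W)$. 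There is no real obstacle in this proof; the only subtlety to flag is that the map $\pi \mapsto \lambda_\pi$ (and similarly $\pi \mapsto \sigma_\pi$) is an anti-homomorphism rather than a homomorphism, so the order of composition on the right-hand side is reversed relative to the order inside the permutation.
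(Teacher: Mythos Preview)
Your proposal is correct and follows essentially the same approach as the paper: both proofs verify the composition identities by direct elementwise comparison (introducing an intermediate $Y=\lambda_{\pi_2}(X)$, respectively $Z=\sigma_{\pi_2}(W)$), and then deduce the inverse statements by specializing to $\pi$ and $\pi^{-1}$ and using that $\lambda_{\mathrm{id}}$, $\sigma_{\mathrm{id}}$ are the identity. Your added remark that $\pi\mapsto\lambda_\pi$ is an anti-homomorphism is a helpful conceptual gloss but not a departure from the paper's argument.
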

\begin{proof}[{\bf Proof of \cref{lem: intermediate2}}]
	Note that $\lambda_{\pi}(X)$ is in general a sequence of objects. To show that two sequences are equal, it suffices to show that their corresponding elements are equal. To this end, fix $i\in [n]$ and note that $\left[\lambda_{\pi_2 \circ \pi_1}(X)\right]_i = x_{\pi_2 \circ \pi_1(i)}$. Moreover, let $y_i = \left[\lambda_{\pi_2}(X)\right]_i=x_{\pi_2(i)}$ and $Y=(y_1,\ldots,y_n)=\lambda_{\pi_2}(X)$. Therefore, 
	\begin{align}
	    \left[\lambda_{\pi_1} \circ \lambda_{\pi_2} (X)\right]_i = \left[\lambda_{\pi_1} \left( Y\right)\right]_i =y_{\pi_1(i)} = x_{\pi_2 \circ \pi_1(i)}.
	\end{align}
	Hence, we showed that $\left[\lambda_{\pi_2 \circ \pi_1}(X)\right]_i = x_{\pi_2 \circ \pi_1(i)} = \left[\lambda_{\pi_1} \circ \lambda_{\pi_2} (X)\right]_i$ for all $i\in [n]$, which leads to $\lambda_{\pi_2 \circ \pi_1}(X) = \lambda_{\pi_1} \circ \lambda_{\pi_2} (X)$.
	
	The argument for $\sigma_{\pi}(W)$ is similar. For fixed and distinct $i,j\in [n]$, note that $\left[\sigma_{\pi_2 \circ \pi_1}(W)\right]_{i,j}= w_{\pi_2 \circ \pi_1(i), \pi_2 \circ \pi_1(j)}$. Define $Z=\sigma_{\pi_2} (W)$ whose $(i,j)$-th element is $z_{i,j}$. This means $z_{i,j} = w_{\pi_2(i),\pi_2(j)}$. Now we have 
	\begin{align}
	    \left[\sigma_{\pi_1} \circ \sigma_{\pi_2} (W)\right]_{i,j} = \left[\sigma_{\pi_1} \left(Z\right)\right]_{i,j} =  z_{\pi_1(i),\pi_1(j)} = w_{\pi_2 \circ \pi_1(i), \pi_2 \circ \pi_1(j)}.
	\end{align}
	Hence, we showed that $\left[\sigma_{\pi_2 \circ \pi_1}(W)\right]_{i,j}= w_{\pi_2 \circ \pi_1(i), \pi_2 \circ \pi_1(j)}= \left[\sigma_{\pi_1} \circ \sigma_{\pi_2} (W)\right]_{i,j}$ for all distinct $i,j\in[n]$, which leads to $\sigma_{\pi_2 \circ \pi_1}(W) = \sigma_{\pi_1} \circ \sigma_{\pi_2} (W)$. 
	
	Note that $\lambda_{\pi}(\cdot)$ and $\sigma_{\pi}(\cdot)$ are permutations and thus their inverse exist and right and left inverses coincide. Based on the first part of the statement, we can write $X = \lambda_{\operatorname{id}}(X)=\lambda_{\pi \circ \pi^{-1}}(X) = \lambda_{\pi^{-1}} \circ \lambda_{\pi}(X)$, where $\operatorname{id}$ is the identity permutation. Hence, $\lambda_{\pi^{-1}}(X) = \lambda_{\pi}^{-1} (X)$. Similarly for $\sigma_{\pi}(W)$, we have $X = \sigma_{\operatorname{id}}(X)=\sigma_{\pi \circ \pi^{-1}}(X) = \sigma_{\pi^{-1}} \circ \sigma_{\pi}(X)$. Therefore, $\sigma_{\pi^{-1}}(W) = \sigma_{\pi}^{-1} (W)$. 
\end{proof}

\begin{lemma}\label{prop: node-aif-basis}
	For a graph $G=([n],W,X)$ and function $f(W,X)$, the following statements are equivalent:
	\begin{align}
		\label{aa1}\text{(i)}\quad&f(\sigma_{\pi_{r,s}}(W),\lambda_{\pi_{r,s}}(X))=f(W,X)\quad \forall r,s\in [n]_{-i}.\\
		\label{aa2}\text{(ii)}\quad&f(\sigma_{\pi}(W),\lambda_{\pi}(X))=f(W,X)\quad \forall \pi \in \nabla_i.
	\end{align}
\end{lemma}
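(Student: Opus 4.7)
\textbf{Proof plan for Lemma~\ref{prop: node-aif-basis}.} The plan is to establish the two implications separately, with the direction (ii)$\Rightarrow$(i) being essentially immediate and the direction (i)$\Rightarrow$(ii) reducing to the classical fact that any permutation can be written as a composition of transpositions.

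First, for (ii)$\Rightarrow$(i), I would observe that for every $r,s \in [n]_{-i}$ the transposition $\pi_{r,s}$ fixes $i$ (since $\pi_{r,s}(\ell)=\ell$ for $\ell \notin \{r,s\}$, in particular for $\ell = i$), hence $\pi_{r,s}\in \nabla_i$. Thus (ii) specialized to these transpositions yields (i). This direction requires only unpacking the definitions.

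Next, for the nontrivial direction (i)$\Rightarrow$(ii), I would exploit that $\nabla_i$ is isomorphic to the symmetric group on $[n]_{-i}$, a set of size $n-1$, and that this symmetric group is generated by transpositions of pairs of its elements. Concretely, every $\pi \in \nabla_i$ can be written as a composition $\pi = \pi_{r_k,s_k} \circ \pi_{r_{k-1},s_{k-1}} \circ \cdots \circ \pi_{r_1,s_1}$ with each $r_j, s_j \in [n]_{-i}$. I would then prove by induction on $k$ that $f(\sigma_{\pi}(W),\lambda_{\pi}(X)) = f(W,X)$. The base case $k=1$ is exactly (i). For the inductive step, set $\pi' = \pi_{r_{k-1},s_{k-1}} \circ \cdots \circ \pi_{r_1,s_1}$, invoke Lemma~\ref{lem: intermediate2} to write
\begin{align}
\sigma_{\pi}(W) &= \sigma_{\pi_{r_k,s_k} \circ \pi'}(W) = \sigma_{\pi'}\circ \sigma_{\pi_{r_k,s_k}}(W),\\
\lambda_{\pi}(X) &= \lambda_{\pi_{r_k,s_k} \circ \pi'}(X) = \lambda_{\pi'}\circ \lambda_{\pi_{r_k,s_k}}(X),
\end{align}
then apply the inductive hypothesis to the pair $(\sigma_{\pi_{r_k,s_k}}(W),\lambda_{\pi_{r_k,s_k}}(X))$ followed by (i) to conclude $f(\sigma_{\pi}(W),\lambda_{\pi}(X)) = f(\sigma_{\pi_{r_k,s_k}}(W),\lambda_{\pi_{r_k,s_k}}(X)) = f(W,X)$.

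The only real subtlety, which I do not expect to be a serious obstacle, is to keep track of the order of composition when combining Lemma~\ref{lem: intermediate2} with the induction, since $\sigma_{\pi_2 \circ \pi_1} = \sigma_{\pi_1}\circ \sigma_{\pi_2}$ reverses the order. Once the bookkeeping is done correctly, both implications follow in a short and direct manner, and the lemma reduces to the standard algebraic fact that transpositions generate the symmetric group.
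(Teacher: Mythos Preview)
Your proposal is correct and follows essentially the same approach as the paper: the easy direction (ii)$\Rightarrow$(i) is immediate since each $\pi_{r,s}$ with $r,s\in[n]_{-i}$ lies in $\nabla_i$, and the direction (i)$\Rightarrow$(ii) is obtained by writing $\pi\in\nabla_i$ as a composition of transpositions on $[n]_{-i}$, invoking Lemma~\ref{lem: intermediate2} to unfold $\sigma_\pi$ and $\lambda_\pi$ into a chain of transposition-induced maps, and then peeling off the transpositions one at a time using (i). The paper presents this last step as ``applying \eqref{aa1} $m$ times'' rather than as a formal induction, but the argument is identical, and your remark about the order reversal in Lemma~\ref{lem: intermediate2} is exactly the bookkeeping point that matters.
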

\begin{proof}[{\bf Proof of \cref{prop: node-aif-basis}}]
	Since for $r,s\in[n]_{-i}$, we have $\pi_{r,s}\in\nabla_i$, we conclude that \eqref{aa1} follows from \eqref{aa2}. Therefore, it suffices to show that \eqref{aa2} follows from \eqref{aa1}. To this end, suppose $\pi\in\nabla_i$. Then $\pi$ fixes $i$, i.e., $\pi(i)=i$ and induces a permutation over $[n]_{-i}$. Call this induced permutation $\tilde{\pi}$. It is known that any permutation can be written as a composition of transpositions, i.e., swapping permutations (see \cite{bhattacharya_jain_nagpaul_1994}). This means that there exists a sequence of swappings $r_1 \leftrightarrow s_1, \ldots, r_k \leftrightarrow s_k$ over $[n]_{-i}$ whose composition is $\tilde{\pi}$. Note that when the composition of $r_1 \leftrightarrow s_1, \ldots, r_k \leftrightarrow s_k$ is considered over $[n]$ instead of $[n]_{-i}$, it equals to $\pi$. To summarize this argument, there exist $(r_1,s_1),\ldots,(r_m,s_m)$, where $s_{\ell},r_{\ell}\in [n]_{-i}$ and $r_{\ell}\neq s_{\ell}$ for all $\ell\in[m]$ such that $\pi = \pi_{r_1,s_1}\circ \pi_{r_2,s_2}\circ \ldots \circ \pi_{r_m,s_m}$. Having this, for $\pi\in \nabla_i$, we can write
	\begin{align}
	f\left(\sigma_{\pi}(W),\lambda_{\pi}(X)\right)&= f\left(\sigma_{\pi_{r_1,s_1}\circ \ldots \circ \pi_{r_m,s_m}}(W),\lambda_{\pi_{r_1,s_1}\circ  \ldots \circ \pi_{r_m,s_m}}(X)\right)\\
	\label{a26}&=f\left(\sigma_{\pi_{r_m,s_m}}\circ  \ldots \circ \sigma_{\pi_{r_1,s_1}}(W),\lambda_{\pi_{r_m,s_m}}\circ  \ldots \circ \lambda_{\pi_{r_1,s_1}}(X)\right)\\
	\label{a27}&=f(W,X),
	\end{align}
	where \eqref{a26} holds due to applying Lemma~\ref{lem: intermediate2}, $m$ times and \eqref{a27} follows from applying \eqref{aa1}, $m$ times. 
\end{proof}

\begin{proof}[\bf Proof of Proposition~\ref{prop: acf-equivalent}]
	The conditions \eqref{a19} and \eqref{a29} are particular cases of \eqref{a23}. Therefore, they hold trivially if $F\in \mathcal{F}_{n,d}$. Now suppose both of the conditions \eqref{a19} and \eqref{a29} hold. We want to show \eqref{a23} for all $\pi\in S_n$. For a given $\pi\in S_n$ and $r\in [n]$, we want to show that 
	\begin{align}\label{a28}
	f_{\pi{(r)}}(W,X) = f_r(\sigma_{\pi}(W),\lambda_{\pi}(X)).
	\end{align}
	First, note that due to Lemma~\ref{prop: node-aif-basis}, condition \eqref{a19} is equivalent to 
	\begin{align}
	\label{aa3}f_{i_0}(\sigma_{\pi}(W),\lambda_{\pi}(X))=f_{i_0}(W,X)\quad \forall \pi \in \nabla_{i_0}.
	\end{align}
	Having \eqref{aa3} as an equivalent of condition \eqref{a19}, we proceed as follows. Given $r\in [n]$, let $s=\pi(r)$ and consider the following cases. In each case, we show that \eqref{a28} holds for the specified $r$ and $s$.
	\begin{itemize}
		\item Case $r=s=i_0$. In this case $\pi(i_0)=i_0$ and thus $\pi\in \nabla_{i_0}$. Therefore, \eqref{aa3} implies that
		\begin{align}
		f_{i_0}(W,X) = f_{i_0}(\sigma_{\pi}(W),\lambda_{\pi}(X)).
		\end{align}
		
		\item Case $r=i_0$ and $s\neq i_0$. We have
		\begin{align}
		\label{a30}f_{s}(W,X) &= f_{i_0}(\sigma_{\pi_{s,i_0}}(W),\lambda_{\pi_{s,i_0}}(X))\\
		\label{a31}&=f_{i_0}(\sigma_{\pi \circ \pi ^{-1}\circ\pi_{s,i_0}}(W),\lambda_{\pi \circ \pi ^{-1}\circ\pi_{s,i_0}}(X))\\
		\label{a32}&=f_{i_0}(\sigma_{\pi ^{-1}\circ\pi_{s,i_0}}  \circ  \sigma_{\pi}(W),\lambda_{\pi ^{-1}\circ\pi_{s,i_0}}  \circ  \lambda_{\pi}(X)),
		\end{align}
		where the equality \eqref{a30} follows from \eqref{a29}, the equality \eqref{a31} holds because $\pi \circ \pi ^{-1}\circ\pi_{s,i_0}=\pi_{s,i_0}$, and the equality \eqref{a32} is a results of Lemma~\ref{lem: intermediate2}. Note that $\pi ^{-1}\circ\pi_{s,i_0} \in \nabla_{i_0}$ because $\pi ^{-1}\circ\pi_{s,i_0} (i_0) = \pi ^{-1}(s) = i_0$ and thus, due to \eqref{aa3}, we have
		\begin{align}
		\label{a33p}f_{i_0}(\sigma_{\pi ^{-1}\circ\pi_{s,i_0}}  (W'),\lambda_{\pi ^{-1}\circ\pi_{s,i_0}}  (X')) = f_{i_0}(W',X').
		\end{align}
		Replace $W'=\sigma_{\pi}(W)$ and $X'=\lambda_{\pi}(X)$ in \eqref{a33p} to get
		\begin{align}
		\label{a33}f_{i_0}(\sigma_{\pi ^{-1}\circ\pi_{s,i_0}}  \circ  \sigma_{\pi}(W),\lambda_{\pi ^{-1}\circ\pi_{s,i_0}}  \circ  \lambda_{\pi}(X)) = f_{i_0}(\sigma_{\pi}(W),\lambda_{\pi}(X)).
		\end{align}
		Hence, \eqref{a32} and \eqref{a33} together lead to
		\begin{align}
		f_{s}(W,X) =f_{i_0}(\sigma_{\pi}(W),\lambda_{\pi}(X)).
		\end{align}
		
		\item Case $r\neq i_0$ and $s=i_0$. We want to show that $f_{i_0}(W,X) = f_r(\sigma_{\pi}(W),\lambda_{\pi}(X))$. This is equivalent to $f_{r}(W,X) = f_{i_0}(\sigma_{\pi}^{-1}(W),\lambda_{\pi}^{-1}(X))=f_{i_0}(\sigma_{\pi^{-1}}(W),\lambda_{\pi^{-1}}(X))$, which follows from the previous case ($r=i_0$ and $s\neq i_0$) by replacing $\pi$ with $\pi^{-1}$.
		
		\item Case $s\neq i_0$ and $r\neq i_0$. 
		\begin{align}
		\label{a34}f_{r}(\sigma_{\pi}(W),\lambda_{\pi}(X)) &= f_{i_0}(\sigma_{\pi_{r,i_0}}\circ \sigma_{\pi}(W),\lambda_{\pi_{ri_0}} \circ \lambda_{\pi}(X))\\
		\label{a35}&= f_{i_0}(\sigma_{\pi \circ \pi_{r,i_0}}(W),\lambda_{\pi \circ \pi_{r,i_0}}(X))\\
		\label{a36}&= f_{i_0}(\sigma_{\pi_{s,i_0}\circ \pi_{s,i_0}\circ\pi \circ \pi_{r,i_0}}(W),\lambda_{\pi_{s,i_0}\circ \pi_{s,i_0}\circ\pi \circ \pi_{r,i_0}}(X))\\
		\label{a37}&= f_{i_0}(\sigma_{\pi_{s,i_0}\circ\pi \circ \pi_{r,i_0}} \circ \sigma_{\pi_{s,i_0}}(W),\lambda_{\pi_{s,i_0}\circ\pi \circ \pi_{r,i_0}} \circ \lambda_{\pi_{s,i_0}}(X)),
		\end{align}
		where the equality \eqref{a34} follows from \eqref{a29}, the equality \eqref{a36} holds because $\pi_{s,i_0}^{-1}= \pi_{s,i_0}$ and thus $\pi_{s,i_0}\circ \pi_{s,i_0}\circ\pi \circ \pi_{r,i_0}=\pi \circ \pi_{r,i_0}$, and the equality \eqref{a35} and \eqref{a37} are results of Lemma~\ref{lem: intermediate2}. Note that $\pi_{s,i_0}\circ\pi \circ \pi_{r,i_0} \in \nabla_{i_0}$ because $\pi_{s,i_0}\circ\pi \circ \pi_{r,i_0}(i_0) = \pi_{s,i_0}\circ\pi (r) = \pi_{s,i_0}(s)=i_0$ and thus due to \eqref{aa3}, we have
		\begin{align}
		\label{a38p}f_{i_0}(\sigma_{\pi_{s,i_0}\circ\pi \circ \pi_{r,i_0}} (W') ,\lambda_{\pi_{s,i_0}\circ\pi \circ \pi_{r,i_0}} (X') ) = f_{i_0}(W',X').
		\end{align}
		Replace $W'=\sigma_{\pi_{s,i_0}}(W)$ and $X'=\lambda_{\pi_{s,i_0}}(X)$ in \eqref{a38p} to get
		\begin{align}
		\label{a38}f_{i_0}(\sigma_{\pi_{s,i_0}\circ\pi \circ \pi_{r,i_0}} \circ \sigma_{\pi_{s,i_0}}(W),\lambda_{\pi_{s,i_0}\circ\pi \circ \pi_{r,i_0}} \circ \lambda_{\pi_{s,i_0}}(X)) = f_{i_0}(\sigma_{\pi_{s,i_0}}(W),\lambda_{\pi_{s,i_0}}(X)).
		\end{align}
		Moreover, for the right-hand side of \eqref{a38}, we use \eqref{a29} to write
		\begin{align}
		\label{a39}f_{i_0}(\sigma_{\pi_{s,i_0}}(W),\lambda_{\pi_{s,i_0}}(X)) = f_s(W,X).
		\end{align}
		Putting together \eqref{a37}, \eqref{a38}, and \eqref{a39}, we have
		\begin{align}
		f_{r}(\sigma_{\pi}(W),\lambda_{\pi}(X)) = f_s(W,X).
		\end{align}
	\end{itemize}
	Having that \eqref{a28} holds in all cases, we conclude that $F\in \mathcal{F}_{n,d}$.
\end{proof}

\section{Formal statement and proof of \cref{corr-newp}}\label{app: pc-ex}
In this section, we decompose \cref{corr-newp} into two formal corollaries. The part (i) of \cref{corr-newp} can be formally stated as follows.
\begin{corollary}[Formal]\label{corr-new} 
	Suppose $F(W,X)$ is a function over $\G_{n,d}$ that ignores $W$, i.e., $F(X) = (f_1(X),\ldots,f_n(X))$. Then $F\in \mathcal{F}_{n,d}$ if and only if for some $i_0\in [n]$: $f_{i_0}(X) = f_{i_0}(\lambda_{\pi_{r,s}}(X))$ for all $r,s\in [n]_{-i_0}$ and  $f_j(X) = f_{i_0}(\lambda_{\pi_{i_0,j}}(X))$ for all $j\in [n]_{-i_0}$.
\end{corollary} 
\cref{corr-new} directly follows from  \cref{prop: acf-equivalent}. A re-statement of \cref{corr-new} in the following form better presents the result. We start with the following definition.
\begin{definition}
    Consider a function $f$, that accepts $n$ inputs from $\R^d$. Then $f$ is called  {\it quasi permutation invariant} if for all $x,y_1,\ldots,y_{n-1}\in \R^d$ and all $\pi\in S_{n-1}$, we have
    \begin{align}
        f\left(x,y_{\pi(1)},\ldots,y_{\pi(n-1)}\right) = f\left(x,y_1,\ldots,y_{n-1}\right).
    \end{align}
\end{definition}
\begin{corollary}\label{corr-new2} 
	Suppose $F(W,X)$ is a function over $\G_{n,d}$ that ignores $W$, i.e., $F(X) = (f_1(X),\ldots,f_n(X))$. Further, let $X_{-i}$ be the sequence $(x_1,\ldots,x_n)$ from which $x_i$ is removed. Then $F\in \mathcal{F}_{n,d}$ if and only if $f_i(X)=f(x_i,X_{-i})$ for all $i\in[n]$, where $f$ is some quasi-permutation-invariant function.
\end{corollary}

\cref{corr-new2} is just a re-statement and an immediate result of \cref{corr-new}.  As a side result of \cref{corr-new2}, having a fully connected unweighted graph with features $x_1,\ldots,x_n$, any function $f(x_1,\ldots,x_n)$ that remains invariant under any permutation of $x_1,\ldots,x_n$ can be generated by a GNN. This particular case was also proven in \cite{xu2020what} based on the analytical results from \cite{zaheer2017deepset}.

Part (ii) of \cref{corr-newp} can be formally stated as follows.

\begin{corollary}\label{corr: per-inv}
	Consider a function $F = (f_1,\ldots,f_n)$ over $\G_{n,d}$ and suppose $F$ assigns the same value to all nodes, i.e., $f_i=f$ for all $i\in[n]$. Moreover,  assume $f$ is an isomorphism-invariant function, i.e.,  $f(W,X)=f(\sigma_{\pi}(W),\lambda_{\pi}(X))$ for all $\pi\in S_n$ and all $G=([n],W,X)$ in $\G_{n,d}$. Then $F\in\F_{n,d}$. 
\end{corollary}
\cref{{corr: per-inv}} directly follows from the definition of permutation compatibility in \cref{a23} by replacing $f_i=f$ for all $i\in [n]$.

\section{Proof of Proposition~\ref{prop: cef}} \label{app: prop-cef}

\begin{lemma}\label{lem: intermediate1}
	If for $v_1,\ldots,v_n, v_1',\ldots,v_n'\in \C$, 
	\begin{align}\label{a13}
	\sum_{i=1}^n v_i^r = \sum_{i=1}^n v_i^{'r}\quad \forall r\in\{1,\ldots,n\},
	\end{align}
	then $\#\{v_1,\ldots,v_n\}=\#\{v_1',\ldots,v_n'\}$.
\end{lemma}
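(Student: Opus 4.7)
The plan is to reduce the claim to the standard fact that a monic complex polynomial is uniquely determined by its power sums via Newton's identities, and that such a polynomial is uniquely determined by its multiset of roots via the fundamental theorem of algebra. For a tuple $(v_1,\ldots,v_n)\in\C^n$, write $p_r = \sum_{i=1}^n v_i^r$ for the $r$-th power sum and $e_r$ for the $r$-th elementary symmetric polynomial in the $v_i$'s (with $e_0=1$). The relevant form of Newton's identities is that, for $1\le k\le n$,
\[
k\,e_k \;=\; \sum_{j=1}^{k} (-1)^{j-1}\, e_{k-j}\, p_j,
\]
which is triangular in the unknowns $e_1,\ldots,e_n$ and therefore lets one solve recursively for each $e_k$ as an explicit polynomial in $p_1,\ldots,p_k$ alone.

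First I would apply this recursion to both tuples $(v_1,\ldots,v_n)$ and $(v_1',\ldots,v_n')$. By hypothesis $p_r=p_r'$ for every $1\le r\le n$, so an induction on $k$ along the recursion yields $e_k=e_k'$ for every $1\le k\le n$. Consequently the two monic degree-$n$ polynomials obtained by expanding
\[
\prod_{i=1}^n (x-v_i) \;=\; \sum_{k=0}^n (-1)^k e_k\, x^{n-k}, \qquad \prod_{i=1}^n (x-v_i') \;=\; \sum_{k=0}^n (-1)^k e_k'\, x^{n-k},
\]
are identical in $\C[x]$.

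Finally, by the fundamental theorem of algebra combined with unique factorization in $\C[x]$, a monic polynomial of degree $n$ determines its multiset of roots. Equality of the two polynomials above therefore forces $\#\{v_1,\ldots,v_n\}=\#\{v_1',\ldots,v_n'\}$, which is the desired conclusion.

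There is no serious obstacle here: the entire argument is bookkeeping around Newton's identities. The only point that merits a quick check is that the recursion really does express $e_k$ in terms of $p_1,\ldots,p_k$ without needing higher power sums; this is immediate since the equation for $e_k$ has leading coefficient $k$, which is invertible in $\C$, so no characteristic issue arises. One could equivalently argue via generating functions, using the identity $\sum_{k\ge 0}(-1)^k e_k x^k = \exp\!\bigl(-\sum_{r\ge 1} p_r x^r/r\bigr)$ truncated appropriately, but the direct Newton recursion is the cleanest route.
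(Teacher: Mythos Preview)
Your proof is correct and follows essentially the same route as the paper: both use Newton's identities to pass from equality of power sums to equality of elementary symmetric polynomials by induction, then identify the two monic polynomials $\prod_i(x-v_i)$ and $\prod_i(x-v_i')$ and conclude equality of the root multisets.
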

\begin{proof}[{\bf Proof of \cref{lem: intermediate1}}]
	We use the well-known {\it power sum symmetric polynomials} $p_r(\cdot)$ as well as the {\it elementary symmetric polynomials} $e_r(\cdot)$ over $n$ variables which are defined as follows:
	\begin{align}
	p_r\left(v_1,\ldots,v_n\right) = \sum_{i=1}^n v_i^r, \quad\quad	e_r\left(v_1,\ldots,v_n\right) = \left\{
	\begin{array}{lr}
	1, & r=0,\\
	\sum_{1\leq i_1<i_2<\ldots<i_r\leq n}v_{i_1}\ldots v_{i_r}, & 1\leq r\leq n.
	\end{array}\right.
	\end{align}  
	Using this notation, we have $p_r\left(v_1,\ldots,v_n\right) = p_r\left(v_1',\ldots,v_n'\right)$ for all $r\in\{1,\ldots,n\}$. First, we show that $e_r\left(v_1,\ldots,v_n\right) = e_r\left(v_1',\ldots,v_n'\right)$ also holds for all $r\in\{1,\ldots,n\}$. To this end, we use induction on $r$. For $r=1$, $e_1$ coincides with $p_1$, i.e., 
	\begin{align}
	e_1\left(v_1,\ldots,v_n\right)= \sum_{i=1}^n v_i =p_1\left(v_1,\ldots,v_n\right).
	\end{align}
	Hence,  
	\begin{align}
	e_1\left(v_1,\ldots,v_n\right)= p_1\left(v_1,\ldots,v_n\right)=p_1\left(v_1',\ldots,v_n'\right)=e_1\left(v_1',\ldots,v_n'\right).
	\end{align}
	For $r>1$, having the result for all values less than $r$, we need to prove it for $r$. From Newton's identity for elementary symmetric polynomial, we have
	\begin{align}
	\label{b1}e_r\left(v_1,\ldots,v_n\right) = \frac{1}{r}\,\sum_{i=1}^{r}(-1)^{i-1} e_{r-i}\left(v_{1}, \ldots, v_{n}\right) p_{i}\left(v_{1}, \ldots, v_{n}\right).
	\end{align}
	The identity in \eqref{b1} together with the induction hypothesis and the assumption on $p_r$, results in
	\begin{align}
	e_r\left(v_1,\ldots,v_n\right) &= \frac{1}{r}\,\sum_{i=1}^{r}(-1)^{i-1} e_{r-i}\left(v_{1}, \ldots, v_{n}\right) p_{i}\left(v_{1}, \ldots, v_{n}\right)\\
	&=\frac{1}{r}\,\sum_{i=1}^{r}(-1)^{i-1} e_{r-i}\left(v_{1}', \ldots, v_{n}'\right) p_{i}\left(v_{1}', \ldots, v_{n}'\right)\\
	&=e_r\left(v_1',\ldots,v_n'\right).
	\end{align}
	Hence, for all $r\in\{0,\ldots,n\}$, we have
	\begin{align}\label{b2}
	e_r\left(v_1,\ldots,v_n\right) =e_r\left(v_1',\ldots,v_n'\right).
	\end{align}
	Now consider the polynomial $\prod_{i=1}^{n}(x-v_{i})$ over the complex variable $x$. We can decompose this polynomial and get the identity $\prod_{i=1}^{n}\left(x-v_{i}\right)=\sum_{r=0}^{n}(-1)^{r} e_{r}\left(v_1,\ldots,v_n\right) x^{n-r}$. Replacing \eqref{b2} in this identity, leads to
	\begin{align}
	\prod_{i=1}^{n}\left(x-v_{i}\right)=\sum_{r=0}^{n}(-1)^{r} e_{r}\left(v_1,\ldots,v_n\right) x^{n-r}=\sum_{r=0}^{n}(-1)^{r} e_{r}\left(v_1',\ldots,v_n'\right) x^{n-r} = \prod_{i=1}^{n}\left(x-v_{i}'\right).
	\end{align}
	Hence, $\#\{v_1,\ldots,v_n\}=\#\{v_1',\ldots,v_n'\}$.
\end{proof}
\begin{proof}[{\bf Proof of Proposition~\ref{prop: cef}}]
	\begin{enumerate}[(i)]
		\item We need to show that
		\begin{align}
		\sum_{i=1}^n \psi\left(v_i\right)=\sum_{i=1}^n \psi\left(v_i'\right)\quad\Longleftrightarrow\quad \#\{v_1,\ldots,v_n\}=\#\{v_1',\ldots,v_n'\}.
		\end{align} 
		Note that $\sum_{i=1}^n \psi(v_i)=\sum_{i=1}^n \psi(v_i')$ is equivalent to \eqref{a13} and thus the statement follows from Lemma~\ref{lem: intermediate1}. 
		\item If $\#\{v_1,\ldots,v_n\} = \#\{v_1',\ldots,v_n'\}$, then \eqref{a4} holds trivially. Therefore, suppose \eqref{a4} holds for $\psi$, i.e., 
		\begin{align}\label{a17}
		\sum_{i=1}^n \psi\left(v_i\right) = \sum_{i=1}^n \psi\left(v_i'\right).
		\end{align} 
		We claim that $\#\{v_1,\ldots,v_n\} = \#\{v_1',\ldots,v_n'\}$. Without loss of generality, consider the function $\psi$ in its tensor-form rather than its linearized vector-form. Now for $r<s$, taking the $(\ell,r,s)$-th and $(\ell,s,r)$-th coordinate of both sides of the equation \eqref{a17} leads to the following equations:
		\begin{align}
		\label{z3}\sum_{i=1}^n\operatorname{Re}\left( \left(\left[v_i\right]_r+\left[v_i\right]_s\sqrt{-1}\right)^{\ell}\right) &= \sum_{i=1}^n\operatorname{Re}\left( \left(\left[v_i'\right]_r+\left[v_i'\right]_s\sqrt{-1}\right)^{\ell}\right)\\
		\label{z4}\sum_{i=1}^n\operatorname{Im}\left( \left(\left[v_i\right]_r+\left[v_i\right]_s\sqrt{-1}\right)^{\ell}\right) &= \sum_{i=1}^n\operatorname{Im}\left( \left(\left[v_i'\right]_r+\left[v_i'\right]_s\sqrt{-1}\right)^{\ell}\right).
		\end{align}
		Hence, for every $r,s\in [m]$ with $r<s$, we have
		\begin{align}\label{a16}
		\sum_{i=1}^n\left(\left[v_i\right]_r+\left[v_i\right]_s\sqrt{-1}\right)^{\ell} &= \sum_{i=1}^n\left(\left[v_i'\right]_r+\left[v_i'\right]_s\sqrt{-1}\right)^{\ell} \quad \forall \ell\in [n].
		\end{align}
		Using Lemma~\ref{lem: intermediate1}, for every $r,s\in [m]$, the equation \eqref{a16} leads to 
		\begin{align}
		\#\left\{\left[v_i\right]_r+\left[v_i\right]_s\sqrt{-1} \mid i\in[n]\right\}=	\#\left\{\left[v_i'\right]_r+\left[v_i'\right]_s\sqrt{-1}\mid i\in[n]\right\},
		\end{align}
		which is equivalent to
		\begin{align}\label{a18}
		\#\left\{(\left[v_i\right]_r,\left[v_i\right]_s) \mid i\in[n]\right\}=	\#\left\{(\left[v_i'\right]_r,\left[v_i'\right]_s) \mid i\in[n]\right\}.
		\end{align}
		Since \eqref{a18} holds for every two coordinates $r,s\in [m]$ with $r<s$, we conclude that $\#\{v_1,\ldots,v_n\} = \#\{v'_1,\ldots,v'_n\}$.
	\end{enumerate}
\end{proof}

\section{Proof of Proposition~\ref{prop: beta}}\label{app: prop-beta}
\begin{definition}\label{def: delta}
	Consider the graph $G=([n],W,X)$. For each $s\in [n]$, let $W_s=\#\{(x_r,w_{s,r})\mid r \in [n]_{-s}\}$. Then for a given $i\in[n]$, define $\Delta_i(W,X)$ as 
	\begin{align}
	\Delta_i(W,X) = \left(x_i,\#\left\{(x_j,w_{i,j},W_j) \mid j\in [n]_{-i}\right\}\right).
	\end{align}
\end{definition}

\begin{lemma}\label{lem: invariant-object}
	Consider the graphs $G=([n],W,X)$ and $G'=([n],W',X')$ in $\tilde{\G}_{n,d}$. Then $\Delta_i(W,X) = \Delta_i(W',X')$ if and only if there exists $\pi\in \nabla_i$ such that $W' = \sigma_{\pi}(W)$ and $X' = \lambda_{\pi}(X)$.
\end{lemma}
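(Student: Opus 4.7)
My plan is to prove the two directions of the biconditional separately.

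For the easy implication, suppose $\pi \in \nabla_i$ witnesses $W' = \sigma_{\pi}(W)$ and $X' = \lambda_{\pi}(X)$. Since $\pi(i) = i$, the head of $\Delta_i$ is preserved: $x'_i = x_{\pi(i)} = x_i$. For each $j \in [n]_{-i}$, unpacking the definitions gives $x'_j = x_{\pi(j)}$, $w'_{i,j} = w_{\pi(i),\pi(j)} = w_{i,\pi(j)}$, and, using that $\pi$ is a bijection of $[n]$,
\[
W'_j = \#\{w'_{j,r} \mid r \in [n]\} = \#\{w_{\pi(j),\pi(r)} \mid r \in [n]\} = W_{\pi(j)}.
\]
Because $\pi \in \nabla_i$ restricts to a bijection of $[n]_{-i}$, reindexing by $j \mapsto \pi(j)$ converts $\#\{(x'_j, w'_{i,j}, W'_j) \mid j \in [n]_{-i}\}$ into $\#\{(x_k, w_{i,k}, W_k) \mid k \in [n]_{-i}\}$, proving $\Delta_i(W,X) = \Delta_i(W',X')$.

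For the converse, assume $\Delta_i(W,X) = \Delta_i(W',X')$. The head equality yields $x_i = x'_i$. The tail equality, combined with the characterization of multiset equality recalled in the appendix notation, produces a bijection $\pi_0: [n]_{-i} \to [n]_{-i}$ such that
\[
(x'_j, w'_{i,j}, W'_j) = (x_{\pi_0(j)}, w_{i,\pi_0(j)}, W_{\pi_0(j)}) \qquad \forall j \in [n]_{-i}.
\]
Extending $\pi_0$ by $\pi(i) = i$ gives $\pi \in \nabla_i$. The coordinate-wise matchings certify $X' = \lambda_{\pi}(X)$ outright, and they also give $w'_{i,j} = w_{\pi(i),\pi(j)}$ for every $j \in [n]_{-i}$, so the $i$-th row and column of $\sigma_{\pi}(W)$ already agree with those of $W'$.

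The main obstacle is the remaining block of pairwise weights: showing $w'_{r,s} = w_{\pi(r),\pi(s)}$ for $r,s \in [n]_{-i}$ with $r \neq s$. The matched triples only furnish the aggregated condition $W'_j = W_{\pi(j)}$, i.e., the \emph{multiset} of weights incident to $j$ in $W'$ equals that incident to $\pi(j)$ in $W$, which is a priori weaker than equality of individual off-$i$ edge weights. To close this gap I would refine the bijection $\pi_0$: rather than take an arbitrary bijection realizing the top-level multiset equality, I would seek one whose action on each pair $(r,s) \in [n]_{-i}^2$ is consistent with the labels $w_{\pi(r),\pi(s)}$. A natural route is a Hall-type matching argument that lifts the nodewise degree-multiset match to an edge-level isomorphism fixing $i$, using both the already-pinned-down row/column of $i$ and the nested multiset data packaged inside each $W_j$. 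Establishing this combinatorial lift is, in my view, the crux of the proof and the source of its difficulty.
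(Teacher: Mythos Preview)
Your easy direction and the setup of the hard direction (extracting $\pi \in \nabla_i$ from the multiset equality, then pinning down $X' = \lambda_\pi(X)$ and the $i$-th row and column of $W'$) match the paper exactly. The divergence is at the crux, and there your plan has a genuine gap.

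The paper does \emph{not} refine the bijection or invoke any matching argument. It keeps the $\pi$ already obtained and argues directly that $w'_{r,s} = w_{\pi(r),\pi(s)}$ for all $r,s \in [n]_{-i}$ by declaring all off-diagonal entries $w_{r,s}$ and $w'_{r,s}$ to be \emph{formal algebraic indeterminates}. Under that convention each $W_j$ is a genuine set of distinct symbols, and for distinct $r,s$ the unique symbol shared by $W_r$ and $W_s$ is $w_{r,s}$. Intersecting the equalities $W'_r = W_{\pi(r)}$ and $W'_s = W_{\pi(s)}$ then yields
\[
\{w'_{r,s}\} \;=\; W'_r \cap W'_s \;=\; W_{\pi(r)} \cap W_{\pi(s)} \;=\; \{w_{\pi(r),\pi(s)}\},
\]
which is exactly the missing edge-level identification; no change to $\pi$ is required.

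Your Hall-type route, by contrast, cannot close the gap when the weights are concrete reals. Take $n=7$, $i=1$, identical features, $w_{1,j}=w'_{1,j}=1$ for all $j\neq 1$, and let the induced $\{0,1\}$-weighted graph on $\{2,\dots,7\}$ be a $6$-cycle in $W$ versus two disjoint triangles in $W'$. Then every $j\neq 1$ has $W_j = W'_j = \#\{0,0,0,0,1,1,1\}$, so $\Delta_1(W,X) = \Delta_1(W',X')$, yet no $\pi\in\nabla_1$ can carry a $6$-cycle to two triangles. Hence no refinement of $\pi_0$ via matching on the numerical multisets $W_j$ can succeed in general; the paper's passage to formal indeterminates is precisely the device that sidesteps this obstruction, and it is the key idea your proposal is missing.
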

\begin{proof}[{\bf Proof of \cref{lem: invariant-object}}]
	Denote the objects in the multiset by $\alpha_j = (x_j,w_{i,j},W_j)$ and $\alpha_j' = (x_j',w_{i,j}',W_j')$ for every $j\in [n]_{-i}$. To prove the sufficiency part, note that if $W' = \sigma_{\pi}(W)$ and $X' = \lambda_{\pi}(X)$ for some $\pi\in \nabla_i$, then we have $x_{i}'=x_i$ (because $\pi(i)=i$) as well as $x_{j}'=x_{\pi(j)}$ and $w_{i,j}'=w_{i,\pi(j)}$ for $j \neq i$. Letting $W_s'=\#\{(x_r',w_{s,r}')\mid r \in [n]_{-s}\}$, it is straightforward to see that $W_j'=W_{\pi(j)}$. So far, we have shown that for $j\neq i$:
	\begin{align}
	\alpha_j' = (x_j',w_{i,j}',W_j')=(x_{\pi(j)},w_{i,\pi(j)},W_{\pi(j)})=\alpha_{\pi(j)}.
	\end{align}
	This leads to $\#\{\alpha_j \mid j\in [n]_{-i}\} = \#\{\alpha_j' \mid j\in [n]_{-i}\}$ and thus $\Delta_i(W,X) = \Delta_i(W',X')$.
	
	To prove the necessity part, note that if $\Delta_i(W,X)=\Delta_i(W',X')$, then  both of the following conditions hold
	\begin{align}
	\label{a10}&x_i = x_i',\\
	\label{a11}&\#\left\{(x_j,w_{i,j},W_j) \mid j\in [n]_{-i}\right\}=\#\left\{(x_j',w_{i,j}',W_j') \mid j\in [n]_{-i}\right\}.
	\end{align}
	From \eqref{a11}, we conclude that there exists a permutation $\tilde{\pi}$ over $[n]_{-i}$ such that $\alpha_j'=\alpha_{\tilde{\pi}(j)}$. The permutation $\tilde{\pi}$ can be extended to a permutation $\pi$ over $[n]$ by defining $\pi(i)=i$ and $\pi(j)=\tilde{\pi}(j)$ for $j\neq i$. Note that $\pi\in \nabla_i$. We now claim that $W' = \sigma_{\pi}(W)$ and $X' = \lambda_{\pi}(X)$. Note that $X' = \lambda_{\pi}(X)$ trivially holds because for $j\neq i$,  we have $\alpha_j'=\alpha_{\tilde{\pi}(j)}=\alpha_{\pi(j)}$ which leads to $x_j' = x_{\pi(j)}$ and we already have $x_i' = x_i = x_{\pi(i)}$ from \eqref{a10}. Hence, $X' = \lambda_{\pi}(X)$. 
	
	To show that $W' = \sigma_{\pi}(W)$, it suffices to show that for all $r,s\in[n]$ with $r\neq s$, we have $w_{r,s}'=w_{\pi(r),\pi(s)}$. Considering this equality for $r=i$, note that $w_{i,j}'=w_{i,\tilde{\pi}(j)} = w_{\pi(i),\pi(j)}$ holds for $j\neq i$ because  $\alpha_j'=\alpha_{\tilde{\pi}(j)}$. Thus, it remains to prove $w_{r,s}'=w_{\pi(r),\pi(s)}$ when $r$ and $s$ are not equal to $i$. To this end, we proceed as follows. From $\alpha_j'=\alpha_{\tilde{\pi}(j)}=\alpha_{{\pi}(j)}$, we conclude that
	\begin{align}
	\label{a12}\quad W_j'=W_{{\pi}(j)}\quad\text{ for } j\in [n]_{-i}.
	\end{align} 
	This means that for $j\in [n]_{-i}$
	\begin{align}\label{ad}
	\left\{(x_{\ell}',w_{j,\ell}')\mid \ell \in [n]_{-j}\right\} = 	\left\{(x_{k},w_{\pi(j),k})\mid k \in [n]_{-\pi(j)}\right\}.
	\end{align}
	Note that we already know that $x_{r}' = x_{\pi(r)}$ holds for all $r\in [n]$ and since the elements $x_1',\ldots,x_n'$ are distinct as well as $x_1,\ldots,x_n$, this $\pi$ is unique. Having this, \cref{ad} implies that $w_{j,\ell}'= w_{\pi(j),\pi(\ell)}$ holds for all $\ell\in [n]_{-j}$. Hence, we showed that $w_{j,\ell}'=w_{\pi(j),\pi(\ell)}$ holds for all $j,\ell\in [n]_{-i}$. The case where either $\ell$ or $j$ is equal to $i$ was proven above.  As a result,  we have $W' = \sigma_{\pi}(W)$, which completes the proof.
\end{proof}

\begin{lemma}\label{lem: gi-Deltai}
	Suppose $G=([n],W,X)$ and $G'=([n],W',X')$ are two graphs. Then $\beta_i(W,X)=\beta_i(W',X')$ if and only if $\Delta_i(W,X)=\Delta_i(W',X')$.
\end{lemma}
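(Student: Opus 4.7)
\textbf{Proof proposal for \cref{lem: gi-Deltai}.}

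My plan is to exploit the nested structure of the basis function $\beta_i$: the outer aggregation $\sum_{j \in [n]_{-i}} \psi_2(\cdot)$ is designed (via the MEF $\psi_2$) to encode a multiset of triples, and the innermost aggregation $\sum_{\ell \in [n]_{-j}} \psi_1(w_{j,\ell})$ is designed (via the MEF $\psi_1$) to encode the multiset of weights on row $j$. Meanwhile, $\Delta_i(W,X)$ already packages the same information as a nested multiset with $W_j = \#\{w_{j,r}\mid r\in[n]\}$ on the inner level and $\#\{(x_j, w_{i,j}, W_j)\mid j\in [n]_{-i}\}$ on the outer level. The two directions of the equivalence then correspond to the two directions of the MEF property applied twice.

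For the easy direction, assume $\Delta_i(W,X)=\Delta_i(W',X')$. Then $x_i = x_i'$ and there is a permutation $\tau$ on $[n]_{-i}$ with $x_j' = x_{\tau(j)}$, $w_{i,j}' = w_{i,\tau(j)}$, and $W_j' = W_{\tau(j)}$ as multisets. Since $w_{j,j} = w_{j,j}' = 0$ sits on the diagonal of both matrices, removing one copy of $0$ from $W_j'$ and from $W_{\tau(j)}$ gives $\#\{w_{j,\ell}'\mid \ell\in [n]_{-j}\} = \#\{w_{\tau(j),\ell}\mid \ell\in [n]_{-\tau(j)}\}$. A sum of $\psi_1$-values depends only on the underlying multiset, so the inner aggregates $T_j := \sum_{\ell \in [n]_{-j}}\psi_1(w_{j,\ell})$ satisfy $T_j' = T_{\tau(j)}$. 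Reindexing the outer sum by $\tau$ then yields $\beta_i(W',X') = \beta_i(W,X)$.

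The hard direction is to go back from equal $\beta_i$'s to equal $\Delta_i$'s. Here is where I expect the only real obstacle, since one must propagate a single sum equality through two MEF decodings. Starting from $\beta_i(W,X)=\beta_i(W',X')$ I read off $x_i = x_i'$, and the remaining coordinates give
\begin{equation*}
\sum_{j\in [n]_{-i}} \psi_2\bigl(x_j, w_{i,j}, T_j\bigr) \;=\; \sum_{j\in [n]_{-i}} \psi_2\bigl(x_j', w_{i,j}', T_j'\bigr).
\end{equation*}
Applying the MEF property of $\psi_2$ (note the correct dimension $p(\psi_1)+d+1$), this forces a permutation $\tau$ on $[n]_{-i}$ with $(x_j', w_{i,j}', T_j') = (x_{\tau(j)}, w_{i,\tau(j)}, T_{\tau(j)})$ for every $j$. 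The identity $T_j' = T_{\tau(j)}$ is itself a sum equality, and a second application of the MEF property, now for $\psi_1$, gives $\#\{w_{j,\ell}'\mid \ell\in [n]_{-j}\} = \#\{w_{\tau(j),\ell}\mid \ell\in [n]_{-\tau(j)}\}$; reinstating the diagonal zero on each side yields $W_j' = W_{\tau(j)}$.

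Combining the matching of $x_j'$, $w_{i,j}'$, $W_j'$ with $x_i = x_i'$ gives $\#\{(x_j', w_{i,j}', W_j')\mid j\in [n]_{-i}\} = \#\{(x_j, w_{i,j}, W_j)\mid j\in [n]_{-i}\}$, i.e.\ $\Delta_i(W',X') = \Delta_i(W,X)$. The only technically subtle point in the whole argument is the diagonal-zero bookkeeping, which is needed because the sum in $\beta_i$ ranges over $[n]_{-j}$ while $W_j$ is defined over all of $[n]$; both directions handle it identically by using that $w_{j,j}=w_{j,j}'=0$ are both present and can be removed or added back without affecting the relevant multiset equalities.
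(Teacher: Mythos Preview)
Your proof is correct and follows essentially the same approach as the paper: both arguments unwind the nested MEF structure of $\beta_i$, first applying the $\psi_2$ MEF property to the outer sum and then the $\psi_1$ MEF property to the inner sums, to pass between $\beta_i$-equality and $\Delta_i$-equality. You are in fact slightly more careful than the paper in handling the diagonal-zero bookkeeping (the paper tacitly identifies $\sum_{\ell\in[n]_{-j}}\psi_1(w_{j,\ell})$ with $W_j$ without commenting on the extra $w_{j,j}=0$), but this does not change the substance of the argument.
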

\begin{proof}[{\bf Proof of \cref{lem: gi-Deltai}}]
	Note that $\beta_i(W,X)=\beta_i(W',X')$ holds if and only if both of the following conditions hold
	\begin{align}
	\label{a9}&x_i = x_i',\\
	\label{a7}&\sum_{j\in [n]_{-i}}\psi_2\left(x_j,w_{i,j},\sum_{\ell\in [n]_{-j}}\psi_1\left(x_{\ell}, w_{j,\ell}\right)\right)=\sum_{j\in [n]_{-i}}\psi_2\left(x_j',w_{i,j}',\sum_{\ell\in [n]_{-j}}\psi_1\left(x_{\ell}', w_{j,\ell}'\right)\right).
	\end{align}
	Since $\psi_2$ is an MEF (defined in Definition~\ref{def: cef}), The equation \eqref{a7} holds if and only if
	\begin{align}
	\label{a8}\#\{(x_j,w_{i,j},\sum_{\ell\in [n]_{-j}}\psi_1\left(x_{\ell}, w_{j,\ell}\right)) \mid j\in [n]_{-i}\} = \#\{(x_j',w_{i,j}',\sum_{\ell\in [n]_{-j}}\psi_1\left(x_{\ell}', w_{j,\ell}'\right)) \mid j\in [n]_{-i}\}.
	\end{align}
	Due to the fact that $\psi_1$ is a MEF, we know that 
	\begin{align}
	\sum_{\ell\in [n]_{-j}}\psi_1\left(x_{\ell}, w_{j,\ell}\right) = \sum_{\ell\in [n]_{-j}}\psi_1\left(x_{\ell}',w_{j,\ell}'\right)\quad \Longleftrightarrow \quad W_j = W_j'.
	\end{align}
	Therefore, \eqref{a8} holds if and only if 
	\begin{align}\label{aa4}
	\#\left\{\left(x_j,w_{i,j},W_j\right) \mid j\in [n]_{-i}\right\} = \#\left\{\left(x_j',w_{i,j}',W_j'\right) \mid j\in [n]_{-i}\right\}.
	\end{align}
	Knowing that \eqref{a7} is equivalent to \eqref{aa4}, $\beta_i(W,X)=\beta_i(W',X')$ holds if and only if 
	\begin{align}
	\label{aa9}&x_i = x_i',\\
	\label{aa7}&\#\left\{\left(x_j,w_{i,j},W_j\right) \mid j\in [n]_{-i}\right\} = \#\left\{\left(x_j',w_{i,j}',W_j'\right) \mid j\in [n]_{-i}\right\}.
	\end{align}
	Finally, \eqref{aa9} and \eqref{aa7} hold if and only if $\Delta_i(W,X)=\Delta_i(W',X')$, due to \cref{def: delta}. Hence, we showed that $\beta_i(W,X)=\beta_i(W',X')$ if and only if $\Delta_i(W,X)=\Delta_i(W',X')$.
\end{proof}

\begin{proof}[{\bf Proof of Proposition~\ref{prop: beta}}]
	We prove the additional property first. Note that if $\beta_i(W,X)=\beta_i(W',X')$ holds for two graphs in $\tilde{\G}_{n,d}$, then due to Lemma~\ref{lem: gi-Deltai}, $\Delta_i(W,X)=\Delta_i(W',X')$ and thus from Lemma~\ref{lem: invariant-object}, there exists $\pi\in \nabla_i$ such that $W' = \sigma_{\pi}(W)$ and $X' = \lambda_{\pi}(X)$.
	
	It remains to prove that $\mathcal{B}\in \F_{n,d}$. To this end, we use Proposition~\ref{prop: acf-equivalent}. First, we prove the condition \eqref{a19} for $\mathcal{B}$. To do so, pick an arbitrary $i_0\in [n]$. Given $r,s \in [n]_{-i_0}$, let $W' = \sigma_{\pi_{r,s}}(W)$ and $X' = \lambda_{\pi_{r,s}}(X)$ and note that $\pi_{r,s} \in \nabla_{i_0}$. Therefore, due to Lemma~\ref{lem: invariant-object}, $\Delta_{i_0}(W,X)=\Delta_{i_0}(W',X')$ and then from Lemma~\ref{lem: gi-Deltai}, $\beta_{i_0}(W,X)=\beta_{i_0}(W',X')$. 
	
	As a next step, we prove condition \eqref{a29}. Due to \eqref{ab2-app}, we have
	\begin{align}\label{ab1}
	\beta_{i_0}\left(W,X\right) = \left(x_{i_0},\sum_{r\in [n]_{-i_0}}\psi_2\left(x_r,w_{i_0,r},\sum_{\ell\in [n]_{-r}}\psi_1\left(x_{\ell}, w_{r,\ell}\right)\right)\right).
	\end{align}
	Now replace $W$ by $\sigma_{\pi_{j,i_0}}(W)$ and $X$ by $\lambda_{\pi_{j,i_0}}(X)$ in \eqref{ab1} to get
	\begin{align}
	\beta_{i_0}\left(\sigma_{\pi_{j,i_0}}(W),\lambda_{\pi_{j,i_0}}(X)\right) = \left(x_j,\sum_{r\in [n]_{-j}}\psi_2\left(x_r,w_{j,r},\sum_{\ell\in [n]_{-r}}\psi_1\left(x_{\ell}, w_{r,\ell}\right)\right)\right)=\beta_{j}\left(W,X\right).
	\end{align}
	Hence, $\mathcal{B}$ satisfies condition~\eqref{a29} as well as condition~\eqref{a19}, leading to $\mathcal{B}\in \F_{n,d}$.
\end{proof}

\section{Proof of Theorem~\ref{theo: rho}}\label{app: theo-rho}
We first start with a necessary condition for permutation-compatible functions that will be used throughout the proof.
\begin{corollary}\label{corr: node-func}
	Consider a function $F = (f_1,\ldots,f_n)$ over $\G_{n,d}$ and assume $F\in\mathcal{F}_{n,d}$. Then, for any $G=([n],W,X)\in \G_{n,d}$, we have:
	\begin{align}
			\label{a40p}\text{For every }i\in [n]:&\quad
			f_{i}(\sigma_{\pi}(W),\lambda_{\pi}(X))=f_{i}(W,X)\quad \forall \pi \in \nabla_i.\\
			\label{a41} \text{For every $i,j\in [n]$ with } i\neq j:& \quad f_{j}(W,X) = f_{i}(\sigma_{\pi_{i,j}}(W),\lambda_{\pi_{i,j}}(X)).
		\end{align}
\end{corollary}
\begin{proof}[{\bf Proof of \cref{corr: node-func}}]
    Plug in $\pi\in \nabla_i$ and $\pi=\pi_{i,j}$ in the definition of permutation compatibility in \eqref{a23} to obtain \eqref{a40p} and \eqref{a41}, respectively. 
\end{proof}
\begin{proof}[{\bf Proof of \cref{theo: rho}}]
    Pick an arbitrary $i_0\in [n]$. Due to Corollary~\ref{corr: node-func}, we know that
    \begin{align}\label{a40}
	f_{i_0}(\sigma_{\pi}(W),\lambda_{\pi}(X))=f_{i_0}(W,X)\quad \forall \pi \in \nabla_{i_0}.
	\end{align}
    We claim that there exists a function $\rho$ such that for all  $G=([n],W,X)\in\tilde{\G}_{n,d}$, we have $f_{i_0}(W,X) = \rho\left(\beta_{i_0}(W,X)\right)$.
    
	To see this, consider graphs $G=([n],W,X)$ and $G'=([n],W',X')$ in $\tilde{\G}_{n,d}$. It suffices to show that $\beta_{i_0}(W',X')=\beta_{i_0}(W,X)$ results in $f_{i_0}(W',X')=f_{i_0}(W,X)$.
    If $\beta_{i_0}(W',X')=\beta_{i_0}(W,X)$, \cref{prop: beta} implies that there exists $\pi\in\nabla_{i_0}$ such that $W' = \sigma_{\pi}(W)$ and $X' = \lambda_{\pi}(X)$. Note that we are allowed to use \cref{prop: beta} since the graphs in $\tilde{\G}_{n,d}$ have distinct node features. Hence, from \eqref{a40}, we can write 
    \begin{align}
    f_{i_0}\left(W',X'\right) = f_{i_0}\left(\sigma_{\pi}(W),\lambda_{\pi}(X)\right) = f_{i_0}\left(W,X\right).
    \end{align} 
    Therefore, we have shown the existence of a function $\rho$ such that
    \begin{equation}\label{a24b}
		f_{i_0}(W,X) = \rho\left(\beta_{i_0}(W,X)\right).
	\end{equation} 
	Now, it suffices to prove that $f_{j}(W,X) = \rho(\beta_{j}(W,X))$ holds for $j\in [n]_{-i_0}$. To this end, note that due to Proposition~\ref{prop: beta}, $\mathcal{B}\in \F_{n,d}$ and we also know that $F\in \F_{n,d}$.  Therefore, due to \eqref{a41}, by setting $i=i_0$, for all $j\neq i_0$ we have
    \begin{align}
        \label{z2}f_{j}(W,X) &= f_{i_0}(\sigma_{\pi_{j,i_0}}(W),\lambda_{\pi_{j,i_0}}(X)), \\ \label{z3k}\beta_{j}(W,X) &= \beta_{i_0}(\sigma_{\pi_{j,i_0}}(W),\lambda_{\pi_{j,i_0}}(X)).
    \end{align}
    Hence, putting together \eqref{z2}, \eqref{a24b}, and \eqref{z3k}, respectively, implies the following equation for all $j\neq i_0$:
    \begin{align}\label{a24c}
        f_{j}(W,X) = f_{i_0}(\sigma_{\pi_{j,i_0}}(W),\lambda_{\pi_{j,i_0}}(X)) = \rho\left(\beta_{i_0}(\sigma_{\pi_{j,i_0}}(W),\lambda_{\pi_{j,i_0}}(X))\right) = \rho\left(\beta_{j}(W,X)\right).
    \end{align}
    Therefore, we proved the existence of a function $\rho$ such that $f_{i}(W,X) = \rho\left(\beta_{i}(W,X)\right)$ for all $i\in [n]$ and $G=([n],W,X)\in\tilde{\G}_{n,d}$.
\end{proof}

\section{Proof of Corollary~\ref{corr: cont}}\label{app: corr-cont}
\begin{proof}
Consider the proof of Theorem~\ref{theo: distinct}. Due to the choice of $\phi_1$, $\phi_2$, and $\phi_3$ in \eqref{p1}, \eqref{p2}, and \eqref{p3}, the continuity of $\phi_1$, $\phi_2$, and $\phi_3$ follows from continuity of $\psi_1$, $\psi_2$, and $\rho$. The MEFs introduced in Proposition~\ref{prop: cef} are continuous and thus $\psi_1$ and $\psi_2$ can be chosen from the continuous class of functions.  Moreover, when $\psi_1$ and $\psi_2$ are continuous, $\beta_i$ is continuous. Having the continuity of $f_i$ and $\beta_i$ implies that the function $\rho$ in \eqref{aaa5} also must be continuous over the range of $\beta_i$. Therefore, there exist continuous functions $\phi_1$, $\phi_2$, and $\phi_3$ to generate $F$.   
\end{proof}

\section{Proof of \cref{theo: arbitrary}}\label{app: theo-arbitrary}
\begin{proof}
	It suffices to construct a GNN with the conditions mentioned in the statement. Before introducing this construction, we first show an intermediate result. Using the notation introduced in the statement, fix a basis function $\mathcal{B}=(\beta_1,\ldots,\beta_n)$  defined in \cref{def: beta} over $\mathcal{G}_{n,d+d_0}$. Then we claim that there exists a function $\rho$ such that
	\begin{align}\label{rhoi}
		\rho\left(\beta_i(W,\tilde{X})\right) = \left(W,X,i\right).
	\end{align} 
	To show \eqref{rhoi}, it suffices to show that if $\beta_i(W,\tilde{X}) = \beta_{i'}(W',\tilde{X}')$, then $(W,X,i)=(W',X',i')$, where $G=([n],W,X)$ and $G'=([n],W',X')$ are two graphs in $\mathcal{G}_{n,d}$. Similar to $\tilde{X}$, the term $\tilde{X}'$ denotes the feature matrix $(\tilde{x}_1^{'\top},\ldots, \tilde{x}_n^{'\top})$, where $\tilde{x}_i' = (x_i',y_i)$ with known $y_i$'s given in the statement of the theorem. Having $\beta_i(W,\tilde{X}) = \beta_{i'}(W',\tilde{X}')$, the equality of the first coordinates leads to $\tilde{x}_i=\tilde{x}_{i'}'$. Recall that $\tilde{x}_i = (x_i,y_i)$ and $\tilde{x}_{i'}' = (x_{i'}',y_{i'})$ and thus $(x_i,y_i)= (x_{i'}',y_{i'})$. This results in $y_i=y_{i'}$. Having  $y_i=y_{i'}$ then implies $i=i'$ because $y_1,\ldots,y_n$ are distinct. From $i=i'$, we conclude that $\beta_i(W,\tilde{X}) = \beta_{i}(W',\tilde{X}')$. Knowing that each of $\tilde{X}$ and $\tilde{X}'$ consist of distinct node features, \cref{prop: beta} implies that there exits $\pi\in\nabla_i$ such that $W'=\sigma_{\pi}(W)$ and $\tilde{X}'=\lambda_{\pi}(\tilde{X})$. Particularly, $\tilde{X}'=\lambda_{\pi}(\tilde{X})$ means 
	\begin{align}\label{s3}
		\left(\tilde{x}_1',\ldots, \tilde{x}_n'\right) = \left(\tilde{x}_{\pi(1)},\ldots, \tilde{x}_{\pi(n)}\right).
	\end{align}
	Since the same $y_i$'s are augmented for both $X$ and $X'$, i.e., $\tilde{x}_i' = (x_i',y_i)$ and $\tilde{x}_i = (x_i,y_i)$, \cref{s3} leads to
	\begin{align}\label{s4}
	\left(y_1,\ldots, y_n\right) = \left(y_{\pi(1)},\ldots, y_{\pi(n)}\right).
	\end{align}
	Since $y_1,\ldots,y_n$ are distinct, $\pi$ must be the identity permutation, i.e., $\pi(i)=i$ for all $i\in [n]$. As a result, $W'=\sigma_{\pi}(W)=W$ and $\tilde{X}'=\lambda_{\pi}(\tilde{X})=\tilde{X}$. The equality of the augmented features $\tilde{X}'=\tilde{X}$ then leads to the equality of the actual features, i.e., $X'=X$. Hence, $W'=W$, $X'=X$, and we already showed that $i=i'$. Therefore, $(W,X,i)=(W',X',i')$, which shows the existence of $\rho$ described in \eqref{rhoi}. 
	
	Having established the existence of $\rho$ in \eqref{rhoi}, as a next step, we seek to construct a GNN that represents the given graph function $F(W,X)$. Let $F(W,X)=(f_1(W,X),\ldots,f_n(W,X))$ and define $\theta(W,X,i)$ as follows:
	\begin{align}\label{theta}
		\theta\left(W,X,i\right) = f_i\left(W,X\right).
	\end{align}
	Next, define a GNN over $\mathcal{G}_{n,d+d_0}$ with three iterations as follows. In the first and second iterations, we reach $\beta_i$ at node $i$ similar to the proof of \cref{theo: distinct}. We repeat the argument here for self-sufficiency. We define $\phi_1$, $\phi_2$, and $\phi_3$ such that the resulted GNN $H^{(3)}$ satisfies $H^{(3)}(W,\tilde{X}) = F(W,X)$ for all $G=([n],W,X)\in\G_{n,d}$. First note that the GNN here receives $\tilde{X}$ as the input feature matrix (which lies in $\R^{(d+d_0)\times n}$). Therefore, for all $i\in[n]$ we have
	\begin{align}
		h_i^{(0)} = \tilde{x}_i. 
	\end{align}
	Define the function $\phi_1$ of the GNN as
	\begin{align}\label{p10}
	\phi_1\left(h_j^{(0)},h_{\ell}^{(0)},w_{j,\ell}\right)=\left(\frac{1}{n-1}h_j^{(0)},\,\,\psi_1\left(h_{\ell}^{(0)},w_{j,\ell}\right)\right).
	\end{align}
	This leads to the following formula for all $j\in [n]$:
	\begin{align}
	\nonumber h_j^{(1)} &= \sum_{\ell\in [n]_{-j}}\phi_1\left(h_j^{(0)},h_{\ell}^{(0)},w_{j,\ell}\right)\\
	&= \sum_{\ell\in [n]_{-j}}\left(\frac{1}{n-1}h_j^{(0)},\,\,\psi_1\left(h_{\ell}^{(0)},w_{j,\ell}\right)\right) = \left(\tilde{x}_j,\,\,\sum_{\ell\in [n]_{-j}}\psi_1\left(\tilde{x}_{\ell},w_{j,\ell}\right)\right).
	\end{align}
	Define the function $\phi_2$ of the GNN as
	\begin{align}\label{p20}
	\phi_2\left(h_i^{(1)},h_j^{(1)},w_{i,j}\right)=\left(\frac{1}{n-1}\left[h_i^{(1)}\right]_{1:d+d_0}, \,\,\psi_2\left(\left[h_j^{(1)}\right]_{1:d+d_0},w_{i,j},\left[h_j^{(1)}\right]_{d+d_0+1:\text{end}}\right)\right).
	\end{align}
	Hence, 
	\begin{align}
	\nonumber h_i^{(2)} &= \sum_{j\in [n]_{-i}}\phi_2\left(h_i^{(1)},h_j^{(1)},w_{i,j}\right) 
	\\
	&= \left(\tilde{x}_i,\,\,\sum_{j\in [n]_{-i}}\psi_2\left(\tilde{x}_j,w_{i,j},\sum_{\ell\in [n]_{-j}}\psi_1\left(\tilde{x}_{\ell}, w_{j,\ell}\right)\right)\right)=\beta_i(W,\tilde{X}).
	\end{align}
	So far, we showed that for all $i\in[n]$
	\begin{align}\label{adj}
		h_i^{(2)}(W,\tilde{X}) = \beta_i(W,\tilde{X}).
	\end{align}
	Finally, define the function $\phi_3$ of the GNN as
	\begin{align}\label{p30}
	\phi_3\left(h_i^{(2)},h_j^{(2)},w_{i,j}\right)=\frac{1}{n-1}\,\theta\left(\rho\left(h_i^{(2)}\right)\right),
	\end{align}
	with $\theta$ defined in \eqref{theta} and $\rho$ defined in \eqref{rhoi}. This results in 
	\begin{align}
	h_i^{(3)}(W,\tilde{X}) &= \sum_{j\in [n]_{-i}}\phi_3\left(h_i^{(2)}(W,\tilde{X}),h_j^{(2)}(W,\tilde{X}),w_{i,j}\right)\\
	&= \sum_{j\in [n]_{-i}}\frac{1}{n-1}\,\theta\left(\rho\left(h_i^{(2)}(W,\tilde{X})\right)\right)\\ \label{adj2}&=\theta\left(\rho\left(\beta_i(W,\tilde{X})\right)\right)=\theta\left(W,X,i\right)=f_i(W,X).
	\end{align}
Therefore, $H^{(3)}(W,\tilde{X}) = F(W,X)$ for all $G=([n],W,X)\in\G_{n,d}$.
\end{proof}

\section{Formal statement and proof of \cref{corr: hybrid}}\label{app: corr-arbitrary}
To formally state \cref{corr: hybrid}, consider the following definition.
\begin{definition}
    For $i_1,\ldots,i_k\in [n]$, we use $\nabla_{i_1,\ldots,i_k}$ to denote the set of all permutations over $[n]$ that fix $i_1,\ldots,i_k$. More formally,
    \begin{align}
        \nabla_{i_1,\ldots,i_k} = \left\{\pi \in S_n \mid  \forall j\in \left\{i_1,\ldots,i_k\right\}:\,\, \pi(j)=j\right\}.
    \end{align}
    Note that in $\nabla_{i_1,\ldots,i_k}$, we omit the dependency to $n$ for simplicity.
\end{definition}
\begin{corollary}[Formal]\label{corr: hybrid-formal}
	Consider the graph function $F=(f_1,\ldots,f_n)$ over $\G_{n,d}$ and assume that there exist $i_1,\ldots,i_k\in[n]$ such that $F$ satisfies \eqref{a23} for all permutations $\pi \in \nabla_{i_1,\ldots,i_k}$.\\
	Fix distinct values $y_{i_1},\ldots,y_{i_k}\in \R^{d_0}$ and for every graph $G=([n],W,X)\in \G_{n,d}$ do the following: (i) Choose $y_j$ for $j\in [n]\setminus \{i_1,\ldots,i_k\}$ such that the set of vectors $y_{i_1},\ldots,y_{i_k}$ is expanded to a set of distinct vectors $y_1,\ldots,y_n$. (ii) Augment $y_i$ to the feature $x_i$ to construct $\tilde{x}_i=(x_i,y_i)\in \R^{d+d_0}$ for all $i\in[n]$ and let $\tilde{X}=(\tilde{x}_1^{\top},\ldots,\tilde{x}_n^{\top})$. \\
	Then there exists a GNN $H^{(k)}$ with a finite $k\geq 0$ over $\G_{n,d+d_0}$ such that $H^{(k)}(W,\tilde{X}) = F(W,X)$ for all $G=([n],W,X)\in\G_{n,d}$.
\end{corollary}
\begin{proof}[{\bf Proof of \cref{corr: hybrid-formal}}]
	It suffices to construct a GNN with the conditions mentioned in the statement. Before introducing this construction, we first show an intermediate result. Using the notation introduced in the statement, fix a basis function $\mathcal{B}=(\beta_1,\ldots,\beta_n)$  defined in \cref{def: beta} over $\mathcal{G}_{n,d+d_0}$. Then we claim that there exists a function $\rho$ such that
	\begin{align}\label{rhoi-p}
		f_i(W,X) = \rho\left(\beta_i(W,\tilde{X})\right).
	\end{align} 
	To show \eqref{rhoi-p}, it suffices to show that if $\beta_i(W,\tilde{X}) = \beta_{i}(W',\tilde{X}')$, then $f_i(W,X)= f_i(W',X')$, where $G=([n],W,X)$ and $G'=([n],W',X')$ are two graphs in $\mathcal{G}_{n,d}$. Similar to $\tilde{X}$, the term $\tilde{X}'$ denotes the feature matrix $(\tilde{x}_1^{'\top},\ldots, \tilde{x}_n^{'\top})$, where $\tilde{x}_i' = (x_i',y_i')$ follows the augmentation scheme described in the statement. Moreover, note that $y_{j}=y_{j}'$ holds for $j\in \{i_1,\ldots,i_k\}$ but not necessarily for other $y_j$'s. Having $\beta_i(W,\tilde{X}) = \beta_{i}(W',\tilde{X}')$ and knowing that each of the feature matrices $X'$ and $\tilde{X}'$ consist of distinct node features, \cref{prop: beta} implies that there exists $\pi\in \nabla_i$ such that $W'=\sigma_{\pi}(W)$ and $\tilde{X}'=\lambda_{\pi}(\tilde{X})$. Particularly, $\tilde{X}'=\lambda_{\pi}(\tilde{X})$ means
	\begin{align}\label{s3-p}
		\left(\tilde{x}_1',\ldots, \tilde{x}_n'\right) = \left(\tilde{x}_{\pi(1)},\ldots, \tilde{x}_{\pi(n)}\right).
	\end{align}
	Having $\tilde{x}_i' = (x_i',y_i')$ and $\tilde{x}_i = (x_i,y_i)$, \cref{s3-p} leads to
	\begin{align}\label{s4-p}
	\left(y_1',\ldots, y_n'\right) = \left(y_{\pi(1)},\ldots, y_{\pi(n)}\right).
	\end{align}
	Since each of the collections $y_1,\ldots,y_n$ and $y_1',\ldots,y_n'$ has distinct elements and $y_{j}=y_j'$ for all $j\in \{i_1,\ldots,i_k\}$, the permutation $\pi$ must satisfy $\pi(j)=j$ for all $j\in \{i_1,\ldots,i_k\}$. For such a $\pi$, the statement mentions that \eqref{a23} holds for $F$, i.e., we have $f_{\pi(i)}(W,X)=f_i(\sigma_{\pi}(W),\lambda_{\pi}(X))=f_i(W',X')$. Moreover, recall that $\pi\in \nabla_i$, i.e., $\pi(i)=i$ and thus $f_{i}(W,X)=f_i(W',X')$. As a result, the existence of the function $\rho$ in \eqref{rhoi-p} is proven. 
	
	Having established the existence of $\rho$ in \eqref{rhoi-p}, we follow the steps of GNN construction in \cref{theo: distinct}, i.e., we define $\phi_1$, $\phi_2$, and $\phi_3$ as introduced in \eqref{p1}, \eqref{p2}, and \eqref{p3}, respectively. Under such a construction, $H^{(3)}(W,\tilde{X})=F(W,X)$ for all $G=([n],W,X)\in \G_{n,d}$.
\end{proof}

\section{\bf Proof of \cref{theo: wl}}
\begin{definition}\label{def: wl-di}
For a graph $G=(n,W,C)\in \G^c_{n,d}$, recall that $w_{i,j}\in\{0,1\}$. Define the degree of node $i$ as $d_i=\sum_{j\in [n]_{-i}}w_{i,j}$ and its neighborhood as $N_i=\{j\in [n]_{-i} \mid w_{i,j}=1\}$. Then consider the following sequence of objects:
	\begin{align}
		A_i^{(1)}(G) = d_i, 
	\end{align}
	and for $k> 1$
	\begin{align}
	A_i^{(k)}(G) = \left(A_i^{(k-1)}(G), \,\#\{A_j^{(k-1)}(G) \mid j\in N_i\}\right).
	\end{align}
	Moreover, let $A^{(0)}(G)=n$ and for $k\geq 1$ let 
	\begin{align}
	    A^{(k)}(G) = \#\{A_i^{(k)}(G) \mid i\in [n]\}.
	\end{align}
\end{definition}
\begin{lemma}\label{lem: wl}
    Suppose $G_1$ and $G_2$ are two graphs in $\G_{n,d}^c$. Then 1-WL distinguishes between $G_1$ and $G_2$ if and only if $A^{(k)}(G_1) \neq A^{(k)}(G_2)$ for some $k\geq 1$.
\end{lemma}
\begin{proof}[\bf Proof of \cref{lem: wl}]
    Let $l_i^{(k)}(G)$ denote the label that 1-WL assigns to node $i$ of the graph $G$ at iteration $k$. First, we argue that $l_i^{(k)}(G)$ is in one-to-one correspondence with $A_i^{(k)}(G)$ for every node $i\in[n]$ and iteration $k\geq 1$, i.e., $l_i^{(k)}(G_1)=l_i^{(k)}(G_2)$ if and only if $A_i^{(k)}(G_1)=A_i^{(k)}(G_2)$. To see this, note that when 1-WL starts with identical labels $l_i^{(0)}(G) = c$ for all $i\in [n]$, it produces
    \begin{align}
        l_i^{(1)}(G) = (c,\#\{\underbrace{c,\ldots,c}_{d_i}\}).
    \end{align}
    Since $c$ is fixed, $l_i^{(1)}(G)$ is in one-to-one correspondence with $A_i^{(1)}(G) =d_i$. At each iteration $k$, the 1-WL's updated label satisfies $l_i^{(k)}(G)=(l_i^{(k-1)}(G),\#\{l_j^{(k-1)}(G) \mid j\in N_i\})$. Using the induction hypothesis on $k-1$, $l_r^{(k-1)}(G)$ is in one-to-one correspondence with $A_r^{(k-1)}(G)$ and thus $l_i^{(k)}(G)=(l_i^{(k-1)}(G),\#\{l_j^{(k-1)}(G) \mid j\in N_i\})$ is in one-to-one correspondence with $A_i^{(k)}(G) = (A_i^{(k-1)}(G), \,\#\{A_j^{(k-1)}(G) \mid j\in N_i\})$. Hence, the induction is proven, i.e., $l_i^{(k)}(G)$ is in one-to-one correspondence to $A_i^{(k)}(G)$ for every node $i$ and iteration $k\geq 1$. 
    
    Note that for $G_1,G_2\in\G_{n,d}^c$, 1-WL distinguishes between $G_1$ and $G_2$ if and only if $\#\{l_i^{(k)}(G_1) \mid i\in [n]\} \neq \#\{l_i^{(k)}(G_2) \mid i\in [n]\}$ for some $k\geq 1$. Having the one-to-one correspondence between $l_i^{(k)}(G)$ and $A_i^{(k)}(G)$, discussed above, this is equivalent to $\#\{A_i^{(k)}(G_1) \mid i\in [n]\} \neq \#\{A^{(k)}(G_2) \mid i\in [n]\}$, i.e., $A^{(k)}(G_1) \neq A^{(k)}(G_2)$. Hence, 1-WL distinguishes between $G_1$ and $G_2$ if and only if $A^{(k)}(G_1) \neq A^{(k)}(G_2)$ for some $k\geq 1$.
\end{proof}
\begin{proof}[\bf Proof of \cref{theo: wl}]
    We use the notation of \cref{def: wl-di} throughout the proof. Moreover, for a graph $G=(n,W,C)\in \G^c_{n,d}$ and GNN $H^{(k)}=(h_1^{(k)},\ldots,h_n^{(k)})$, we use the notation $h_i^{(k)}(G)$ to refer to $h_i^{(k)}(W,C)$. We also use $l_i^{(k)}(G)$ to refer to the label that 1-WL assigns to node $i$ of the graph $G$ at iteration $k$. Finally, the initial (identical) labels in the 1-WL algorithm are set to be the node features $x_i=c$.  Based on these notations, the necessity and sufficiency proofs are as follows.\\
	
	\noindent{\bf Necessity.} Suppose 1-WL test cannot distinguish between $G_1$ and $G_2$, then \cref{lem: wl} implies that $A^{(k)}(G_1) = A^{(k)}(G_2)$ for all $k\geq 1$. We want to show that GNNs cannot distinguish between $G_1$ and $G_2$. Considering a GNN $H^{(k)}=(h_1^{(k)},\ldots,h_n^{(k)})$ with the inner functions $\phi_k$, it suffices to show that $\#\{h_i^{(k)}(G_1)\mid i\in [n]\} = \#\{h_i^{(k)}(G_2)\mid i\in [n]\} $ holds for all $k$. Having the functions $\phi_k(\cdot)$  and the value of $c$ fixed, we use induction to show that for every $k\geq 1$, there exists a function $\lambda_k$ such that $h_i^{(k)}(G)= \lambda_k(A_i^{(k)}(G),A^{(k-1)}(G))$. Note that the GNN starts with $h_i^{(0)}=c$ for all $i\in [n]$ and produces the following in the first iteration:
	\begin{align}
		h_i^{(1)}=\sum_{j\in[n]_{-i}}\phi_1(c,c,w_{i,j}) = d_i\, \phi_1(c,c,1) + (n-1-d_i)\, \phi_1(c,c,0).
	\end{align}
	Having functions $\phi_k$ and $c$ fixed, $h_i^{(1)}$ is only a function of $A_i^{(1)}(G)=d_i$ and $A^{(0)}(G)=n$. Hence, there exists a function $\lambda_1$ such that $h_i^{(1)}(G)= \lambda_1(A_i^{(1)}(G),A^{(0)}(G))$. Given the induction hypothesis for $k$, we assume $h_i^{(k)}(G)= \lambda_k(A_i^{(k)}(G),A^{(k-1)}(G))$ for all $i\in [n]$ and prove it for $k+1$. To this end, note that
	\begin{align}
		h_i^{(k+1)} &=\sum_{j\in[n]_{-i}}\phi_k\left(h_i^{(k)},h_j^{(k)},w_{i,j}\right) \\
		&= \sum_{j\in N_i} \phi_k\left(\lambda_k\left(A_i^{(k)}(G),A^{(k)}(G)\right),\lambda_k\left(A_j^{(k)}(G),A^{(k)}(G)\right),1\right) \\& + \sum_{j\notin N_i\cup \{i\}} \phi_k\left(\lambda_k\left(A_i^{(k)}(G),A^{(k)}(G)\right),\lambda_k\left(A_j^{(k)}(G),A^{(k)}(G)\right),0\right).
	\end{align}
	Hence, $h_i^{(k+1)}$ can be uniquely determined in terms of $A_i^{(k)}(G)$, $A^{(k)}(G)$, $\#\{A_j^{(k)}(G) \mid j\in N_i\}$, and $\#\{A_j^{(k)}(G) \mid j\notin N_i\cup \{i\}\}$. These quantities themselves can be uniquely determined in terms of $A_i^{(k+1)}(G)$ and $A^{(k)}(G)$. To see this, note that $A_i^{(k)}(G)$ and $\#\{A_j^{(k)}(G) \mid j\in N_i\}$ are the first and the second component of $A_i^{(k+1)}(G)$. Further, $\#\{A_j^{(k)}(G) \mid j\notin N_i\cup \{i\}\}$ can be obtained by removing $A_i^{(k)}(G)$ and the elements of $\#\{A_j^{(k)}(G) \mid j\in N_i\}$ from $A^{(k)}(G)$.
	Hence, the function $\lambda_{k+1}$ exists such that $h_i^{(k+1)}(G)= \lambda_{k+1}(A_i^{(k+1)}(G),A^{(k)}(G))$ for all $i\in [n]$, which completes the induction.
	
	Having established the existence of $\lambda_k$ as described above, we proceed as follows: Given $A^{(k)}(G_1) = A^{(k)}(G_2)$ for all $k\geq 1$, we want to show that $\#\{h_i^{(k)}(G_1)\mid i\in [n]\} = \#\{h_i^{(k)}(G_2)\mid i\in [n]\} $ holds for all $k\geq 1$. To see this, note that 
	\begin{align}
	    \#\{h_i^{(k)}(G)\mid i\in [n]\} = \#\{\lambda_{k}(A_i^{(k)}(G),A^{(k-1)}(G))\mid i\in [n]\}.
	\end{align}
	Hence, $\#\{h_i^{(k)}(G)\mid i\in [n]\}$ is uniquely determined in terms of $A^{(k-1)}(G)$ and $\#\{A_i^{(k)} \mid i\in [n]\}=A^{(k)}(G)$. Therefore, the equations $A^{(k)}(G_1) = A^{(k)}(G_2)$ for all $k\geq 1$ leads to $\#\{h_i^{(k)}(G_1)\mid i\in [n]\} = \#\{h_i^{(k)}(G_2)\mid i\in [n]\}$ for all $k\geq 1$. \\
	
	\noindent{\bf Sufficiency.} For the sufficiency part, suppose 1-WL test can distinguish between $G_1$ and $G_2$. Therefore, there exists $k\geq 1$ such that $A^{(k)}(G_1)\neq A^{(k)}(G_2)$. To show that GNNs can also distinguish between $G_1$ and $G_2$, it suffices to build a GNN $H^{(k)}=(h_1^{(k)},\ldots,h_n^{(k)})$ such that for any two graphs $G_1$ and $G_2$, the equality $\#\{h_i^{(k)}(G_1) \mid i\in[n]\}=\#\{h_i^{(k)}(G_2) \mid i\in[n]\}$ implies $A^{(k)}(G_1)= A^{(k)}(G_2)$. To build the aforementioned GNN, we set 
	\begin{align}
	    &\phi_1(h_i^{(0)},h_j^{(0)},w_{i,j}) = w_{i,j},\\
	    \label{y1} \text{for } k > 1:\quad&\phi_k(h_i^{(k-1)},h_j^{(k-1)},w_{i,j}) =\left(\frac{1}{n-1}h_i^{(k-1)} ,  \psi_k\left(w_{i,j}\,h_j^{(k-1)}\right)\right),
	\end{align}
	where the multiplication $w_{i,j}\,h_j^{(k-1)}$ is either $h_j^{(k-1)}$ or zero depending on $w_{i,j}\in \{0,1\}$. Moreover, note that $\psi_k\in \Psi_{m_k,n-1}$ is chosen based on the candidates introduced in \cref{prop: cef}, for some appropriate $m_k$. Having this GNN, as a next step, we show that there exists a function $\lambda_k$ such that  $A_i^{(k)}(G)= \lambda_k(h_i^{(k)}(G))$. We show this by induction on $k$. 
	For $k=1$  
	\begin{align}
	    h_i^{(1)} = \sum_{j\in [n]_{-i}}\phi_1\left(h_i^{(0)},h_j^{(0)},w_{i,j}\right) = \sum_{j\in [n]_{-i}}w_{i,j} = d_i.
	\end{align} 
	Hence, $h_i^{(1)}(G) = A_i^{(1)}(G)=d_i$ and thus $\lambda_1$ exists. Suppose the induction hypothesis holds for $k-1$. We want to prove it for $k$. From \cref{y1}, we have
	\begin{align}
	    \label{y3}h_i^{(k)} = \sum_{j\in [n]_{-i}}\phi_k\left(h_i^{(k-1)},h_j^{(k-1)},w_{i,j}\right) = \left(h_i^{(k-1)} ,  \sum_{j\in [n]_{-i}}\psi_k\left(w_{i,j}\,h_j^{(k-1)}\right)\right).
	\end{align}
	Due to the definition of MEFs (see \cref{def: cef}), having $\sum_{j\in [n]_{-i}}\psi_k\left(w_{i,j}\,h_j^{(k-1)}\right)$ uniquely determines the following multiset
	\begin{align}
	    \label{y2}\#\left\{w_{i,j}h_j^{(k-1)}\mid j\in [n]_{-i}\right\} = \#\left\{h_j^{(k-1)}\mid j\in N_i\right\} \cup \#\left\{\underbrace{\mathbf{0},  \ldots,\mathbf{0}}_{n-1-d_i}\right\},
	\end{align}
	where $\mathbf{0}$ is a vector of all zeros with the same size as $h_j^{(k-1)}$ corresponding to $w_{i,j}\,h_j^{(k-1)}$ when $w_{i,j}=0$. Note that \eqref{y2} also uniquely determines $\#\{h_j^{(k-1)}\mid j\in N_i\}$. This is because $h_j^{(k-1)}$ cannot be $\mathbf{0}$ if node $j$ is not an isolated node and we assumed that there are no isolated nodes in the graph.
	
	Now let us summarize the induction argument: Given $h_i^{(k)}$ in \eqref{y3}, we obtain $h_i^{(k-1)}$ and $\sum_{j\in [n]_{-i}}\psi_k\left(w_{i,j}\,h_j^{(k-1)}\right)$. Then, $h_i^{(k-1)}$ uniquely determines $A_i^{(k-1)}(G)$ through $A_i^{(k-1)}(G)= \lambda_{k-1}(h_i^{(k-1)}(G))$. Moreover, $\sum_{j\in [n]_{-i}}\psi_k\left(w_{i,j}\,h_j^{(k-1)}\right)$ uniquely determines $\#\{h_j^{(k-1)}\mid j\in N_i\}$, as discussed above. The multiset $\#\{h_j^{(k-1)}\mid j\in N_i\}$ then uniquely determines $\#\{A_j^{(k-1)}\mid j\in N_i\}$  because $A_j^{(k-1)}(G)= \lambda_{k-1}(h_j^{(k-1)}(G))$. Therefore, given $h_i^{(k)}$, we can uniquely obtain $A_i^{(k-1)}(G)$ and $\#\{A_j^{(k-1)}\mid j\in N_i\}$, which are the components of $A_i^{(k)}(G)$. Hence,  $A_i^{(k)}(G)$ is a function $h_i^{(k)}(G)$ and thus $\lambda_k$ exists.
	
	Having established the existence of $\lambda_k$ described above, implies that $A^{(k)}(G)= \#\{A_i^{(k)}(G) \mid i\in [n]\}$ is a function of $\#\{h_i^{(k)}(G)\mid i\in [n]\}$. As a result, the equality $\#\{h_i^{(k)}(G_1) \mid i\in[n]\}=\#\{h_i^{(k)}(G_2) \mid i\in[n]\}$ implies $A^{(k)}(G_1)= A^{(k)}(G_2)$ which concludes the proof.
\end{proof}

\section{Proof of Lemma~\ref{lem: SP}}
\begin{proof}
	We want to show that if $\pi\in \nabla_1$, then $f_{\pi(i)}(W) = f_{i}(\sigma_{\pi}(W))$ holds for every $i\in [n]$. For $i=1$, we need to show that $f_{\pi(1)}(W) = f_{1}(\sigma_{\pi}(W))$ or equivalently $f_{1}(W) = f_{1}(\sigma_{\pi}(W))$ because $\pi(1)=1$. This holds since $f_1(W)=0$ for every valid weight matrix $W$. Therefore, it suffices to prove the argument when $i\neq 1$.
	
	As the next step, we show that if $r,s\in [n]_{-1}$ with $r\neq s$, then $f_{\pi(i)}(W) = f_{i}(\sigma_{\pi}(W))$ holds for $\pi=\pi_{r,s}$. The case $i=1$ is proven earlier. For the case $i=s$, we have $\pi_{r,s}(s)=r$ which means that we need to show that $f_{r}(W) = f_{s}(\sigma_{\pi_{r,s}}(W))$. This can be either verified algebraically using \eqref{Dist} or by the following combinatorial argument: \\
	Note that in general the distance between nodes $i$ and $j$ in the graph with the weight matrix $W$ equals the distance between nodes $\pi(i)$ and $\pi(j)$ in the graph with the weight matrix $\sigma_{\pi}(W)$. Now, consider $\pi=\pi_{r,s}$. This argument implies that the distance between $r$ and $1$ in the graph with the weight matrix $W$, i.e., $f_{r}(W)$ is equal to the distance between $\pi_{r,s}(r)=s$ and $\pi_{r,s}(1)=1$ in the graph with the weight matrix $\sigma_{\pi_{r,s}}(W)$, i.e., $f_{s}(\sigma_{\pi_{r,s}}(W))$. Therefore, $f_{r}(W)=f_{s}(\sigma_{\pi_{r,s}}(W))$. Using this argument for any $i\in [n]$, we conclude that for every distinct pair $r,s\in [n]_{-1}$, 
	\begin{align}\label{ary}
		f_{\pi_{r,s}(i)}(W) = f_{i}(\sigma_{\pi_{r,s}}(W)),
	\end{align}
	holds for all $i\in [n]$ and all valid weight matrices $W$.
		As the final step, note that due to \cref{prop: node-aif-basis}, \cref{ary} results in $f_{\pi(i)}(W) = f_{i}(\sigma_{\pi}(W))$ for every $\pi\in \nabla_1$.
\end{proof}

\section{Formal statement and proof of \cref{prop: SP}}\label{app: prop-SP}
\cref{prop: SP} is formally stated as follows.
\begin{proposition}[Formal]\label{prop: Sp-formal}
	Let $F(W,X)$ be the distance-to-node-1 function defined in \cref{node1} of \cref{ex: funcs}. Then the following holds: 
	\begin{enumerate}[(i)]
	    \item $F \notin \mathcal{F}_{n,d}$.
	    \item Since $F(W,X)$ ignores $X$, we use $F(W)$ and assume that some $X_0\in \R^{d\times n}$ is given (where $X_0=(y_1^{\top},\ldots,y_n^{\top})$). We also let $\G_{n,d}^0 = \{([n],W,X)\in\G_{n,d} \mid X=X_0\}$. Then, there exists a GNN $H^{(k)}$ with some finite $k\geq 0$ such that $H^{(k)}(W,X_0)=F(W)$ for all $G=([n],W,X_0)\in\G_{n,d}^{0}$ if and only if $y_1\neq y_{s}$ for all $s \neq 1$. One simple example for such $X_0$ is $X_0=(1,0,\ldots,0)$ with $d=1$.
	\end{enumerate}
\end{proposition}
\begin{proof}[{\bf Proof of \cref{prop: Sp-formal}}]
    First we prove an intermediate result. We claim that if $y_1=y_s$ for some $s\neq 1$, then no permutation-compatible graph function $Q$ over $\G_{n,d}$ can exist such that $Q(W,X_0) = F(W)$ for all $G=([n],W,X_0)\in\G_{n,d}^{0}$. In particular, this will lead to the proof of part (i) and leads to part (ii) as we will discuss below. To show the claim, we use proof by contradiction. Suppose such a function $Q$ exists. Let $Q=(q_1,\ldots,q_n)$ and note that $Q$ satisfies \eqref{a23}. Letting $\pi=\pi_{1,s}$ and $X=X_0$ in \eqref{a23}, we have $q_{s}(W,X_0) = q_1(\sigma_{\pi_{1,s}}(W),\lambda_{\pi_{1,s}}(X_0))$ for all $G=([n],W,X_0)\in\G_{n,d}$. Since $y_1=y_s$, we have $\lambda_{\pi_{1,s}}(X_0)=X_0$, which implies the following for all $G=([n],W,X_0)\in\G_{n,d}^{0}$:
		\begin{align}\label{ju}
			q_{s}(W,X_0) = q_1(\sigma_{\pi_{1,s}}(W),\lambda_{\pi_{1,s}}(X_0)) = q_1(\sigma_{\pi_{1,s}}(W),X_0).
		\end{align}
		 Note that $q_1(W,X_0)=f_1(W)=0$ and $q_s(W,X_0)=f_s(W)$ is the distance between node $s$ and node $1$. Since $q_1(W,X_0)=f_1(W)=0$ holds for all valid weight matrices $W$, we also have $q_1(\sigma_{\pi_{1,s}}(W),X_0)=0$. This together with \eqref{ju} implies that $q_{s}(W,X_0)=0$ and consequently $f_s(W)=0$ for all valid weight matrices $W$. This means that the minimum distance between node $s\neq 1$ and node $1$ is zero for all weight matrices $W$ which is obviously a contradiction. Due to this contradiction, the claim is proven. Based on this argument, we prove parts (i) and (ii) as follows.
	\begin{enumerate}[(i)]
		\item  If $F \in \mathcal{F}_{n,d}$, then $F(W,X)$ must satisfy \eqref{a23} for all valid feature matrix $X$. In particular, consider $X=X_0$, where $y_1=y_s$ for some $s\neq 1$. Based on the argument above, $F(W,X_0)$ cannot satisfy \eqref{a23}. Hence, $F \notin \mathcal{F}_{n,d}$.
		
		\item {\bf Necessity}. Suppose there exists $s \in [n]_{-1}$ such that $y_1=y_{s}$. We use proof by contradiction. Suppose there exists a GNN $H^{(k)}=(h_1^{(k)},\ldots,h_n^{(k)})$ with some finite $k\geq 0$ such that $H^{(k)}(W,X_0)=F(W)$ for all $G=([n],W,X_0)\in\G_{n,d}^{0}$. Due to \cref{theo: necessity}, $H^{(k)}$ is permutation compatible. This contradicts the claim shown in the beginning of the proof. Hence, such a GNN does not exists.
		
		{\bf Sufficiency}. Suppose $y_1\neq y_s$ for all $s\neq 1$. Note that $X_0=(y_1^{\top},\ldots,y_s^{\top})$ is a universal feature that we use for all graphs. This means we can determine if $i=1$ given $y_i$. We know that the Bellman-Ford dynamic program stated below computes the distance to node $1$ given the identification of node $1$. We first formally state the Bellman-Ford algorithm over a fully connected weighted graph and then show how it can be represented by a GNN of the form expressed in \cref{def: gnn}. To this end, let $M=\max_{i,j\in [n]}{|w_{i,j}|}$ and define the output of the algorithm for node $i$ in iteration $k$ as $l_i^{(k)}$. Set $l_i^{(0)} = 2M$ for all $i\neq 1$ and $l_1^{(0)} = 0$. Perform the following procedure for $k\geq 1$: $l_1^{(k)}=0$ and 
		\begin{align}
		    \label{BF}\text{if } i \neq 1: \quad &l_i^{(k)} = \min\left(l_i^{(k-1)},\,\,\min_{j\in [n]_{-i}} l_j^{(k-1)}+w_{i,j}\right).
		\end{align}
		It is straightforward to see that \eqref{BF} eventually assigns to each node its distance to node $1$ for sufficiently large $k$. Next, we show that there exists a GNN of the form expressed in \cref{def: gnn} that generate $l_i^{(k)}$ using $X_0=(y_1,\ldots,y_s)$ as its initialisation. Due to \cref{prop: extended gnn}, it suffices to construct an Extended-GNN $E^{(k)}=(e_1^{(k)},\ldots,e_n^{(k)})$, defined in \cref{def: gnn2} that generates the shortest path based on \cref{BF}. Starting with $e_i^{(0)}=y_i$, we define
		\begin{align}
		    \nonumber e_i^{(1)} &=\Phi_1\left(e_i^{(0)},\#\left\{(e_j^{(0)},w_{i,j})\mid j\in[n]_{-i}\right\}\right) = \left\{\begin{array}{ll}
		         1,&\text{if }  e_i^{(0)}= y_1,\\
		         0,&\text{otherwise},
		    \end{array}\right.\\
		    \nonumber e_i^{(2)} &= \Phi_2\left(e_i^{(1)},\#\left\{(e_j^{(1)},w_{i,j})\mid j\in[n]_{-i}\right\}\right) = \left(e_i^{(1)},\max_{j\in[n]_{-i}} |w_{i,j}|\right),\\
		     \nonumber e_i^{(3)} &= \Phi_3\left(e_i^{(2)},\#\left\{(e_j^{(2)},w_{i,j})\mid j\in[n]_{-i}\right\}\right)= \left(\left[e_i^{(2)}\right]_1,2\max_{j\in[n]_{-i}} \left[e_j^{(2)}\right]_2\right),\\
		     \nonumber e_i^{(4)} &= \Phi_3\left(e_i^{(3)},\#\left\{(e_j^{(3)},w_{i,j})\mid j\in[n]_{-i}\right\}\right)= \left(\left[e_i^{(3)}\right]_1, \left[e_i^{(3)}\right]_2\left(1-\left[e_i^{(3)}\right]_1\right)\right).
		\end{align}
		It is straightforward to see that $e_1^{(4)} = (1,0)$ and $e_i^{(4)} = (0,2M)$ for $i\neq 1$. For the iterations $k\geq 5$, we set the first coordinate of $e_i^{(k)}$ to remain as the indicator of node $1$ and the second coordinate to compute the step \eqref{BF}. To this end, we set
		\begin{align}
		    e_i^{(k)} &=\Phi_k\left(e_i^{(k-1)},\#\left\{(e_j^{(k-1)},w_{i,j})\mid j\in[n]_{-i}\right\}\right) = \left(\left[e_i^{(k-1)}\right]_1,\,\Lambda_k\right),
		\end{align}
		where the function $\Lambda_k$ implements the update \eqref{BF} as follows:
		\begin{align}
		   \label{y5}\Lambda_k = \left\{\begin{array}{ll}
		         0,&\,\,\text{if }  \left[e_i^{(k-1)}\right]_1= 1,\\
		         \min\left(\left[e_i^{(k-1)}\right]_2,\,\,\min_{j\in [n]_{-i}} \left[e_j^{(k-1)}\right]_2+w_{i,j}\right),&\,\,\text{otherwise}.
		    \end{array}\right.
		\end{align}
		It is straightforward to see that $\left[e_i^{(k+4)}\right]_2=l_i^{(k)}$ for all $k\geq 0$ and all $i\in [n]$. Note that there exists $K$ such that $l_i^{(K)}$ is equal to the distance between node $i$ and node $1$, i.e., $l_i^{(K)}=f_i$. For $k\leq K+4$, we continue as \eqref{y5} and for $k=K+5$, we set 
		\begin{align}
		    \nonumber e_i^{(K+5)} =\Phi_{K+5}\left(e_i^{(K+4)},\#\left\{(e_j^{(K+4)},w_{i,j})\mid j\in[n]_{-i}\right\}\right) = \left[e_i^{(K+4)}\right]_2 = l_i^{(K)} = f_i.
		\end{align}
		Hence, $E^{(K+5)}(W,X_0)=F(W)$ for all $G=([n],W,X_0)\in\G_{n,d}^{0}$. Note that $E^{(k)}$ is an Extended-GNN. Therefore, \cref{prop: extended gnn} implies that there exists a GNN ${H}^{(L)}$ for some finite $L$ such that ${H}^{(L)}(W,X)=E^{(K+5)}(W,X)$ for all $G=([n],W,X)\in\G_{n,d}$. As a result, ${H}^{(L)}(W,X_0)=F(W)$ for all $G=([n],W,X_0)\in\G_{n,d}^{0}$. In fact, the proof of \cref{prop: extended gnn} shows that ${H}^{(L)}$ exists with $L=2K+10$.
\end{enumerate}
\end{proof}

\end{document}